\newcommand{\eq}[1]{\begin{align*}#1\end{align*}}
\newcommand{\elist}[1]{\begin{enumerate}#1\end{enumerate}}
\newtheorem{definition}{Definition}
\newtheorem{corollary}{Corollary}
\newtheorem{proposition}{Proposition}
\newtheorem{lemma}{Lemma}
\newcounter{assump}
\newtheorem{assumption}[assump]{Assumption}
\DeclareMathOperator*{\argmax}{arg\,max}
\DeclareMathOperator*{\argmin}{arg\,min}
\DeclareMathOperator*{\unif}{Unif}
\DeclareMathOperator*{\poly}{poly}
\DeclareMathOperator*{\ber}{Ber}
\newcommand{\beq}{\begin{equation}}
\newcommand{\eeq}{\end{equation}}
\newcommand{\beqs}{\begin{equation*}}
\newcommand{\eeqs}{\end{equation*}}
\renewcommand{\AA}{\mathcal{A}}
\newcommand{\B}{\mathbf{B}}
\newcommand{\BB}{\mathcal{B}}
\newcommand{\SSS}{\mathcal{S}}
\newcommand{\R}{\mathbb{R}}
\newcommand{\RR}{\mathcal{R}}
\newcommand{\VV}{\mathcal{V}}
\newcommand{\EE}{\mathcal{E}}
\newcommand{\FF}{\mathcal{F}}
\newcommand{\UU}{\mathcal{U}}
\newcommand{\GG}{\mathcal{G}}
\newcommand{\MM}{\mathcal{M}}
\newcommand{\N}{\mathbb{N}}
\newcommand{\regret}{\text{Regret}}
\newcommand{\E}{\mathbb{E}}
\newcommand{\Tau}{\mathcal{T}}
\newcommand{\ZZ}{\mathcal{Z}}
\newcommand{\1}{\mathbf{1}}
\newcommand{\<}{\langle}
\renewcommand{\>}{\rangle}
\newcommand{\abs}[1]{\ensuremath{| #1 |}}
\newcommand{\ceil}[1]{\lceil #1 \rceil}
\newcommand{\ms}{\textsf{ECE}}
\newcommand{\msgap}{\textsf{ECE-Gap}}
\newcommand{\WW}{\mathcal{W}}
\newcommand{\T}{\mathsf{T}}
\renewcommand{\Pr}{\mathbb{P}}
\renewcommand{\T}{\mathcal{T}}
\newcommand{\tempdelete}[1]{}
\begin{document}

%
\runningtitle{Online Model Selection for Reinforcement Learning}

%
\runningauthor{Lee, Pacchiano, Muthukumar, Kong, Brunskill}

\twocolumn[

\aistatstitle{Online Model Selection for Reinforcement Learning with Function Approximation}

\aistatsauthor{Jonathan N. Lee\And Aldo Pacchiano\And Vidya Muthukumar\And Weihao Kong\And Emma Brunskill}

\aistatsaddress{Stanford University\And  UC Berkeley\And Simons Institute\And University of Washington\And Stanford University}

]

\begin{abstract}

Deep reinforcement learning has achieved impressive successes yet often requires a very large amount of interaction data. This result is perhaps unsurprising, as using complicated function approximation often requires more data to fit, and early theoretical results on linear Markov decision processes provide regret bounds that scale with the dimension of the linear approximation. Ideally, we would like to automatically identify the minimal dimension of the approximation that is sufficient to encode an optimal policy. Towards this end, we consider the problem of model selection in RL with function approximation, given a set of candidate RL algorithms with known regret guarantees. The learner's goal is to adapt to the complexity of the optimal algorithm without knowing it \textit{a priori}. We present a meta-algorithm  that successively rejects increasingly complex models using a simple statistical test. Given at least one candidate that satisfies realizability, we prove the meta-algorithm adapts to the optimal complexity with $\widetilde{O}(L^{5/6} T^{2/3})$ regret compared to the optimal candidate's $\widetilde{O}(\sqrt T)$ regret, where $T$ is the number of episodes and $L$ is the number of algorithms. The dimension and horizon dependencies remain optimal with respect to the best candidate, and our meta-algorithmic approach is flexible to incorporate multiple candidate algorithms and models. Finally, we show that the meta-algorithm automatically admits significantly improved instance-dependent regret bounds that depend on the gaps between the maximal values attainable by the candidates.
\end{abstract}

\section{INTRODUCTION}

Deep reinforcement learning has achieved impressive successes, yet often requires a very large amount of interaction data. This result is perhaps unsurprising, as more complicated function approximations often require  more data to fit. Recent work on theoretical reinforcement learning for some structured function approximation settings has shown regret bounds that scale with a parameter characterizing the complexity of a particular function class. For example, for a type of function approximation by a $d$-dimensional linear model in Markov decision processes (MDPs), prior work has provided bounds that scale as $O(d^{3/2})$ regret~\citep{jin2019provably}, which have been  improved to $O(d)$ even given small inherent Bellman error~\citep{zanette2020learning}.
When the dynamics can be expressed using a matrix, $O(d^{3/2})$ regret bounds have also been provided~\citep{yang2019reinforcement}. 
The choice of dimension $d$ is important: on one hand, if $d$ is under-specified, such regret bounds typically either fail to hold or incur linear regret. 
On the other hand, if $d$ is over-specified, the above regret bounds are unnecessarily large.
Thus, a natural goal is to use the most compact representation suitable to encode the optimal policy for a domain (which we denote as $d_*$).
This optimal representation is typically unknown a priori. 

In this paper we frame this as a model selection question among a set of algorithms with model classes, parameterized by dimensions $\{d \geq 1\}$, that are nested in their regret bound guarantees. 
We assume that at least one class can realize the true underlying domain.
We ask if is there an algorithm that can achieve regret bounds that scale with the minimal realizable model class, given by $d_*$.
Doing so seems subtle: provably efficient reinforcement learning algorithms typically rely heavily on strategic exploration, and using the wrong model class during learning may alias states, resulting in performance that appears strong under the current (incorrect) model class but is actually suboptimal.
Conversely, forced exploration under more complex classes mitigates this problem, but could introduce regret that scales with the more complex model class dependence, even when a simpler model suffices.

Most prior work on model selection for online decision making has focused on contextual bandit settings.
Here, minimax-optimal guarantees were recently shown under eigenvalue assumptions on the features by leveraging the special structure of the stochastic linear contextual bandit setting~\citep{foster2019model,chatterji2020osom}. These results also assume the \textit{knowledge} of a good exploration policy, but such knowledge cannot be relied on in the reinforcement learning setting, where some ``high-reward" states may only be observed under specific, initially unknown sequences of actions.
Slightly weaker model selection guarantees can also be obtained under far more general assumptions by using a \emph{corralling framework} that assumes access to a set of base algorithms, and provides a meta-algorithm that aims to realize the best regret of the (unknown) best algorithm~\citep{agarwal2016corralling,pacchiano2020model,arora2020corralling}.

\paragraph{Our contributions}
We tackle the challenge of model selection in RL under minimal assumptions.
Our main insight is to leverage the knowledge of expected regret that is achievable under a particular model \textit{when it realizes the data}.
Thus, we propose an algorithm in Section~\ref{sec::alg} that maintains a candidate set of model classes at every round, and statistically tests whether each of them is well-specified, or not, by comparing the observed returns under that model class to the regret we should expect from a well-specified model.
Model classes detected as misspecified at any round are permanently eliminated there-after in a manner reminiscent of \textit{active-arm elimination} in the multi-armed bandit problem~\citep{even2006action}; this is a significant simplification over previous meta-algorithms for model selection that were based on adversarial bandit algorithms.
Our choice of action at every round carefully interleaves executing the candidate model class of minimal complexity with executing algorithms using higher-order models.
This procedure is shown to automatically satisfy the needed exploration-exploitation trade-off for model selection.
In Section~\ref{sec::result}, we show the regret bounds exactly match the model complexity of the unknown best model in $d_*$ (and the finite episode length $H$ in RL), and achieve a $T^{2/3}$ rate when the underlying algorithms have a $T^{1/2}$ rate under minimal assumptions about the underlying dynamics process. This is similar to recent model selection algorithms under general assumptions \citep{pacchiano2020model} which sacrifice either a tight dependence on $T$ or $d_*$. We also demonstrate how our approach is compatible with multiple recently introduced RL results, and provide specific bounds for model selection in such settings. In addition to our algorithm being simpler than a recent model-selection approach~\citep{pacchiano2020model}, we provide new, significantly improved bounds for instances in which there is a constant gap in performance between model classes in Section~\ref{sec::id}. These guarantees are in part \textit{instance-dependent}, as they scale inversely with this performance gap.
From a practical perspective, our wrapper algorithm can be used given any input algorithms with regret guarantees that are nested, which will allow it to directly inherit future advances in provably efficient reinforcement learning.
Finally, the computational complexity of our meta-algorithm only adds an extra factor on the order of the total number of model classes over and above the computational complexity of a single base algorithm.

\section{RELATED WORK}

The problem of model selection in online decision-making environments with limited-information feedback (which includes both bandits and reinforcement learning), has been an active area of recent research~\citep{agarwal2016corralling,foster2019model,chatterji2020osom,pacchiano2020model} and poses challenges that are both statistical and algorithmic.

\paragraph{Nearly Optimal Online Model Selection}
The best available guarantees for online model selection have been obtained for the linear contextual bandits setting~\citep{chu2011contextual,abbasi2011improved}.
Here, the best worst-case bound when the optimal model class is given is of the form $\mathcal{O}(\sqrt{d_*T})$, where $d_*$ is the dimension of the minimal feature space that realizes the data and $T$ is the total number of rounds: in model selection, several models with different $d$ are provided and the minimal $d_*$ is unknown.
When the contextual information is stochastic,~\cite{foster2019model} obtain model selection guarantees of the form $\mathcal{O}(d_*^{1/3} T^{2/3})$ under an action-averaged eigenvalue condition, and~\cite{chatterji2020osom} match the optimal guarantee when choosing between multi-armed bandits and contextual bandits under a stronger universal eigenvalue condition that ensures that contexts corresponding to all arms are sufficiently diverse.
The results of~\cite{foster2019model} leverage the fact that it is possible to estimate the optimal value under the optimal model (what we will denote as $V^*$ in this paper) at a faster rate of $\sqrt{d}/n$ as compared to finding the optimal policy under the complex model (which has estimation error rate $d/n$).
Both critically leverage both stochasticity of contextual information and linearity of the model.
These bandit approaches also rely on \textit{a priori} access to a policy that explores the environment and allows for off-policy estimation. %
However, reward-free exploration in RL~\citep{jin2020reward,wang2020reward,zanette2020learning} can sometimes be as or more complex than estimating the optimal policy.

Though there has been some work on offline feature selection and model selection for RL given a batch of data (see e.g.~\cite{parr2008analysis,jiang2015abstraction,hallak2013model,farahmand2011model}), there has been very little work specifically on online model selection in reinforcement learning. Prior work provided PAC results for online feature selection for factored tabular MDPs~\citep{guo2018Sample}. More recent work provides regret bounds~\citep{abbasi2020regret} and PAC bounds~\citep{modi2020sample} for model selection in online RL when the optimal value $V^*$ is given: however, unlike in contextual bandits~\citep{foster2019model,kong2020sublinear}, there are no known algorithms for estimating $V^*$ faster than identifying the optimal policy in RL settings.  

\paragraph{Corralling Methods}

Other researchers have provided general-purpose meta-algorithms designed for model selection for bandit settings that yield weaker, but still non-trivial and interesting statistical guarantees of the form $\mathcal{O}(\RR_*^{\alpha} T^{\beta})$ for arbitrary $\alpha \geq 1, \beta < 1$, where $\RR_*$ depends generally on the complexity of the best model class or algorithm and other problem parameters.
The early corralling algorithms for stochastic and adversarial bandits~\citep{agarwal2016corralling},
have recently been simplified and improved 
 under a mild stochastic assumption on the data~\citep{pacchiano2020model}, using a novel smoothing technique broadly applicable to base algorithms with a regret guarantee.
This \textit{stochastic corralling} approach obtains model selection rates with $\alpha = 2, \beta = 1/2$ or $\alpha = 1, \beta = 2/3$ under very general assumptions including the RL setting; however, for technical reasons it still requires a complex two step smoothing procedure to modify the base algorithms to satisfy its regret guarantees.
Our approach recovers rates of the form $\alpha = 1, \beta = 2/3$ (provided in Section~\ref{sec::result}) without sacrificing generality and with a significantly simplified and interpretable algorithm design. 
This simplicity largely arises from using a stochastic master rather than an adversarial master.
As a consequence, our same algorithm can be analyzed to provide significantly stronger model selection guarantees for instances that have a constant gap in performance between model classes; these guarantees are provided in Section~\ref{sec::id}.
Moreover, side information or faster estimators of the optimal value $V^*$, if available, can be naturally incorporated into our design to provide near-optimal rates; see Appendix~\ref{sec::v-star} for precise statements of these guarantees.

\section{SETTING}\label{sec::setting}

We consider the setting of an episodic Markov decision process (MDP) $\MM = (\SSS, \UU, H, r, P, \rho)$, where $\SSS$ and $\UU$ are state and action spaces, $H \in \N$  is the length of an episode, $r = \{ r_h(s_h,u_h) \}$ is the reward function for step $h$ with $r_h(s_h, u_h) \in [0, 1]$, $P = \{ P_h(s_{h+1}|s_h,u_h)\}$ is the transition dynamics for step $h$, and $\rho(s)$ is a fixed initial state distribution. 
A policy maps times and states to actions, $\pi : [H] \times \SSS \to \UU$. 

For a given $h \in [H]$ and $s \in \SSS$, the value function is the expected cumulative reward following policy $\pi$:
\eq{
V^{\pi}_h(s)  := \E_{\pi} \left[  \sum_{h' = h}^H r_{h'}(s_{h'}, u_{h'})  | s_{h} = s\right]
}
and similarly the action-value function is defined as the expected return from first taking action $u$ and then following policy $\pi$:
$
Q^\pi_h(s, u) = r_h(s, u) + \E_{s' \sim P_h(\cdot | s, u)} V_{h + 1}^\pi(s')
$.
The optimal value function is denoted $V^*_h(s) = \sup_{\pi} V^\pi_h(s)$. We write $V^\pi := \E_{s \sim \rho} V^\pi_1(s)$ and denote the optimal value under $\rho$ as $V^* = \sup_{\pi} V^\pi$. 
In this work we primarily evaluate the quality of an algorithm $\AA$ in an MDP $\MM$ by its regret\footnote{Note that regret is here defined with respect to the optimal value. We will also consider algorithms satisfying ``best-in-class" regret guarantees in Section~\ref{sec::id}.} with respect to the (unknown) optimal policy value $V^*$ over $T$ episodes:
\begin{align}\label{eq::regret-v-star}
\regret_T(\AA; \MM) := \sum_{t = 1}^T V^* - V^{\pi_t}.
\end{align}

We are interested in settings where the size of the state space $\mathcal{S}$ and/or action space $\mathcal{U}$ could be very large. 
Hence, we focus on function approximation methods for minimizing regret. A function approximation algorithm takes as input a model class $\FF$ to generalize across states and actions~\citep{agarwal2019reinforcement}. Several natural examples include value-based classes where $\FF: \SSS \times \UU \to \R$ is used to predict action-value functions $Q^\pi$ and model-based classes where $\FF: \SSS \times \UU \times \SSS \to \R$ is used to predict the transition dynamics $P$ and reward $r$.
Concretely, linear MDPs \citep{jin2019provably,yang2019reinforcement} model the transition dynamics as $\<\phi(s, a), \mu(s') \>$, where $\phi \in \R^d$ and $\mu_h$ is a $d$-dimensional vector of measures.

We let $(\AA, \FF)$ denote the pair of algorithm $\AA$ equipped with model class $\FF$. Recent high probability regret (upper) bounds in this setting are sublinear in $T$ and typically  depend polynomially on $d_\FF$, $H$,  and $\log(T/\delta)$, where $d_\FF$ is a measure of statistical complexity of $\FF$ and $\delta \in (0, 1)$ is a failure probability. For example, if $\FF$ is finite, we often have $d_\FF = \log |\FF|$ and if $\FF$ is a class of linear functions of dimension $d$, we have $d_\FF = d$. 
However, provably sublinear regret bounds in $T$ are generally only known for algorithms under problem-specific assumptions for $\FF$---for example, there exists $f^* \in \FF$ such that the function approximation error is $0$. If this condition holds, we say that $\FF$ \textit{realizes} the MDP $\MM$. Conversely, if $\FF$ does not realize $\MM$, then it is \textit{misspecified}. Since we consider settings where $\FF$ may or may not realize $\MM$ and realizability is almost universally assumed among modern RL algorithms with function approximation, we define a general notion of the regret of $\AA$ using $\FF$ under realizability, following \cite{pacchiano2020model}.

\begin{definition}\label{def::realize}
For an MDP $\MM$, let algorithm $\AA$ be equipped with a model class $\FF$. Let $\RR$ be a known function that is $\poly(d_{\FF}, H, \log(T/\delta))$.
The pair $(\AA, \FF)$ is said to be $\RR$-compatible if $\FF$ realizes $\MM$ and we have 
\eq{
\regret_t(\AA; \MM) & \leq \RR(d_{\FF}, H, \log(T/\delta)) \cdot \sqrt{t}.
}
for all $t$ with probability at least $1 - \delta$.
$\RR$ is called a nominal regret coefficient\footnote{It is not necessary that $\RR$ depend only on these arguments; but these arguments are typically of interest in RL regret bounds.} for $(\AA, \FF)$.
\end{definition}

The rationale behind $\RR$-compatible algorithms is the following. For any $(\AA, \FF)$, we may have a regret coefficient $\RR$ in mind (from a provable guarantee) that holds if $\FF$ realizes $\MM$.
The regret $\RR \cdot \sqrt t$ reflects what we hope to achieve if $\FF$ does actually realize $\MM$, and $(\AA, \FF)$ is only defined to be compatible if this happens. 
We remark that realizability is not necessary for a sublinear regret guarantee to hold, but most RL algorithms using function approximation assume it holds, so it is convenient to view both conditions together.

Note that Definition~\ref{def::realize} requires that $\AA$ is anytime, meaning the bound holds at any arbitrary round index $t \in [T]$ even though only the maximal number of rounds, $T$, may be specified. For algorithms without automatic anytime guarantees, this can be remedied up to constant factors via the doubling trick~\citep{cesa2006prediction}. 
We will later give examples of how our model selection algorithm can be used with some recent single task RL algorithms with formal bounds in the function approximation setting.

 \paragraph{Problem Statement}  Here, our goal in model selection is to obtain a regret guarantee that adapts on-the-fly to the model class of minimal complexity that remains competitive with the optimal value. 
 That is, we wish to find the combination of algorithm $\AA$ and model class $\FF$, that is compatible in the sense of Definition~\ref{def::realize}, with the smallest possible leading coefficient $\RR(d_{\FF}, \cdot, \cdot)$. We consider a setting where we are choosing among a set of candidate algorithms $\AA_1, \AA_2, \ldots \AA_L$ with model classes $\{ \FF_{i}\}_{i \in [L]}$, \emph{known} nominal regret coefficients $\{ \RR_i\}_{i \in [L]}$, and complexities $\{ d_i \}_{i \in [L]}$ where $d_i := d_{\FF_i}$ and $\FF_i$ is the model class of $\AA_i$. 
Without loss of generality, we assume the algorithm-model class pairs can be ordered by their regret such that we have
\begin{align}
\label{eq::ordering}
\RR_{i}(d_i, H, \log(T/\delta))  \leq \RR_{i + 1}(d_{i + 1}, H, \log(T/\delta))
\end{align}
for all $i \in [L - 1]$, $T, H \in \N$, and $\delta \in (0, 1)$. For example, if $\{\AA_i\}$ are all instances of the same algorithm that use as input nested model classes $\{ \FF_i\}$, then (\ref{eq::ordering}) is satisfied by ordering $d_1  \leq\ldots \leq  d_L$. 
This naturally captures, among other cases, linear models with nested features \citep{foster2019model}. 
We also assume\footnote{Note that for all other misspecified algorithms, their nominal regret bounds will, in general, not hold. As regret is being measured with respect to $V^*$, it will include the misspecification error terms.} that at least one algorithm is $\RR_i$-compatible for its respective regret coefficient $\RR_i$.  Define $i_* = \min \{ i \in [L] \ : \ (\AA_i, \FF_i) \text{ is }\RR_{i}\text{-compatible}  \}$.

We aim to design a meta-algorithm $\AA$ that selects among $\{\AA_i\}_{i=1}^L$ without knowing $i_*$ \textit{a priori} and, for some $\alpha \geq 0$ and $\beta \in [1/2, 1)$, achieves a guarantee of 
\eq{
\regret_T(\AA) & = O \left( \RR_{i_*}(d_{i_*}, H, \log(T/\delta)) \cdot L^\alpha T^{\beta}  \right).
}

\section{MODEL SELECTION APPROACH}\label{sec::alg}

In this section, we present our model selection meta-algorithm, Explore-Commit-Eliminate (\ms{})
and detail the simple statistical test underlying our approach.

\subsection{Algorithm}
Our meta-algorithm for model selection is described in Algorithm~\ref{alg::ms}.
At a high level, the algorithm proceeds in the following way. 
It takes as input the base algorithms and model classes, their nominal regret coefficients, and their model complexities; mathematically, the input is given by $\{ \AA_i, \FF_i, \RR_i, d_i \}_{i \in [L]}$. 
The number of algorithms $L$, episodes $T \in \N$ and failure probability $\delta' \in (0, 1/e)$ are also specified. 
First, we set $\delta = \frac{\delta'}{10LT^2 \log_2 T}$. 
The meta-algorithm tracks a candidate algorithm index $\hat \imath_t$, corresponding to pair $(\AA_{\hat \imath_t}, \FF_{\hat \imath_t})$ that is believed to be $\RR_{\hat \imath_t}$-compatible at any given time --- as well as a set $B_t$ of indices of algorithms with more complex models. 
At the start of each episode, the meta-algorithm determines whether to use the algorithm $\AA_{\hat \imath_t}$ or explore using a randomly selected algorithm from the indices $B_t$, based on the outcome of a Bernoulli variable $U_t$ with success probability $1/{t^\kappa}$ where $\kappa \in (0, 1/2]$. 
This random variable $U_t$ represents an indicator that model exploration will occur.
After executing the policy from the chosen algorithm, the data is fed back to the algorithm to update, and a test is run to determine whether the algorithm should reject $\AA_{\hat \imath_t}$. The test checks the following condition for each $j \in B_t$:
\eq{
\GG_t(\hat \imath_t, j) > \WW(  | \Tau_{t}^{\hat\imath_t}|, \RR_{\hat \imath_t}, d_{\hat \imath_t}, \delta )
}
where for all $i < j \in [L]$, $t \in [T]$, $\Tau^i_t$ is the set of times when $\AA_i$ is chosen up to $t$, and $\GG$ is a scaled estimate of the excess gap between models $i$ and $j$, given by 
\eq{
\GG_t(i, j)  :=  \frac{| \Tau_{t}^{i}|}{|\Tau_{t}^j|} \sum_{t' \in \Tau_{t}^j}  g_{t'} - \sum_{t' \in \Tau_{t}^{i}} g_{t'} 
}
and $\WW$ is defined as
\eq{
\WW(t, \RR, d, \delta)  & := C_{\WW} \cdot   \RR(d, H, \log(T/\delta)) \cdot \sqrt{t}  \\
& \quad +  C_\WW \cdot  H \sqrt{ L t^{1 + \kappa} \cdot \log (1/\delta)}  \\
& \quad  +  C_\WW \cdot H \sqrt{  t \cdot \log (1/\delta)   } 
}
for a sufficiently large constant $C_{\WW} > 0$. The test is only valid after a minimal ``burn-in" period, $t \geq \tau_{\min}(\delta) = C_{\min} \cdot L^{\frac{2}{1 - \kappa}}\log^{\frac{1}{1 - \kappa}}(1/\delta)$ for a sufficiently large $C_{\min} > 0$, so this condition is also checked. If these conditions are true for some $j \in B_t$, meaning that the test fails, then \ms{} rejects $\AA_{\hat \imath_t}$ and switches to $\AA_{\hat \imath_t + 1}$. This process repeats until episode $T$. 

Note that although the algorithm uniformly explores among the algorithms in $B_t$, it does not require any explicit uniform or directed exploration within episodes that may be a tougher problem in RL settings than regret-minimization---one can simply run the algorithms as they were prescribed. 
In fact, we can interpret our meta-algorithm as automatically leveraging the exploration already in-built in the regret-minimizing base algorithms.

\begin{figure}
	\begin{algorithm}[H]
		\caption{ Explore-Commit-Eliminate (\ms{}) } \label{alg::ms}
		\begin{algorithmic}[1]
			\STATE \textbf{Input}: $\{ \AA_i, \FF_i,\RR_i, d_i \}_{i \in [L]}, L, T, \delta',\tau_{\min}(\cdot)$
			\STATE $\delta \leftarrow \frac{\delta'}{10LT^2 \log_2 T}$,  $\hat \imath_t \leftarrow 1$,  $\Tau^i_1 = \emptyset$ for all $i \in [L]$, $B_1 = [2, L]$.
			\STATE $U_t = \begin{cases}
										0 & \text{w.p. } 1 - \frac{1}{t^\kappa} \\
					1 & \text{w.p. } \frac{1}{t^\kappa} 
			\end{cases}$ for all $t \in [T]$.
			\FOR{ $t = 1, \ldots, T$ }
		
            \STATE Set $j =\begin{cases}
                \hat \imath_t & U_t = 0 \\
                J_t \sim \unif\{ B_t\}  & U_t = 1
            \end{cases}$
				\STATE $\Tau^{j}_t \leftarrow  \Tau^{j}_t \cup \{ t\}$ and $\Tau^{k}_t \leftarrow \Tau^{k}_t$ for all $k \neq j$.		
				\STATE Rollout policy $\pi_t$ from $\AA_{j}$
				\STATE Observe $z_t := (s_{t, 1}, u_{t, 1}, \ldots, u_{t, H}, s_{t, H + 1})$ and $g_t := \sum_{h \in [H]} r_{t, h}$
				\STATE Update $\AA_{j}$ with $t, z_t, g_t$
				\IF{ $t \geq \tau_{\min}(\delta)$ and there exists $j \in B_t$ such that $\GG_t(\hat \imath_t, j) > \WW(  | \Tau_{t}^{\hat\imath_t}|, \RR_{\hat \imath_t}, d_{\hat \imath_t}, \delta )$}
					\STATE $\hat \imath_{t + 1} \leftarrow \hat \imath_t + 1$
					\STATE $B_{t + 1} \leftarrow B_{t} \setminus \{ \hat \imath_{t+1} \}$
					\STATE If $\hat \imath_{t + 1} = L$, break and run $\AA_L$ to end of time
				\ELSE 
					\STATE $B_{t + 1} \leftarrow  B_t$
				\ENDIF
\ENDFOR
		\end{algorithmic}
	\end{algorithm}
\vspace{-.5cm}
\end{figure}

\subsection{Statistical test on excess gap}\label{sec::test}

The ability of \ms{} to judiciously accept or reject base algorithms lies in the simple statistical test at the end of each episode. The test can be viewed as a comparison between the scaled expected return obtained by a ``higher-order" algorithm, $\AA_j$, corresponding to index $j \in B_t$  during exploration rounds; and that of the active candidate algorithm $\AA_{\hat \imath_t}$ during all rounds of its usage.
If we find that the return of $\AA_j$ is significantly higher than that of $\AA_{\hat \imath_t}$, it suggests that switching to the more complex algorithm $\AA_j$ would yield significantly higher return, despite the fact that $\AA_{j}$ has a larger nominal regret bound and might have received much less data than $\AA_{\hat \imath_t}$ (as it is also competing for data with the other algorithms in $B_t$). 
The requirement that $t \geq \tau_{\min}(\delta)$ and our special choice of exploration schedule ensures that the algorithms in $B_t$ will have sufficient data to be useful in the test with high probability, while still exploiting the candidate model $\AA_{\hat \imath_t}$ whenever possible.

While we want \ms{} to reject lower-order models when they perform poorly, the test cannot be too sensitive. 
Otherwise, it could reject the optimal $i_*$ and choose some unnecessarily large $j> i_*$, leading to highly suboptimal model complexity dependence in the regret bound.
Our statistical test is designed to avoid this situation, as we prove in Section~\ref{sec::result}.

To give some additional intuition behind the test, it is useful to view the expected returns $\frac{1}{|\Tau_{t}^j|} \sum_{s \in \Tau_{t}^j}  g_s$ as a noisy \emph{lower bound} of the optimal value $V^*$; meanwhile the expected returns of $\frac{1}{|\Tau_{t}^{i_*}|} \sum_{s \in \Tau_{t}^{i_*}}  g_s$ plus the regret incurred, $\regret(\AA_{i_*})$, should be an \emph{upper bound} of the optimal value $V^*$ up to some noise as well, if $(\AA_{i_*}, \FF_{i_*})$ is $\RR_{i_*}$-compatible. Thus, as long as these intervals intersect, the test should succeed and $i_*$ continues to be accepted. If the intervals separate, the current candidate is rejected. 
This intuition is reflected in the three terms comprising the definition of $\WW$. The first is the nominal regret one expects to see from $\AA_{\hat \imath_t}$ if it is compatible. The last two follow from concentration of the averaging over returns of  the algorithms.

\section{MAIN RESULT}\label{sec::result}

Our main result shows that the meta-algorithm automatically adapts to the regret of the optimal pair $(\AA_{i_*}, \FF_{i_*})$ that is $\RR_{i_*}$-compatible. One of the main mechanisms behind this result is ensuring the validity of the test. The following lemma shows that \ms{} will never reject $(\AA_{i_*}, \FF_{i_*})$ with high probability. 

\begin{restatable}{lemma}{testLemma}
\label{thm::test}
We have $\GG_{t} (i_*, j ) \leq \WW(  | \Tau_{t}^{i_*}|, \RR_{i_*}, d_{i_*}, \delta )$ with probability at least $1 - \delta'$ for all $j \in [i_* + 1, L]$ and $t \geq \tau_{\min}(\delta'/ 10LT^2 \log_2 T )$.
\end{restatable}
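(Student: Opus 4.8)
The plan is to bound $\GG_t(i_*,j)$ directly by separating each observed return $g_{t'}$ into its conditional mean and a martingale increment. Since $g_{t'}$ is the realized return of the policy $\pi_{t'}$ rolled out at episode $t'$, we have $\E[g_{t'}\mid \text{history before }t'] = V^{\pi_{t'}}$, so setting $\xi_{t'} := g_{t'} - V^{\pi_{t'}}$ and $n_i := |\Tau_t^i|$ gives
\begin{equation*}
\GG_t(i_*,j) = \underbrace{\Big(\tfrac{n_{i_*}}{n_j}\sum_{t'\in\Tau_t^j} V^{\pi_{t'}} - \sum_{t'\in\Tau_t^{i_*}} V^{\pi_{t'}}\Big)}_{\text{signal}} \; + \; \tfrac{n_{i_*}}{n_j}\sum_{t'\in\Tau_t^j}\xi_{t'} \; - \; \sum_{t'\in\Tau_t^{i_*}}\xi_{t'}.
\end{equation*}
For the signal term I use two facts. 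Every policy obeys $V^{\pi_{t'}}\le V^*$, so $\tfrac{1}{n_j}\sum_{t'\in\Tau_t^j}V^{\pi_{t'}}\le V^*$; and since $(\AA_{i_*},\FF_{i_*})$ is $\RR_{i_*}$-compatible, applying Definition~\ref{def::realize} to the $n_{i_*}$ episodes on which $\AA_{i_*}$ is exclusively updated yields $\sum_{t'\in\Tau_t^{i_*}}V^{\pi_{t'}}\ge n_{i_*}V^* - \RR_{i_*}\sqrt{n_{i_*}}$ with probability $1-\delta$. The two $V^*$ contributions cancel, leaving $\text{signal}\le \RR_{i_*}\sqrt{n_{i_*}}$, which is the first term of $\WW$.

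The two remaining terms are martingale sums with increments bounded by $H$ (as $g_{t'}\in[0,H]$). Azuma--Hoeffding gives $|\sum_{t'\in\Tau_t^{i_*}}\xi_{t'}|\lesssim H\sqrt{n_{i_*}\log(1/\delta)}$, matching the third term of $\WW$. The scaled term concentrates at $\tfrac{n_{i_*}}{n_j}\cdot H\sqrt{n_j\log(1/\delta)} = H\,n_{i_*}\sqrt{\log(1/\delta)/n_j}$, so to recover the second term of $\WW$ I need a lower bound on the exploration count $n_j$.

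The hard part is this lower bound on $n_j$. On the event that $i_*$ has not yet been rejected by time $t$, we have $\hat\imath_s\le i_* < j$ for every $s\le t$, so $j\in B_s=[\hat\imath_s+1,L]$ throughout and $j$ is always available for exploration. Each exploration round occurs with probability $1/s^\kappa$ and then selects $j$ with conditional probability $1/|B_s|\ge 1/L$, so the conditional mean increment of $n_j$ is at least $1/(Ls^\kappa)$, summing to $\Omega(t^{1-\kappa}/L)$. A multiplicative Chernoff/Freedman bound for this adapted Bernoulli sum yields $n_j\ge c\,t^{1-\kappa}/L$ with high probability once the mean dominates $\log(1/\delta)$, which is precisely what the burn-in $t\ge\tau_{\min}(\delta)=C_{\min}L^{2/(1-\kappa)}\log^{1/(1-\kappa)}(1/\delta)$ guarantees. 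Substituting and using $n_{i_*}\le t$ (so $t^{1-\kappa}\ge n_{i_*}^{1-\kappa}$ and hence $n_{i_*}^2/t^{1-\kappa}\le n_{i_*}^{1+\kappa}$) turns the scaled noise into $\lesssim H\sqrt{L\,n_{i_*}^{1+\kappa}\log(1/\delta)}$, the second term of $\WW$. The delicate issues here are that the selection probabilities are history-dependent through $|B_s|$ (handled by the uniform bound $|B_s|\le L$) and the apparent circularity that the bound on $n_j$ presumes $i_*$ is not yet rejected; this is resolved by defining a single good event purely through the concentration inequalities above and checking that the test cannot fire on that event.

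It remains to assemble the pieces by a union bound over the regret event for $\AA_{i_*}$, the two Azuma--Hoeffding events, and the exploration lower-bound event, taken over all $j\in[i_*+1,L]$, all $t\in[T]$, and the realized counts $n_{i_*},n_j$ (the latter handled by peeling over a dyadic grid, which is the source of the extra $\log_2 T$ factor). Choosing $C_\WW$ large enough absorbs the numerical constants, and the rescaling $\delta=\delta'/(10LT^2\log_2 T)$ makes the total failure probability at most $\delta'$.
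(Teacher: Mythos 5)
Your proposal is correct and follows essentially the same route as the paper's proof: the same decomposition of $\GG_t(i_*,j)$ into value functions plus martingale noise, the same use of $V^{\pi_{t'}}\le V^*$ together with $\RR_{i_*}$-compatibility to cancel the $V^*$ contributions (the paper's event $E_2$), Azuma--Hoeffding for the two noise sums (event $E_3$), and a Freedman-type lower bound $|\Tau_t^j|\gtrsim t^{1-\kappa}/L \gtrsim |\Tau_t^{i_*}|^{1-\kappa}/L$ on the exploration counts after the burn-in (event $E_1$), yielding exactly the three terms of $\WW$. The only cosmetic differences are bookkeeping details in the union bound (the paper unions the per-play noise sequences over all counts $K\in[T]$ and inherits the $\log_2 T$ factor from the Freedman inequality of Bartlett et al., rather than from dyadic peeling), which do not change the argument.
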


Thus, since the meta-algorithm steps through the base-algorithms incrementally, Lemma~\ref{thm::test} shows that once it reaches $(\AA_{i_*}, \FF_{i_*})$, the first $\RR_{i_*}$-compatible pair, an algorithm with a more complex model class will never be selected. Our main theorem combines this result with the fact that, if the \ms{} has not rejected a misspecified algorithm $(\AA_{j}, \FF_{j} )$ with $j < i_*$, then the suboptimality of $\AA_j$ must not be significant. 

\begin{restatable}{theorem}{mainTheorem}
\label{thm::main}
Let the model exploration parameter $\kappa~=~1/3$.
Then, the model selection algorithm $\ms$ satisfies the regret bound
\eq{
& \widetilde{O} \left( HLT^{2/3} +  \RR_{i_*}( d_{i_*} , H, \log(LT/\delta'))   \cdot {i_*^{1/3} L^{1/2}}  T^{2/3}  \right) .
}
with probability at least $1 - \delta'$, where $\widetilde O$ hides logs and terms independent of $T$ and $\RR$.
\end{restatable}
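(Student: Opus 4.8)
The plan is to bound $\regret_T(\ms) = \sum_{t=1}^T (V^* - V^{\pi_t})$ by decomposing the episodes according to (i) whether the round was an exploration round ($U_t = 1$) or an exploit round ($U_t = 0$), and (ii) which candidate $\hat\imath_t$ was active. Since $\hat\imath_t$ only ever increments by one and Lemma~\ref{thm::test} guarantees that $i_*$ is never rejected (on a $1-\delta'$ event), the active candidate takes values in a prefix $1, 2, \ldots, \hat\imath_{\mathrm{final}}$ with $\hat\imath_{\mathrm{final}} \le i_*$; thus it suffices to bound the regret contributed while each $c \in [1, i_*]$ is active. Throughout, I would first pass from realized returns to expected values: since $g_t \in [0,H]$ with $\E[g_t \mid \mathcal{F}_{t-1}] = V^{\pi_t}$, an Azuma--Hoeffding bound controls $\sum (g_t - V^{\pi_t})$ by $O(H\sqrt{t \log(1/\delta)})$ on a high-probability event, and a union bound over all $t \in [T]$, the $\le L$ candidates, and the $\le L$ comparisons in the test is absorbed by the choice $\delta = \delta'/(10 L T^2 \log_2 T)$.

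For the exploration rounds the bound is immediate: each contributes at most $V^* \le H$ to the regret, and the number of such rounds is $\sum_{t=1}^T \E[U_t] = \sum_{t=1}^T t^{-\kappa}$, which concentrates around $\tfrac{1}{1-\kappa} T^{1-\kappa} = O(T^{2/3})$ for $\kappa = 1/3$; this yields the $\widetilde O(H T^{2/3})$ portion of the first term (the factor $L$ is absorbed from the concentration terms below). For the exploit rounds with active candidate $c = i_*$, compatibility (Definition~\ref{def::realize}) directly gives $\sum_{t \in \Tau_T^{i_*}} (V^* - V^{\pi_t}) \le \RR_{i_*}\sqrt{|\Tau_T^{i_*}|} \le \RR_{i_*}\sqrt{T}$, which is dominated by the stated bound. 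The crux is the exploit regret incurred while a \emph{misspecified} candidate $c < i_*$ is active.

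For such a $c$, I would use the fact that the test never fired while $c$ was active: at the relevant time $t$, $\GG_t(c, i_*) = |\Tau_t^{c}|\,(\bar g_{i_*} - \bar g_c) \le \WW(|\Tau_t^{c}|, \RR_c, d_c, \delta)$, where $\bar g_c, \bar g_{i_*}$ are the empirical mean returns of $\AA_c, \AA_{i_*}$. Writing the regret on $c$'s exploit rounds as $\sum (V^* - g_t) + \sum(g_t - V^{\pi_t})$ and bounding the first sum by $|\Tau_t^{c}|(V^* - \bar g_c) = |\Tau_t^{c}|(V^* - \bar g_{i_*}) + \GG_t(c,i_*)$, the test gives the $\WW$ term, while compatibility of $i_*$ controls $V^* - \bar g_{i_*} \lesssim \RR_{i_*}/\sqrt{|\Tau_t^{i_*}|}$ up to martingale noise. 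The essential quantitative step is a high-probability lower bound $|\Tau_t^{i_*}| \gtrsim t^{1-\kappa}/L$, obtained by concentrating the number of exploration rounds ($\approx t^{1-\kappa}$) and the $\ge 1/|B_t| \ge 1/L$ chance of selecting $i_*$; this is also what the burn-in $\tau_{\min}(\delta) \asymp L^{2/(1-\kappa)}\log^{1/(1-\kappa)}(1/\delta)$ is engineered to guarantee. Substituting, the per-candidate regret is $\lesssim \RR_c\sqrt{|\Tau_t^{c}|} + H\sqrt{L\,|\Tau_t^{c}|^{1+\kappa}} + \RR_{i_*}\sqrt{L}\,|\Tau_t^{c}|\,t^{-(1-\kappa)/2}$, and using $\RR_c \le \RR_{i_*}$ and $|\Tau_t^{c}| \le t \le T$ every piece is at most a power-mean term in $|\Tau_t^{c}|$.

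Finally I would sum over the at most $i_*$ active candidates. Since $\sum_{c} |\Tau_T^{c}| \le T$, the concavity/power-mean inequality $\sum_{c=1}^{i_*} x_c^{(1+\kappa)/2} \le i_*^{(1-\kappa)/2}\,(\sum_c x_c)^{(1+\kappa)/2}$ converts the $|\Tau_T^{c}|^{(1+\kappa)/2}$ terms into $i_*^{1/3} T^{2/3}$ for $\kappa = 1/3$, producing exactly the $i_*^{1/3} L^{1/2} T^{2/3}$ scaling in both the $\RR_{i_*}$-weighted and the $H$-weighted (i.e. $HL T^{2/3}$, using $i_*^{1/3}L^{1/2} \le L$) terms; the $\sum_c \sqrt{|\Tau_T^{c}|} \le \sqrt{i_* T} \le \sqrt{L T}$ terms are lower order. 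Collecting the explore contribution, the $i_*$-compatible contribution, and the summed misspecified contributions yields the claimed $\widetilde O(H L T^{2/3} + \RR_{i_*} i_*^{1/3} L^{1/2} T^{2/3})$ bound. I expect the main obstacle to be the coupling term $|\Tau_t^{c}|(V^* - \bar g_{i_*})$: it forces a simultaneous high-probability lower bound on $|\Tau_t^{i_*}|$ that holds uniformly over all $t$ and over the random, time-varying sets $B_t$, and getting the exponent of $t$ right here (rather than a crude $\sqrt{T}$ per candidate) is precisely what delivers the $T^{2/3}$ rate and the favorable $i_*^{1/3}$ rather than $i_*$ dependence.
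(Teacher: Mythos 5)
Your proposal is correct and follows essentially the same route as the paper's proof: the test-success coupling $\GG_t(c,i_*) \leq \WW$ combined with compatibility of $\AA_{i_*}$, the Freedman-based lower bound $|\Tau_t^{i_*}| \gtrsim t^{1-\kappa}/L$, and the Jensen/power-mean aggregation over misspecified candidates are exactly the paper's key steps. Your bookkeeping differs only cosmetically (explore/exploit decomposition and direct counting of exploration rounds, versus the paper's three time phases and the explicit $Hi_*$ charge for failing-test rounds), and these differences do not affect the stated bound since the omitted terms are independent of $T$ and absorbed by $\widetilde{O}$.
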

The regret bound of the meta-algorithm matches that of the optimal algorithm in dependence on the complexity of its model class $d_{i_*}$ and horizon $H$, i.e., the best dependence if the optimal algorithm were provided \textit{a priori}.  We do incur a worse dependence on $T$, which is now $T^{2/3}$, compared to the nominal $\sqrt{T}$ rate, and a dependence of $L^{1/2}$, total number of algorithms, and $i_*$, the index of the optimal algorithm. Note that this type of trade-off in the parameter optimality for model selection is typical in recent results focused on contextual bandits, where methods making less strong assumptions typically incur sub-optimality in either the dependence on $d_{i_*}$ or $T$. In particular, Theorem~\ref{thm::main} matches the rate of Exp3.P~\citep{pacchiano2020model} and does so without non-trivially modifying the base algorithms. In addition to the minimax guarantee of Theorem~\ref{thm::main}, we show in Section~\ref{sec::id} that this can be improved to instance-dependent bounds, in contrast to Exp3.P and Corral.

\begin{table}[]
\small
\begin{tabular}{|l|l|l|}
\hline
\multicolumn{1}{|c|}{\textbf{Alg.}} & \multicolumn{1}{c|}{\textbf{Env.}} & \multicolumn{1}{c|}{\textbf{Regret}}
\\ \hline \hline
ModCB                                & CB                                    &     $\widetilde O\left(d^{1/3}_{i_*} T^{2/3}\right)$                                                                                            \\ \hline
OSOM                             & CB                                &              $\widetilde O\left(d^{1/2}_{i_*}T^{1/2}\right)$                                                                                   \\ \hline
Corral                               &       RL                         &           $\widetilde O \left( \RR_{i_*}^2 T^{1/2} \right)$                                                                                      \\ \hline
EXP3.P                             &     RL                           &                         $\widetilde O\left(\RR_{i_*} T^{2/3}\right)$                                                                        \\ \hline
Ours                             &         RL                       &       \begin{tabular}[c]{@{}l@{}} MM: $\widetilde O\left(\RR_{i_*} T^{2/3}\right)$ \\ ID: $\widetilde O \left(\RR_{i_*}^3 \Delta_{\min}^{-2} + \RR_{i_*}T^{1/2} + T^{2/3} \right)$\end{tabular}                                                                                                                   \\ \hline
\end{tabular}

\caption{We compare the theoretical guarantees of our algorithm to recent model selection work: ModCB \citep{foster2019model}, OSOM \citep{chatterji2020osom}, Corral \citep{agarwal2016corralling,pacchiano2020model}, and Exp3.P \citep{pacchiano2020model}. The first two apply to the contextual bandit (CB) setting and leverage distribution assumptions on the contexts to get nearly optimal regret. Corral and Exp3.P apply generally, but are suboptimal and require modifying the base algorithms in non-trivial ways. Our rate matches that of EXP3.P in the minimax (MM) setting without significant assumptions or modifications to the algorithms. We also achieve an improved instance-dependent (ID) rate when the gaps in performance between base algorithms are constant with minimal gap $\Delta_{\min}$.}
\vspace{-.5cm}
\end{table}\label{table:results}

\subsection{Proofs}

All proofs of Theorem~\ref{thm::main}, when not provided here, are available in Appendix~\ref{sec::omitted}. Due to space limitations, in this section, we prove Lemma~\ref{thm::test} and provide a proof sketch for Theorem~\ref{thm::main} to illustrate the main idea behind handling pairs $(\AA_{j}, \FF_{j})$ that are not $\RR_j$-compatible. In both cases, we require that three events hold and will show that they do with high probability. Define $\epsilon_t = g_t - V^{\pi_t}$ and let $\tau_i$ denote the first episode in which $\AA_{i}$ is chosen as the candidate $\hat \imath_t$.
If $\AA_{i}$ is never chosen then default to $\tau_i = T$. Recall that $\delta = \frac{\delta'}{10LT^2\log_2T}$.
\elist{
	\item Event $E_1$: For all $j \in [L]$ and all $t \in [T]$ such that $t\geq \tau_{\min}(\delta)$, if $t \leq \tau_i$, then $
\frac{t^{1 - \kappa}}{8L} \leq \abs{\Tau^i_t}  \leq 4t^{1 - \kappa}
$.
If $t > \tau_i$, then $\abs{\Tau^i_t} \leq t - \tau_i + 4 t^{1 - \kappa}$
\item Event $E_2$: For all $t \in [T]$,
\eq{
\textstyle
 \sum_{t' \in \Tau^{i_*}_{t}} V^* - V^{\pi_{t'}}  
 \leq \RR_{i_*}(d_{i_*}, H, \log(T/\delta))  \sqrt{| \Tau_{t}^{i_*} | } 
}
\item Event $E_3$: For all $j \in [L]$ and all $t \in [T]$,
$
\abs{ \sum_{t' \in \Tau_t^j } \epsilon_{t'} }  \leq H \sqrt{2 | \Tau_{t}^j| \log (2/\delta) }
$
}
The first event ensures that the exploration schedule yields sufficient data to all the algorithms before they are chosen. The second states that the nominal anytime regret guarantee holds for $(\AA_{i_*}, \FF_{i_*})$. The third handles concentration of the noisy returns that the algorithm observes from deploying policies.
The following lemma shows that all three events happen with high probability.

\begin{restatable}{lemma}{eventLemma}
\label{lem::event}
The event $E = \bigcap_{i \in \{1, 2,3\}} E_i$ holds with probability at least $1 - 10LT^2 \delta \log_2 T$.
\end{restatable}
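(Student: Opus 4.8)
The plan is to bound the failure probability of each of the three events $E_1, E_2, E_3$ separately and then union bound; the constant $10$ and the polynomial factors in the claimed probability come from carefully counting the number of union-bound terms. Event $E_2$ is immediate from compatibility, $E_3$ is a standard bounded-martingale concentration, and $E_1$---controlling the exploration counts under the adaptively shrinking set $B_{t'}$---is the technical heart of the argument. I would prove the easy two first and reserve most of the effort for $E_1$.

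For $E_2$, I would invoke Definition~\ref{def::realize} directly. Since $(\AA_{i_*}, \FF_{i_*})$ is $\RR_{i_*}$-compatible and $\AA_{i_*}$ is fed precisely the episodes in $\Tau^{i_*}_t$ according to its own anytime internal clock, the definition applied to its $|\Tau^{i_*}_t|$ rounds yields $\sum_{t' \in \Tau^{i_*}_t}(V^* - V^{\pi_{t'}}) \le \RR_{i_*}(d_{i_*}, H, \log(T/\delta))\sqrt{|\Tau^{i_*}_t|}$ simultaneously for all $t$, with failure probability at most $\delta$. For $E_3$, let $\HH_{t'-1}$ be the history through the choice of $\pi_{t'}$ but before the rollout $z_{t'}$, so that $\E[g_{t'}\mid \HH_{t'-1}] = V^{\pi_{t'}}$ and $\epsilon_{t'} = g_{t'} - V^{\pi_{t'}}$ is a martingale difference sequence with $|\epsilon_{t'}| \le H$ (rewards lie in $[0,1]$ over $H$ steps). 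Reindexing the episodes on which $\AA_j$ is selected and applying the Azuma--Hoeffding inequality gives $|\sum_{t' \in \Tau^j_t}\epsilon_{t'}| \le H\sqrt{2|\Tau^j_t|\log(2/\delta)}$ for a fixed count; a union bound over $j \in [L]$ and over the at most $T$ possible values of $|\Tau^j_t|$ makes this hold for all $t$.

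The crux is $E_1$. When $t \le \tau_i$ every visit to $\AA_i$ is an exploration round, so $|\Tau^i_t| = \sum_{t' \le t}\mathbf{1}[U_{t'}=1]\,\mathbf{1}[J_{t'}=i]$. Conditioned on $\HH_{t'-1}$, and while $i \in B_{t'}$, this indicator has mean $\tfrac{1}{t'^\kappa}\cdot\tfrac{1}{|B_{t'}|}$, which lies in $[\tfrac{1}{L t'^\kappa}, \tfrac{1}{t'^\kappa}]$; summing these predictable means over $t' \le t$ gives $\Theta(t^{1-\kappa})$ up to the $1/L$ factor, using $\sum_{t'\le t} t'^{-\kappa} = \Theta(t^{1-\kappa})$. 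I would then apply a multiplicative Chernoff/Freedman martingale bound to $|\Tau^i_t|$ around its predictable mean. The burn-in threshold $\tau_{\min}(\delta) = C_{\min} L^{2/(1-\kappa)}\log^{1/(1-\kappa)}(1/\delta)$ is exactly what forces the multiplicative deviation below a constant, so that the mean $\Theta(t^{1-\kappa}/L)$ is trapped between $t^{1-\kappa}/(8L)$ and $4t^{1-\kappa}$. For $t > \tau_i$ the count splits into the deterministic commit contribution $t - \tau_i$ plus an exploration contribution, and the same concentration controls the latter by $4t^{1-\kappa}$. To make all of this hold for every $t$ at once I would peel time into the $\lceil\log_2 T\rceil$ dyadic blocks $[2^k, 2^{k+1})$, apply concentration on each block, and union over the blocks, over $i \in [L]$, and over the random switch times $\tau_i$; this is where the $\log_2 T$ factor and the remaining polynomial-in-$T$ factors enter.

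Combining the three pieces, the total number of union-bound events is $O(LT^2\log_2 T)$, so $E = E_1 \cap E_2 \cap E_3$ fails with probability at most $10 L T^2 \delta \log_2 T$, matching the statement. The main obstacle is $E_1$: because $B_{t'}$ evolves adaptively based on the outcomes of past tests, the exploration-assignment probabilities are neither independent nor identically distributed, so a plain binomial Chernoff bound does not apply and one must argue via martingale concentration around a \emph{random} predictable mean, while simultaneously obtaining a bound uniform in $t$ through peeling and calibrating $\tau_{\min}$ so that the two-sided multiplicative guarantee with the specific constants $1/8$ and $4$ genuinely holds.
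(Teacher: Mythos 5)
Your proposal matches the paper's proof in all essentials: $E_2$ follows from the anytime compatibility guarantee of $(\AA_{i_*},\FF_{i_*})$, $E_3$ from Azuma--Hoeffding with union bounds over algorithms and counts, and $E_1$ --- the technical core --- from Freedman-style martingale concentration of $|\Tau^i_t|$ around the random predictable mean $\sum_{s \le t} \frac{1}{|B_s| s^\kappa}$, handling the random switch time by a union over its possible values and using the burn-in $\tau_{\min}(\delta)$ to absorb deviations into the constants $1/8$ and $4$. The only cosmetic difference is how uniformity over $t$ is organized (your dyadic peeling versus the paper's union over all pairs $(t,t') \in [T]^2$, with the $\log_2 T$ factor arising inside the cited Freedman inequality of Bartlett et al.), which if anything yields a smaller failure probability and still satisfies the stated bound of $10LT^2\delta\log_2 T$.
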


Lemma~\ref{lem::event} is proved in Appendix~\ref{sec::eventlemmaproof}.
The proof for the first event uses a Freedman inequality (details in Appendix~\ref{sec::freedman}) to bound the sizes of all sets given that enough time has passed. The second event holds with high probability under the assumption that $(\AA_{i_*}, \FF_{i_*})$ is $\RR_{i_*}$-compatible. The third event can be shown to hold with high probability using the Azuma-Hoeffding inequality with appropriate union bounds.

\subsubsection{Proof of Lemma~\ref{thm::test}}

We now prove the statement of Lemma~\ref{thm::test} under the event $E$.
Adding and subtracting the sum of appropriately scaled value functions $\sum_{t' \in \Tau^j_t} V^{\pi_{t'}}$ and $\sum_{t' \in \Tau^{i_*}_t} V^{\pi_{t'}}$, we can write $\GG_t(i_*, j)$ in terms of value functions and conditionally zero-mean errors:
\eq{
 \GG_t(i_*, j) & =  \frac{\abs{\Tau_t^{i_*}}}{\abs{\Tau_t^{j}}} \sum_{ t' \in \Tau^{j}_t} g_{t'} - \sum_{ t' \in \Tau^{{i_*}}_t} g_{t'} \\ 
\!\!\!\!\!\!& = \frac{\abs{\Tau_t^{i_*}}}{\abs{\Tau_t^{j}}} \sum_{ t' \in \Tau^{j}_t}  \left( V^{\pi_{t'}} + \epsilon_{t'} \right) - \sum_{ t' \in \Tau^{i_*}_t}  \left( V^{\pi_{t'}}  + \epsilon_{t'}\right) 
\\
 & \leq  \sum_{ t' \in \Tau^{i_*}_t}  \left( V^* - V^{\pi_{t'}} \right)  +  \frac{\abs{\Tau_t^{i_*}}}{\abs{\Tau_t^{j}}} \sum_{ t' \in \Tau^{j}_t} \epsilon_{t'} - \sum_{ t' \in \Tau^{i_*}_t} \epsilon_{t'} 
 } %
 The last inequality follows as $V^* \geq V^{\pi_{t'}}$ for all $t' \in [T]$. If events $E_2$ and $E_3$ hold then 
 \eq{
 \GG_t(i_*, j) & \leq  \RR_{i_*}\left(d_{i_*}, H, \log(1 / \delta ) \right) \cdot  \sqrt{| \Tau_{t}^{i_*} | }  \\
 & \quad + H \sqrt{ 2 \abs{ \Tau_{t}^{i_*} } \log(2/\delta)  } + H \sqrt{  \frac{2 \abs{\Tau_t^{i_*}}^2}{\abs{\Tau_t^{j}}} \log( 2/\delta)    }  
}
By event $E_1$ and the fact that $j > i_*$ and $t \geq \tau_{\min}(\delta)$, $| \Tau^j_t | \geq \frac{t^{1 -\kappa }}{8L} \geq \frac{\abs{ \Tau^{i_*}_t }^{1 -\kappa }}{8L}$. Therefore, for the third term,
\eq{
H \sqrt{  \frac{2 \abs{\Tau_t^{i_*}}^2}{\abs{\Tau_t^{j}}} \log( 2/\delta)  }  & \leq H \sqrt{  16 L  \abs{\Tau_t^{i_*}}^{1 + \kappa} \log( 2/\delta)  } 
}
Applying this bound to the result in the previous display and given the definition of $\WW$, it follows that $\GG_t(i_*, j) \leq \WW(  \abs{\Tau_t^{i_*}}, \RR_{i_*}, d_{i_*}, \delta)$ for a sufficiently large constant $C_{\WW} > 0$, independent of $t$, $d_{i_*}$, $H$, and $\delta$.

\subsubsection{Proof Sketch of Theorem~\ref{thm::main}}

In bounding the regret of the meta-algorithm, there are three cases to handle: (1) before the test becomes valid, (2) once the test is valid but $i_*$ has not been chosen yet, and finally (3) once $i_*$ is chosen. We address the first and third cases before addressing the second, which is more involved. We define $\tau_* = \tau_{i_*}$ for shorthand.

Case (1): When $t < \tau_{\min}(\delta)$, the test to determine switching among any of the model classes is not yet valid. Here we simply pay the burn-in period giving
$
\regret_{1:\tau_{\min}(\delta) - 1} \leq O (H L^{\frac{2}{1 - \kappa}} \log^{\frac{1}{1 - \kappa}} (1/\delta) )
$.

 Case (3): If $t > \tau_*$, then the meta-algorithm has switched to $\AA_{i_*}$. Under event $E$, the condition in Lemma~\ref{thm::test} is met and so the test no longer fails. Therefore  $(\AA_{i_*}, \FF_{i_*})$ which is $\RR_{i_*}$-compatible is not  rejected in the remaining episodes. The regret during this phase scales as $\RR_{i_*}(d_{i_*}, H, \log(T/ \delta)) \cdot \sqrt{T}$ plus additional $O(HLT^{1 - \kappa})$ regret due to exploration of the remaining base algorithms in $B_t$. %

 Case (2) is when $\tau_{\min} < t \leq \tau_{*}$---the test is eligible but the meta-algorithm is either switching among misspecified models or unable to detect that they are misspecified. Since the misspecification is not detected for any of the algorithms in $B_t$, we know 
$
\GG_t(\hat \imath_t, i_*) \leq \WW(  | \Tau_{t}^{\hat \imath_t}|, \RR_{\hat \imath_t}, d_{\hat \imath_t}, \delta ).
$
That is, the average reward for $\AA_{\hat \imath_t}$ is not significantly different from that of $\AA_{i_*}$. Since $\AA_{i_*}$ is only played during exploration and $t \geq \tau_{\min}(\delta)$, its number of rounds played can be lower bounded by $t^{1 - \kappa}/8L$ and thus its average regret is at most roughly \eq{
\widetilde O \left( \frac{ L^{1/2}\RR_{i_*}(d_{i_*}, H, \log(T/\delta)) }{ t^{\frac{1 - \kappa }{2} }}\right).
}

The success of the test suggests that the average reward of $\AA_{\hat \imath_t}$ should be close to this. Extrapolating over the rounds played by $\AA_{\hat \imath_t}$, the regret for $\hat \imath_t$ will be \eq{
\widetilde O \left( \RR_{i_*}(d_{i_*}, H, \log(T/\delta)) \cdot L^{1/2} \abs{\Tau^{\hat \imath_t}_t } ^{\frac{1 + \kappa }{2}} \right)
}
up to a constant shift by $\WW(  | \Tau_{t}^{\hat \imath}|, \RR_{\hat \imath_t}, d_{\hat \imath_t}, \delta )$. The shift is dominated by the above display because $\RR_{\hat \imath_t} \leq \RR_{i_*}$ and $\kappa \in (0, 1/2]$. Finally, since we must account for the cumulative effect for all $i < i_*$, Jensen's inequality shows the sum of these terms is bounded above by \eq{
\widetilde O \left( \RR_{i_*}(d_{i_*}, H, \log(T/\delta)) \cdot i_*^{\frac{1 - \kappa }{2}}L^{1/2} T ^{\frac{1 + \kappa }{2}}  \right) .
}
This becomes the dominant term in the regret. Additional regret of $O(HLT^{1-  \kappa} + Hi_* + H T^{\frac{1 + \kappa}{2}} \log^{1/2}(1/\delta) )$ is also paid for exploration, switching costs, and estimation error of the averages.
Summing these three cases and taking $\kappa = 1/3$ proves Theorem~\ref{thm::main}.

\subsection{Applications}
Though Theorem~\ref{thm::main} is stated generally for any RL algorithms with nominal anytime regret bounds, we can easily specialize it to several important problem settings without knowing the optimal model class \textit{a priori}. Formal details can be found in Appendix~\ref{sec::applications}.

\paragraph{Linear Models} Recent work has considered linear MDPs where the transition dynamics and reward are linear in some feature vector~\citep{jin2019provably, yang2019reinforcement}. We assume access to nested features $\phi_{i} : \SSS \times \AA \to \R^{d_i}$ for $i \in [L]$ such that $d_{i} \leq d_{i + 1}$ and the first $d_i$ components of $\phi_{i + 1}$ are the same as $\phi_i$. These feature generate linear model classes:
\begin{align}\label{eq::linear-model}
\FF_i = \left\{ (s, a) \mapsto \< \phi_i(s, a), \theta\> \ : \ \theta \in \R^{d_i} \right\} 
\end{align}
$\FF_i$ realizes $\MM$ if it has zero approximation error for the transition dynamics $P(\cdot | s, a)$ and reward $r(s, a)$. Let $i_*$ be the smallest index such that $\FF_{i_*}$ realizes $\MM$. The regret of LSVI-UCB \citep{jin2019provably} under $\FF_{i}$ for $i \geq i_*$ is $\widetilde{ O} \left( \sqrt{d_{i}^3 H^4 T }  \right)$. Using \ms{} with LSVI-UCB algorithms guarantees $
\regret_T(\ms{}) = \widetilde O \left( \sqrt{d_{i_*}^3 H^4 } \cdot L^{5/6}T^{2/3}\right)$.
MatrixRL \citep{yang2019reinforcement} similarly assumes a linear function class: \eq{\FF_{i} = \left\{(s, u, s') \mapsto \phi_i(s,u)^\top M \psi_i(s')  \ : \ M \in \R^{d_i \times d_i'}\right\}
} for $\psi_i(s') \in \R^{d_i'}$. For $\FF_i$ with $i \geq i_*$ that realizes the transition dynamics $P$, the regret is $\widetilde O \left( \sqrt{d_i^{3} H^5 T }\right)$. Our model selection algorithm, \ms{}, achieves $\regret_T(\ms) =\widetilde O \left( \sqrt{d_{i_*}^{3} H^5 } \cdot L^{5/6} T^{2/3}\right)$.

A more general linear setting considers learning under low Bellman error without directly assuming linearity of $P$. With $\FF_i$ defined as in (\ref{eq::linear-model}), we say  $\FF_i$ realizes $\MM$ if it has zero inherent Bellman error (Definition 1, \cite{zanette2020learning}). Then for $i \geq i_*$, ELEANOR~\citep{zanette2020learning} guarantees improved regret $\widetilde O \left( d_i\sqrt{H^4 T} \right)$ and \ms{} achieves $\regret_T(\ms{}) = \widetilde O \left( d_{i_*} \sqrt{H^4} \cdot L^{5/6} T^{2/3} \right)$.

As done by \cite{foster2019model}, for nested model classes, the $L$ dependence can be replaced by $\log T$ by only considering a subset of features such that $d_i = O(2^i)$ for $i \in [\ceil{\log_2(T)}]$.

\paragraph{Low Bellman Rank} For more general function approximation, consider the setting of MDPs with low Bellman rank~\citep{jiang2017contextual} and finite (but not necessarily linear or nested) models $\{ \FF_i\}$ with $\FF_i : \SSS \times \UU \to \R$. $\FF_i$ realizes $\MM$ if there is $f^* \in \FF_i$ such that $f^* = Q^*_h$ for all $h \in [H]$ and the induced Bellman rank is $M_i < |\FF_i|$. For $i$ such that $\FF_i$ realizes $\MM$, AVE~\citep{dong2020n} guarantees regret $\widetilde O \left(\sqrt{ M_i^2 |\UU | H^4 T \log^3 |\FF_i| } \right)$. Let $i_*$ be defined similarly as before. 
Then \ms{} achieves $\regret_T(\ms) = \widetilde O \left(\sqrt{ M_{i_*}^2 |\UU | H^4  \log^3 |\FF_{i_*}| } \cdot L^{5/6} T^{2/3} \right)$

\section{INSTANCE-DEPENDENT BOUNDS}\label{sec::id}
We now prove a stronger ``instance-dependent" guarantee on online selection over more specialized base algorithms which have provable regret guarantees that are sublinear in $T$, but compared to the best policy within its respective policy class. For example, for an algorithm and model class $(\AA, \FF)$ using value-based function approximation we might consider the greedy policy class:
\eq{
\Pi_{\FF} = \left\{ (s, h) \mapsto \argmax_{u \in \UU} \ f(s, u, h) \ : \   f \in \FF \right\}.  
}
The regret with respect to the best-in-class is 
\eq{
\textstyle
\regret_{T}(\AA, \Pi_{\FF}; \MM) = \max_{\pi \in \Pi_{\FF}} \sum_{t \in [T]} V^{\pi} - V^{\pi_t}
}
To consider algorithms that may obtain sublinear regret with respect to this weaker benchmark but not with respect to $V^*$, we give a refined definition of $\RR$-compatible algorithms.
\begin{definition}\label{def::realize2}
The pair $(\AA, \FF)$ is said to be $\RR^{\Pi_{\FF}}$-compatible with respect to $\Pi_{\FF}$ 
on the MDP $\MM$ if we have
\eq{
\regret_{T}(\AA, \Pi_{\FF}; \MM)  \leq \RR^{\Pi_{\FF}}(d_{\FF}, H, \log(T/\delta)) \cdot \sqrt{t}
}
for all $t$ with probability at least $1 - \delta$.
\end{definition}

The value of $\max_{\pi \in \Pi_{\FF}} V^\pi$ is typically unknown because of the complex dependence between $\Pi_{\FF}$ and $\MM$, and because $\Pi_{\FF}$ is often determined by $\FF$. Given a set of algorithms with different policy classes, we would like to select the one with the smallest regret compared to the optimal best-in-class value. Formally, we assume there are given algorithms $ \{ (\AA_i, \FF_i) \}$ with policy classes $\{ \Pi_i\}$ each having optimal values $V_i^* := \max_{\pi \in \Pi_i} V^{\pi}$ and regret coefficients $\{ \RR_i^{\Pi_i}\}$ such that \textit{for all
}$i$ the pair $(\AA_i, \FF_i)$ is $\RR_i^{\Pi_i}$-compatible and $\RR_i(d_i, \cdot, \cdot)  \leq \RR_{i + 1} (d_{ i +1}, \cdot, \cdot)$. 
Our goal is to select $i_* \in B_* :=\argmax_{j \in [L]} V_j^*$
that has the smallest complexity dependence i.e. $i_* = \argmin_{i \in B_*} \RR_i^{\Pi_{i}}(d_i, \cdot, \cdot) $. We emphasize that even if no algorithm is compatible in the sense of Definition~\ref{def::realize}, we want the optimal best-in-class guarantee\footnote{In essence, the best-in-class guarantee needs to hold even under model misspecification. A good example of a base algorithm satisfying this condition would be Exp4 in the contextual bandits setting.} in the sense of Definition~\ref{def::realize2}.

The difference between this setting and the last is that all algorithms are assumed to be compatible with respect to their own policy classes now, but the differing $\Pi_i$ mean that some can have lower $V^*_i$, which we want to eliminate.
Note that although the regret coefficients are ordered as in (\ref{eq::ordering}), the values $\{ V_i^*\}$ are unknown and not necessarily ordered. Observe that $i_* = \min B_*$, so that $V_{i_*}^* > V_i^*$ for all $i < i_*$ and $V_{i_*}^* \geq V_{i}^*$ for all $i > i_*$. Thus $i_*$ has the lowest regret for the best policy class. We would like an algorithm $\AA$ that bounds $\regret_{T}(\AA, \Pi_{i_*}; \MM)$ with dependence on only the complexity of $\FF_{i_*}$. 
The following result shows that Algorithm~\ref{alg::ms}, without any modifications, can obtain an \textit{instance-dependent} regret guarantee based on the size of the gaps $\Delta_{j, i_*}:= V_{i_*}^* - V_j^*$ for $j < i_*$.

\begin{restatable}{theorem}{idTheorem}
\label{thm::id}
For a given $\MM$, let $(\AA_i ,\FF_i)$ be $\RR_i^{\Pi_i}$-compatible with respect to $\Pi_i$ for all $i \in [L]$. Then, with probability at least $1 - \delta'$, \ms{} with $\kappa = 1/3$  satisfies the regret bound with respect to policy class $\Pi_{i_*}$:
\eq{
\textstyle
\widetilde O \left(H L T^{2/3} + \RR_{i_*}^{\Pi_{i_*}} \sqrt{T}  +
L^{3/2}(\RR_{i_*}^{\Pi_{i_*}})^3\sum_{ i < i_*} \Delta_{i, i_*}^{-2} \right)
}
If $\kappa = 1/2$, then it satisfies
\eq{
\textstyle
\widetilde O \left(H L\sqrt{T} + \RR_{i_*}^{\Pi_{i_*}} \sqrt{T}  +
L^{2}(\RR_{i_*}^{\Pi_{i_*}})^4\sum_{ i < i_*} \Delta_{i, i_*}^{-3} \right)
}
\end{restatable}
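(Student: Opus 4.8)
The plan is to mirror the three-phase regret decomposition from the proof sketch of Theorem~\ref{thm::main}, but to compare against $V_{i_*}^*$ instead of $V^*$ and to exploit the constant gaps $\Delta_{i,i_*}$ to bound how long each misspecified candidate $\AA_i$, $i < i_*$, survives before rejection. First I would condition on a high-probability event analogous to $E = E_1 \cap E_2 \cap E_3$ of Lemma~\ref{lem::event}: the set-size event $E_1$ and the noise event $E_3$ (concentration of $\epsilon_{t'} = g_{t'} - V^{\pi_{t'}}$) are unchanged, while the regret event $E_2$ is replaced by its best-in-class analog asserting that $\sum_{t' \in \Tau_t^i}(V_i^* - V^{\pi_{t'}}) \leq \RR_i^{\Pi_i}\sqrt{|\Tau_t^i|}$ holds simultaneously for every $i \in [L]$. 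This is precisely where Definition~\ref{def::realize2} and the assumption that all pairs are compatible with respect to their own policy classes enters, and a union bound over $L$ algorithms is absorbed into the same failure probability. Throughout I would repeatedly invoke two one-sided bounds: the pointwise $V^{\pi_{t'}} \leq V_i^*$ for $t' \in \Tau_t^i$ (since $V_i^* = \max_{\pi \in \Pi_i} V^{\pi}$), and the averaged lower bound on $\AA_{i_*}$'s return coming from its compatibility.

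The first structural step is the analog of Lemma~\ref{thm::test}: once $i_*$ becomes the candidate it is never rejected. Because $i_* \in B_* = \argmax_j V_j^*$, every higher index $j > i_*$ satisfies $V_j^* \leq V_{i_*}^*$, so for $t' \in \Tau_t^j$ we have $V^{\pi_{t'}} \leq V_j^* \leq V_{i_*}^*$. Substituting $g_{t'} = V^{\pi_{t'}} + \epsilon_{t'}$ into $\GG_t(i_*, j)$ and combining this bound on the $\AA_j$ value terms with the best-in-class lower bound $\sum_{t' \in \Tau_t^{i_*}} V^{\pi_{t'}} \geq |\Tau_t^{i_*}|V_{i_*}^* - \RR_{i_*}^{\Pi_{i_*}}\sqrt{|\Tau_t^{i_*}|}$ cancels the $|\Tau_t^{i_*}|V_{i_*}^*$ terms and leaves exactly the three quantities defining $\WW$, after bounding the $\epsilon$ sums via $E_3$ and using $|\Tau_t^j| \geq |\Tau_t^{i_*}|^{1-\kappa}/(8L)$ from $E_1$. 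This is line-for-line the computation proving Lemma~\ref{thm::test} with $V^*$ replaced by $V_{i_*}^*$, and it guarantees the meta-algorithm never overshoots past $i_*$.

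The main obstacle is the instance-dependent rejection-time bound for each misspecified $\AA_i$ with $i < i_*$, which produces the new $\Delta^{-2}$ (resp.\ $\Delta^{-3}$) term. Here I would lower bound the test statistic against the always-available index $i_* \in B_t$. Writing $n_i = |\Tau_t^i|$ and expanding $g_{t'} = V^{\pi_{t'}} + \epsilon_{t'}$, the value part of $\GG_t(i, i_*)$ is at least $n_i V_{i_*}^* - \frac{n_i}{\sqrt{|\Tau_t^{i_*}|}}\RR_{i_*}^{\Pi_{i_*}} - n_i V_i^* = n_i \Delta_{i,i_*} - \frac{n_i}{\sqrt{|\Tau_t^{i_*}|}}\RR_{i_*}^{\Pi_{i_*}}$, using $V^{\pi_{t'}} \leq V_i^*$ for the subtracted sum and the compatibility lower bound for $\AA_{i_*}$. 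Since $i_*$ is still explored while $\hat\imath_t = i$, event $E_1$ gives $|\Tau_t^{i_*}| \geq t^{1-\kappa}/(8L) \geq n_i^{1-\kappa}/(8L)$, so both the regret correction and the leading concentration term of $\WW$ are of order $\sqrt{L}\,(\RR_{i_*}^{\Pi_{i_*}} + H)\, n_i^{(1+\kappa)/2}$. Balancing the linear-in-$n_i$ signal against this sublinear correction, the test must fire once $n_i \Delta_{i,i_*} \gtrsim \sqrt{L}\,\RR_{i_*}^{\Pi_{i_*}} n_i^{(1+\kappa)/2}$, i.e.\ once $n_i \gtrsim N_i := (\sqrt{L}\,\RR_{i_*}^{\Pi_{i_*}}/\Delta_{i,i_*})^{2/(1-\kappa)}$. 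The delicate part is the bookkeeping relating $n_i$, $|\Tau_t^{i_*}|$, and $t$ across the exploration and commit portions of $\AA_i$'s tenure, and verifying that the $\RR_i\sqrt{n_i}$ term and the $\epsilon$-noise are genuinely dominated so that $N_i$ is tight up to constants and logs.

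Finally I would collect the pieces. Each misspecified $\AA_i$ is committed for at most $O(N_i)$ rounds before rejection, and by the best-in-class regret each such round costs at most $\Delta_{i,i_*}$ against $\Pi_{i_*}$ (up to the lower-order $\RR_i\sqrt{N_i}$), so its contribution is $\widetilde O(N_i \Delta_{i,i_*}) = \widetilde O\big((\sqrt{L}\RR_{i_*}^{\Pi_{i_*}})^{2/(1-\kappa)}\Delta_{i,i_*}^{-(1+\kappa)/(1-\kappa)}\big)$; summing over $i < i_*$ yields $\widetilde O(L^{3/2}(\RR_{i_*}^{\Pi_{i_*}})^3\sum_{i<i_*}\Delta_{i,i_*}^{-2})$ at $\kappa = 1/3$ and $\widetilde O(L^2(\RR_{i_*}^{\Pi_{i_*}})^4\sum_{i<i_*}\Delta_{i,i_*}^{-3})$ at $\kappa = 1/2$. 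Adding the phase-(3) regret $\RR_{i_*}^{\Pi_{i_*}}\sqrt T$ incurred after $i_*$ is selected (its own compatibility applied to $|\Tau_T^{i_*}| \leq T$) together with the burn-in, exploration, and switching regret $\widetilde O(HLT^{1-\kappa})$ carried over verbatim from Theorem~\ref{thm::main}, and substituting $\kappa = 1/3$ and $\kappa = 1/2$, gives the two claimed bounds.
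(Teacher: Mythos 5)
Your proposal is correct and follows essentially the same route as the paper's proof: the same modified event $E_2'$ (best-in-class regret for every $i \in [L]$, union bounded), the same analog of Lemma~\ref{thm::test} using $V_{i_*}^* = \max_i V_i^*$, and the same gap-based survival bound $|\Tau_t^i| \lesssim \left(\sqrt{L}\,\RR_{i_*}^{\Pi_{i_*}}/\Delta_{i,i_*}\right)^{2/(1-\kappa)}$ obtained from the test's failure to fire against $i_* \in B_t$ — your version lower-bounds the statistic $\GG_t(i,i_*)$ directly while the paper divides through by $|\Tau_t^i|$ to bound $\Delta_{i,i_*}$ first, but these are the same inequality rearranged. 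The final assembly (per-round cost $\Delta_{i,i_*}$ plus lower-order $\RR_i\sqrt{|\Tau^i|}$, plus the burn-in, exploration, and post-$\tau_*$ terms) and the exponent bookkeeping at $\kappa = 1/3$ and $\kappa = 1/2$ match the paper exactly.
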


Comparing this result to Theorem~\ref{thm::main}, if \ms{} is run with the same $\kappa = 1/3$ and the gaps are constant, a significantly better rate is possible since the third term has no dependence on $T$. With a more aggressive exploration choice of $\kappa = 1/2$, an even stronger instance-dependent guarantee is possible, matching the optimal $\RR_{i_*}^{\Pi_{i_*}}\sqrt{T}$ rate of the best algorithm. However, this comes at the price of worse dependence on the gaps and $\RR_{i_*}^{\Pi_{i_*}}$ factors, in the term that does not increase polynomially with $T$. In either case, Theorem~\ref{thm::id} shows that we can obtain optimal or near-optimal dependence in $T$ and only suboptimal $\RR_{i_*}^{\Pi_{i_*}}$-dependence on terms that do not grow with $T$, as long as the gaps are constant. In Appendix~\ref{sec::v-star}, we show that these rates can be even further improved with only minimal modifications to \ms{} if given access to fast estimators of the gaps or $V^*$.

\section{CONCLUSION}

We present a new model selection meta-algorithm for RL with function approximation. Given a set of base algorithms in which one is well-specified, the meta-algorithm adapts to the regret of the optimal one using a simple and interpretable statistical test. The regret of the meta-algorithm retains optimal dependence on model complexity while increasing the dependence on the number of episodes, $T$, to $O(T^{2/3})$. Compared to past efforts, our meta-algorithm provides similarly strong worst-case regret bounds, is computationally efficient conditioned on efficiency of the base algorithms, works under minimal assumptions, and provides new instance-dependent results.

Of many interesting future directions, a particularly interesting one given the prior significance of access to $V^*$ \citep{foster2019model,modi2020sample} and our even stronger instance-dependent regret rates (see Appendix~\ref{sec::v-star}), is whether estimating $V^*$ is easier than estimating the optimal policy.

\subsubsection*{Acknowledgements}
JNL is supported by the NSF GRFP.
VM acknowledges support from a Simons-Berkeley Research
Fellowship.
Part of this work was done while some of the authors were visiting the Simons Institute for the Theory of Computing.

\bibliographystyle{plainnat}
\bibliography{refs}
\raggedbottom
\pagebreak

\appendix
\onecolumn

\section{OMITTED PROOFS}\label{sec::omitted}

In this section, we collect proofs for Theorem~\ref{thm::main} that were omitted from the main paper.

\subsection{Proof of Lemma~\ref{lem::event}}\label{sec::eventlemmaproof}

Here, we restate and prove Lemma~\ref{lem::event}.
\eventLemma*
\begin{proof}
We will show that each of the three events holds with high probability and the apply the union bound.

Corollary~\ref{cor::set-sizes} of Section~\ref{sec::freedman} shows event $E_1$ holds with probability at least $1 - 4 LT^2 \delta \log_2 T$.

For event $E_2$, $i_*$ is the index of the algorithm that is $\RR_{i_*}$-compatible and anytime. Let $\pi_{(k)}^{i_*}$ denote the policy played by $\AA_{i_*}$ at the $k^{th}$ call to $i_*$. For $K \in [T]$, these properties guarantee its regret bound holds, with probability at least $1 - \delta$, \eq{
\sum_{k \in [K]} V^* - V^{\pi_{(k)}^{i_*}} \leq   \RR_{i_*}( d_{i_*}, H, \log(T/\delta))  \cdot \sqrt{K} 
}
Taking the union bound over all $K \in [T]$ shows that event $E_2$ holds with probability at least $1 - T \delta$.

As in the previous case, we can view the process $\epsilon^i_{(1)}, \ldots, \epsilon^i_{(T)}$ as the pre-drawn differences between the observed and expected returns for the $1$ through (at most) $T$ times of playing model $\AA_i$. Applying the Azuma-Hoeffding inequality with $|\epsilon^i_{(k)}| \leq H$ and taking the union bound over all $K \in [T]$,
\eq{
| \sum_{k \in [K]} \epsilon^i_{(k)} | & \leq H \sqrt{2 K \log (2/\delta) } 
}
with probability at least $1 - T\delta$. Taking the union bound over all models, event $E_3$ occurs with probability at least $1 - LT \delta$.

Taking these events together and $\delta' = 10LT^2 \delta \log_2 T$, event $E$ holds with probability at least $1 - \delta'$.
\end{proof}

\subsection{Full Proof of Theorem~\ref{thm::main}}

Here, we restate and complete the proof of Theorem~\ref{thm::main}.
\mainTheorem*
\begin{proof}
Let $\tau_* := \tau_{i_*}$ denote the time that $\AA_{i_*}$ is chosen as the candidate. Recall that $\delta = \frac{\delta'}{10LT^2 \log_2 T}$.
The analysis can be divided into three phases when conditioned on the event $E$.
\begin{enumerate}
\item $t < \tau_{\min}(\delta)$: the test to determine switching to $i_*$ is not valid yet.
\item $\tau_{\min}(\delta) < t \leq \tau_*$: the test is eligible but \ms{} is still switching among incompatible algorithms.
\item $t > \tau_*$: \ms{} has switched to $\AA_{i_*}$.
\end{enumerate}
Note that it is possible that $\tau_* \geq T$. That is, the algorithm only uses incompatible algorithms; however, we will show that this case still guarantees regret that adapts to the optimal algorithm $i_*$.

\paragraph{Case 1: Invalid Test} 
We require $t \geq \tau_{\min}(\delta)$ in order for the condition in Lemma~\ref{thm::test} to hold under $E$ when $\hat \imath_t = i_*$. Therefore, we can view this period $t < \tau_{\min}(\delta)$ as an unavoidable burn-in period. The regret during this interval can then be upper bounded in the worst case as
\eq{
\regret_{1:\tau_{\min}(\delta) - 1}  = \sum_{t = 1}^{\tau_{\min} - 1} V^* - V^{\pi_t}  \leq H \tau_{\min} = O \left( H L^{\frac{2}{1 - \kappa}} \log^{\frac{1}{1 - \kappa}}(1/\delta)\right)
}

\paragraph{Case 2: Misspecified Case} In the second phase, the test is valid, but \ms{} is either utilizing algorithms below $i_*$ or switching among them in the event the test fails.  The regret can be decomposed across each set $\Tau^j_{\tau_*}$ of times playing $\AA_j$ up to time $\tau_*$:
\eq{
\regret_{\tau_{\min}(\delta) :\tau_*} & = \sum_{j \in [L]} \sum_{t \in \Tau^j_{\tau_*}} V^* - V^{\pi_t} \\
& \leq   4H (L - i_*) \tau_*^{1- \kappa} +   \sum_{j < i_*} \sum_{t \in \Tau^j_{\tau_{j + 1}}} V^* - V^{\pi_t} \\
& \leq 4H (L - i_*) \tau_*^{1- \kappa} +  H i_* +  \sum_{j < i_*} \sum_{t \in \Tau^j_{\tau_{j + 1} - 1}} V^* - V^{\pi_t}
}
The second line follows from the fact that for $j > i_*$, algorithm $j$ is not selected yet (if ever), so maximal regret is paid for those algorithms during exploration. Event $E_1$ upper bounds the number of times that can be in $\Tau^j_{\tau_*}$ at time $\tau_*$, since the regret due to $j$ is only due to exploration. Furthermore, for $j< i_*$, once $j$ is rejected, it is never used for exploration again, so we can replace $\Tau^j_{\tau_*}$ with $\Tau^j_{\tau_{j + 1}}$ for $j < i_*$. The third line is necessary as no guarantee is given during episodes when a test fails and there can be at most $i_*$ failing tests since the condition in Lemma~\ref{thm::test} is always true under event $E$.

Then, we focus on bounding the right-hand term. Fix $j < i_*$. Observe that for $t \in \T^j_{\tau_{j + 1} - 1}$ the tests succeed for all comparisons including with $i_*$: 
\eq{
\GG_{\tau_{j + 1} - 1}(j, i) \leq \WW(\abs{\Tau^j_{\tau_{j + 1} - 1}  }, \RR_{j}, d_j, \delta)
}
for all $i > j$. Therefore, since $i_* > j$, the definition of $\GG$ can be used the bound the following:
\eq{
\sum_{t \in \T^j_{\tau_{j + 1} - 1}} V^* - V^{\pi_t} & = \sum_{ t \in \T^j_{\tau_{j + 1} - 1}} (V^* - g_t ) + \sum_{t \in \T^j_{\tau_{j + 1} - 1}} \epsilon_t \\
& \leq \frac{ \abs{ \T^j_{\tau_{j + 1} - 1} } }{ \abs{ \T^{i_*}_{\tau_{j + 1} - 1} }} \sum_{ t \in \T^{i_*}_{\tau_{j + 1} - 1}} (V^* - g_t ) + \WW(\abs{ \T^j_{\tau_{j + 1} - 1} }, \RR_j, d_j, \delta ) +   \sum_{t \in \T^j_{\tau_{j + 1} - 1}} \epsilon_t  \\
& \leq \frac{ \abs{ \T^j_{\tau_{j + 1} - 1} } }{ \abs{ \T^{i_*}_{\tau_{j + 1} - 1} }} \sum_{ t \in \T^{i_*}_{\tau_{j + 1} - 1}} (V^* - V^{\pi_t} ) + \WW(\abs{ \T^j_{\tau_{j + 1} - 1} }, \RR_j, d_j, \delta ) \\
& \quad  +   \sum_{t \in \T^j_{\tau_{j + 1} - 1}} \epsilon_t  + \frac{ \abs{ \T^j_{\tau_{j + 1} - 1} } }{ \abs{ \T^{i_*}_{\tau_{j + 1} - 1} }} \sum_{ t \in \T^{i_*}_{\tau_{j + 1} - 1}} \epsilon_t
}
Now we can use the fact that $E_2$ and $E_3$ hold to bound the regret and estimation errors:
\begin{align}\label{eq::thm1-regret-ub}
\begin{split}
\sum_{t \in \T^j_{\tau_{j + 1} - 1}} V^* - V^{\pi_t} & \leq  O \left( \RR_{i_*}(d_{i_*}, H, \log(T/\delta)) \cdot \sqrt{\frac{\abs{ \T^j_{\tau_{j + 1} - 1} }^2 } { \abs{ \T^{i_*}_{\tau_{j + 1} - 1} } } }  \right) + \WW(\abs{ \T^j_{\tau_{j + 1} - 1} }, \RR_j, d_j, \delta ) \\
& \quad + O \left( H \sqrt{ \abs{\T^j_{\tau_{j + 1} - 1}}  \cdot \log(1/\delta) }  \right) + O \left( H \sqrt{ \frac{ \abs{\T^j_{\tau_{j + 1} - 1}}^2 } { \abs{\T^{j}_{\tau_{j + 1} - 1}} }  \cdot \log(1/\delta) }  \right)
\end{split}
\end{align}
Using $E_1$ and the fact that $\tau_{\min}(\delta) \leq \tau_{j + 1} - 1 \leq \tau_*$, we have that \eq{\abs{ \Tau^{i_*}_{\tau_{j + 1} - 1} } \geq \frac{(\tau_{j + 1} - 1)^{1 - \kappa}}{8L} \geq \frac{\abs{ \Tau^{i_*}_{\tau_{j + 1} - 1} }^{1- \kappa}}{8L}.}
Then the terms in (\ref{eq::thm1-regret-ub}) that contain $\abs{\Tau^{i_*}_{\tau_{j + 1} - 1}}$ in the denominator can be upper bounded:
\eq{
O \left( \RR_{i_*}(d_{i_*}, H, \log(T/\delta)) \cdot \sqrt{\frac{\abs{ \T^j_{\tau_{j + 1} - 1} }^2 } { \abs{ \T^{i_*}_{\tau_{j + 1} - 1} } } }  \right) 
& \leq O \left( L^{1/2} \RR_{i_*}(d_{i_*}, H, \log(T/\delta)) \cdot \abs{ \T^j_{\tau_{j + 1} - 1} }^{\frac{1 + \kappa}{2}}    \right) \\
O \left( H \sqrt{ \frac{ \abs{\T^j_{\tau_{j + 1} - 1}}^2 } { \abs{\T^{j}_{\tau_{j + 1} - 1}} }  \cdot \log(1/\delta) }  \right)
& \leq O \left( H L^{1/2} \abs{\T^j_{\tau_{j + 1} - 1}}^{\frac{1 + \kappa}{2}}   \cdot \log^{1/2}(1/\delta)   \right)
}
The bound then becomes
\eq{
\sum_{t \in \T^j_{\tau_{j + 1} - 1}} V^* - V^{\pi_t} & \leq O \left( L^{1/2} \RR_{i_*}(d_{i_*}, H, \log(T/\delta)) \cdot \abs{ \T^j_{\tau_{j + 1} - 1} }^{\frac{1 + \kappa}{2}}    \right) + \WW(\abs{ \T^j_{\tau_{j + 1} - 1} }, \RR_j, d_j, \delta ) \\
& \quad + O \left( H  \abs{\T^j_{\tau_{j + 1} - 1}} ^{1/2} \cdot \log^{1/2}(1/\delta)   \right) + O \left( H L^{1/2} \abs{\T^j_{\tau_{j + 1} - 1}}^{\frac{1 + \kappa}{2}}   \cdot \log^{1/2}(1/\delta)   \right)
}
Since $\RR_j \leq \RR_{i_*}$, the regret for $j$ in this case is
\eq{
\sum_{t \in \T^j_{\tau_{j + 1} - 1}} V^* - V^{\pi_t} & \leq O \left( L^{1/2} \RR_{i_*}(d_{i_*}, H, \log(T/\delta)) \cdot \abs{ \T^j_{\tau_{j + 1} - 1} }^{\frac{1 + \kappa}{2}}  +  H L^{1/2} \abs{\T^j_{\tau_{j + 1} - 1}}^{\frac{1 + \kappa}{2}}   \cdot \log^{1/2}(1/\delta)  \right) 
}
Observe that $\sum_{j < i_*} \abs{ \Tau^j_{\tau_{j + 1} - 1 } } \leq T$ and the right-hand side is a sum of concave functions of each $\abs{ \Tau^j_{\tau_{j + 1} - 1 } }$. Using Jensen's inequality with the uniform distribution over $\abs{ \Tau^j_{\tau_{j + 1} - 1 } }$ for $j < i_*$ and then upper bounding by $T$ yields the bound:
\eq{
\regret_{\tau_{\min}(\delta) :\tau_*} & \leq O \left( HL T^{1 - \kappa} + H i_* +  \left( \RR_{i_*}(d_{i_*}, H, \log(T/\delta)) + H \log^{1/2}(1/\delta)  \right) \cdot i_*^{\frac{1 - \kappa}{2}} L^{1/2}  \cdot T^{\frac{1 + \kappa }{2}}   \right)
}

\paragraph{Case 3: Selecting $\AA_{i_*}$}
Starting at $\tau_* + 1$, $\AA_{i_*}$ is selected. Note that the condition in Lemma~\ref{thm::test} holds under event $E$, so \ms{} will never reject $i_*$. Then
\eq{
\regret_{\tau_* + 1 : T} & \leq \sum_{j \in [i_* + 1, L]} H |\Tau^j_T| + \sum_{t \in  \Tau^{i_*}_T} V^* - V^{\pi_t} \\
& \leq  \sum_{j \in [i_* + 1, L]} H |\Tau^j_T| + O  \left( \RR_{i_*}(d_{i_*}, H, \log(T/\delta) \cdot  \sqrt{T}  \right) \\
& \leq  O  \left( H L T^{1 - \kappa} +  \RR_{i_*}(d_{i_*}, H, \log(T/\delta) \cdot  \sqrt{T}  \right)
}
Adding the terms from these three phases gives the final bound:
\eq{
\regret_T = O \left(    
H L^{\frac{2}{1 - \kappa}} \log^{\frac{1}{1 - \kappa}}(1/\delta)
+ HL T^{1 - \kappa} + H i_* 
+ \left( \RR_{i_*}(d_{i_*}, H, \log(T/\delta)) + H \log^{1/2}(1/\delta)  \right) \cdot i_*^{\frac{1 - \kappa}{2}} L^{1/2}  \cdot T^{\frac{1 + \kappa }{2}}
\right) 
}
Then we choose $\kappa = 1/3$ to recover the statement in the theorem.
\end{proof}

\subsection{Proof of Theorem~\ref{thm::id}}

Here, we restate an prove Theorem~\ref{thm::id}.
\idTheorem*
\begin{proof}
First we will show that the sufficient events to prove this result occur with high probability. While the other events remain the same. we must modify  event $E_2$ from Lemma~\ref{lem::event} slightly because we are interested in the case when all algorithms are compatible with respect to their own policy classes. Let $E_2'$ denote the following event: for all $t \in [T]$ and $i \in [L]$,
\eq{
\textstyle
 \sum_{t' \in \Tau^{i}_{t}} V^*_i - V^{\pi_{t'}}  
 \leq \RR_{i}^{\Pi_i}(d_{i}, H, \log(T/\delta))  \sqrt{| \Tau_{t}^{i} | } 
}
As in Lemma~\ref{lem::event}, this almost follows from Definition~\ref{def::realize2}; however, we also union bound over all algorithms. Thus $E_2'$ occurs with probability at least $1 - LT\delta$. Let $E_1' = E_1$ and $E_3' = E_3$. Then $E' = \bigcap_{i \in {1, 2,3}} E_i'$ occurs with probability at least $1  - 10 LT^2\delta \log_2 T$, as before.

Recall that $i_* = \min B_*$ where $B_*$ is the set of indices that achieve maximal value, $\argmax_i V^*_i$. For shorthand, we will let $\RR_{j} := \RR_j^{\Pi_j} (d_j, H, \log(T/\delta))$.
We now verify that the statistical test will not fail once \ms{} reaches some $i_* \in B_*$. This is nearly identical to Lemma~\ref{thm::test}, but we must verify it with respect to values that are not the optimal value.
\begin{lemma}\label{lem::test2}
Let $(\AA_i, \FF_i)$ be an $\RR^{\Pi_i}_i$-compatible algorithm with respect to $\Pi_i$ for all $i \in [L]$ and let $i_* = \min B_*$.
Given that event $E'$ holds and $t \geq \tau_{\min}(\delta)$, then, for all $j \in [i_* + 1, L]$, it holds that $\GG_{t}(i_*, j) \leq \WW(\abs{\Tau_t^{i_*}}, \RR_{i_*}, d_{i_*}, \delta)$.
\end{lemma}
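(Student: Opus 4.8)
The plan is to reproduce the proof of Lemma~\ref{thm::test} almost line for line, making two substitutions: the global optimum $V^*$ is replaced throughout by the best-in-class value $V_{i_*}^*$, and the regret event $E_2$ is replaced by its best-in-class analogue $E_2'$. Fixing $j \in [i_*+1, L]$ and conditioning on $E'$ with $t \geq \tau_{\min}(\delta)$, I would first write each return as $g_{t'} = V^{\pi_{t'}} + \epsilon_{t'}$ and expand $\GG_t(i_*, j)$ into a difference of scaled value sums plus conditionally zero-mean error sums, exactly as in the earlier proof.

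The only genuinely new step --- and the crux of the whole argument --- is the inequality used to discharge the value terms coming from $\AA_j$. Since $\AA_j$ plays policies in $\Pi_j$, each such policy satisfies $V^{\pi_{t'}} \leq V_j^*$; and because $i_* \in B_* = \argmax_i V_i^*$, we have $V_j^* \leq V_{i_*}^*$. These chain to $V^{\pi_{t'}} \leq V_{i_*}^*$ for every $t' \in \Tau_t^j$, so the scaled $j$-sum of values is at most $\abs{\Tau_t^{i_*}} V_{i_*}^*$. Combining with the $i_*$ value-sum collapses the leading terms into $\sum_{t' \in \Tau_t^{i_*}} (V_{i_*}^* - V^{\pi_{t'}})$. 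This is exactly where realizability against $V^*$ is unavailable and is instead replaced by the maximality of $V_{i_*}^*$ over the candidate classes.

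From there the argument is mechanical and mirrors Lemma~\ref{thm::test}: event $E_2'$ bounds $\sum_{t' \in \Tau_t^{i_*}} (V_{i_*}^* - V^{\pi_{t'}})$ by $\RR_{i_*} \sqrt{\abs{\Tau_t^{i_*}}}$; event $E_3'$ controls the two error sums; and event $E_1'$, together with $j > i_*$ and $t \geq \tau_{\min}(\delta)$, gives the lower bound $\abs{\Tau_t^j} \geq \abs{\Tau_t^{i_*}}^{1-\kappa}/(8L)$ needed to turn the cross term into an $H\sqrt{L \abs{\Tau_t^{i_*}}^{1+\kappa}\log(1/\delta)}$ contribution. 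Matching the resulting three terms against the three summands defining $\WW$ yields $\GG_t(i_*, j) \leq \WW(\abs{\Tau_t^{i_*}}, \RR_{i_*}, d_{i_*}, \delta)$ for $C_\WW$ large enough.

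I do not expect a serious obstacle here, since all concentration and counting steps are identical to those already established for Lemma~\ref{thm::test}. The one point requiring care is the two-step bound $V^{\pi_{t'}} \leq V_j^* \leq V_{i_*}^*$, whose validity rests precisely on the definition $i_* = \min B_*$ (so that $V_{i_*}^*$ is maximal among all $V_j^*$); this is the sole place where the instance-dependent setting differs structurally from the original lemma.
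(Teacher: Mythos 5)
Your proposal is correct and takes essentially the same approach as the paper: the paper's proof expands $\GG_t(i_*,j)$ identically, replaces $V^*$ by $V_{i_*}^*$ via the maximality $V_{i_*}^* = \max_i V_i^*$ (exactly your chain $V^{\pi_{t'}} \leq V_j^* \leq V_{i_*}^*$, which the paper states in one line), and then defers to the remainder of Lemma~\ref{thm::test} under the events of $E'$. Your write-up merely makes explicit the two-step value bound and the role of $E_1'$, $E_2'$, $E_3'$ that the paper leaves implicit.
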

\begin{proof}
From the definition of $\GG$,
\eq{
 \GG_t(i_*, j) & =  \frac{\abs{\Tau_t^{i_*}}}{\abs{\Tau_t^{j}}} \sum_{ t' \in \Tau^{j}_t} g_{t'} - \sum_{ t' \in \Tau^{{i_*}}_t} g_{t'} \\ 
\!\!\!\!\!\!& = \frac{\abs{\Tau_t^{i_*}}}{\abs{\Tau_t^{j}}} \sum_{ t' \in \Tau^{j}_t}  \left( V^{\pi_{t'}} + \epsilon_{t'} \right) - \sum_{ t' \in \Tau^{i_*}_t}  \left( V^{\pi_{t'}}  + \epsilon_{t'}\right) 
\\
 & \leq  \sum_{ t' \in \Tau^{i_*}_t}  \left( V^*_{i_*} - V^{\pi_{t'}} \right)  +  \frac{\abs{\Tau_t^{i_*}}}{\abs{\Tau_t^{j}}} \sum_{ t' \in \Tau^{j}_t} \epsilon_{t'} - \sum_{ t' \in \Tau^{i_*}_t} \epsilon_{t'} 
 }
where the last step uses the fact that $V_{i_*}^* = \max_i V_i^*$. Since $(\AA_{i_*}, \FF_{i_*})$ is $\RR_{i_*}^{\Pi_{i_*}}$-compatible, the remainder of the proof is identical to that of Lemma~\ref{thm::test} by applying the conditions in $E'$.
\end{proof}

As before, in the full proof we handle three cases: (1) before the test is valid, (2) while $i < i_*$ is chosen, (3) after $i_*$ is chosen. In the first case, we again pay the burn-in period regret of $\regret_{1:\tau_{\min}(\delta) - 1} =     O(H\tau_{\min}(\delta))$. In the third, we showed that the test will never fail once $\hat\imath_t = i_*$. Therefore, $\regret_{\tau_*:T} =  O \left( HLT^{1-\kappa} + \RR_{i_*}\cdot \sqrt{T} \right)$.

To bound the regret during the misspecified phase, we construct an upper bound on the number of times $\AA_{j}$ can be played for $j < i_*$. Let $t$ be a time such that $\hat\imath_t = j < i_*$ and the test succeeds. First, we bound the size of the gaps.

Note that by definition $V_j^* \geq \frac{1}{\abs{\Tau_t^{j}}} \sum_{t' \in \Tau_t^{j}} V^{\pi_{t'}}$ and event $E'$ ensures that $V^*_{i_*} \leq \frac{\RR_{i_*}}{\abs{\Tau_t^{i_*}}^{1/2}} + \frac{1}{\abs{ \Tau^{i_*}_t }} \sum_{t' \in \Tau_t^{i_*}} V^{\pi_{t'}}$. Then,
\eq{
\Delta_{j, i_*} & = V_{i_*}^* - V_j^*  \\
    & \leq \frac{1}{\abs{ \Tau_t^{i_*} }} \sum_{t' \in \Tau_t^{i_*}} V^{\pi_{t'}} + \frac{\RR_{i_*}}{\abs{\Tau_t^{i_*}}^{1/2}} - \frac{1}{\abs{\Tau_t^{j}}} \sum_{t' \in \Tau_t^{j}} V^{\pi_{t'}} \\
    & = \frac{\RR_{i_*}}{\abs{\Tau_t^{i_*}}^{1/2}} + \frac{1}{\abs{ \Tau_t^{i_*} }} \sum_{t' \in \Tau_t^{i_*}} (g_{t'} - \epsilon_{t'})  - \frac{1}{\abs{\Tau_t^{j}}} \sum_{t' \in \Tau_t^{j}} (g_{t'} - \epsilon_{t'}) \\
    & \leq \frac{\WW (\abs{\Tau^{j}_t}, \RR_j, d_j, \delta)}{\abs{\Tau^j_t}} + \frac{\RR_{i_*}}{\abs{\Tau_t^{i_*}}^{1/2}} - \frac{1}{\abs{ \Tau_t^{i_*} }} \sum_{t' \in \Tau_t^{i_*}} \epsilon_{t'} + \frac{1}{\abs{\Tau_t^{j}}} \sum_{t' \in \Tau_t^{j}} \epsilon_{t'} \\
    & \leq C_{\WW} \cdot \left( \frac{\RR_{j}}{\abs{\Tau^j_t}^{1/2}} + H \sqrt{\frac{16 L \log(2/\delta) }{\abs{\Tau^{j}_t}^{1 - \kappa}}}  + H \sqrt{ \frac{2 \log(2/\delta) }{ \abs{\Tau^j_t} } }  \right) \\
    & \quad +  \frac{\RR_{i_*}}{\abs{\Tau_t^{i_*}}^{1/2}} + H \sqrt{\frac{2 \log(2/\delta) }{ \abs{\Tau^{i_*}_t } }}  + H \sqrt{\frac{2 \log(2/\delta) }{ \abs{\Tau^{j}_t } }} \\
}
where we have applied the definition of $\WW$ and event $E_3$ to bound the noise of the returns. Let $C_{\WW}' = \max\{1, C_{\WW}\}$. Since $i_*$ has not been selected yet $\abs{\Tau^{i_*}_t} \geq \frac{t^{1  - \kappa}}{8L } \geq \frac{\abs{\Tau^{j}_t}^{1 - \kappa}}{8L}$. Then, since $\RR_j \leq \RR_{i_*}$,
\eq{
\Delta_{j, i_*} & \leq C_{\WW}' \cdot  \left(  \frac{2\sqrt{8 L }\RR_{i_*}}{\abs{ \Tau^j_t}^{\frac{1 - \kappa}{2}} } 
 + H \frac{ 2\sqrt{16 L \log(2/\delta)} }{\abs{\Tau^j_t}^{\frac{1 - \kappa}{2}}} 
\right)
}
Rearranging gives
\eq{
\abs{\Tau^j_t } & = O \left( \frac{L^{\frac{1}{1  - \kappa}}\left( \RR_{i_*} + H \log^{1/2}(1/\delta) \right)^{\frac{2}{1  - \kappa}} }{\Delta_{j, i_*}^{^{\frac{2}{1  - \kappa}}}}  \right) 
}
Now this bound can be used to bounding the regret with dependence on the gap. The regret during this phase is again
\eq{
\regret_{\tau_{\min}(\delta) :\tau_*} & \leq   H (L - i_*) \tau_*^{1- \kappa} +   \sum_{j < i_*} \sum_{t \in \Tau^j_{\tau_{j + 1}}} V^*_{i_*} - V^{\pi_t} \\
& \leq H (L - i_*) \tau_*^{1- \kappa} +  H i_* +  \sum_{j < i_*} \sum_{t \in \Tau^j_{\tau_{j + 1} - 1}} V^*_{i_*} - V^{\pi_t} 
}
As in the proof of Theorem~\ref{thm::main}, we focus on bounding the right-hand term. For a fixed $j < i_*$, at time $\tau_{j + 1} - 1$ we have that the test succeeds so $\GG_{\tau_{j + 1} - 1}(j, i_*) \leq \WW(\abs{\Tau^j_{\tau_{j + 1} - 1}}, \RR_{i_*}, d_{i_*}, \delta)$. Then, applying the bound on the number of times $j$ can be played, 
\eq{
\sum_{t \in \Tau^j_{\tau_{j + 1} - 1}} V^*_{i_*} - V^{\pi_t}  & \leq \Delta_{j, i_*} \abs{\Tau^j_{\tau_{j + 1} - 1}} + \RR_{j} \cdot \sqrt{\abs{\Tau^j_{\tau_{j + 1} - 1}}}  \\
& \leq O \left( \frac{L^{\frac{1}{1  - \kappa}} \left( \RR_{i_*} + H \log^{1/2}(1/\delta)\right)^{\frac{2}{1  - \kappa}} }{\Delta_{j, i_*}^{{\frac{1 + \kappa}{1  - \kappa}}}}  +  
\frac{\RR_{i_*} L^{\frac{1}{2(1  - \kappa)}} \left(\RR_{i_*} + H \log^{1/2}(1/\delta)\right)^{\frac{1}{1  - \kappa}} }{\Delta_{j, i_*}^{^{\frac{1}{1  - \kappa}}}} \right)  \\
&  = O \left( \frac{L^{\frac{1}{1 - \kappa}} \left(\RR_{i_*} + H \log^{1/2}(1/\delta)\right)^{\frac{2}{1 -\kappa}} }{\Delta_{j, i_*}^{\frac{1 + \kappa}{1 - \kappa}}}    \right) 
}
Therefore, the regret in this phase can be upper bounded by 
\eq{
\regret_{\tau_{\min}(\delta) :\tau_*} & \leq  O \left(  H(L - i_*)T^{1 - \kappa} + Hi_* + L^{\frac{1}{1 - \kappa}} \left( \RR_{i_*} + H\log^{1/2}(1/\delta) \right)^{\frac{2}{1 - \kappa}} \sum_{j < i_*} \frac{1}{\Delta_{j, i_*}^{\frac{1 + \kappa}{1 - \kappa}}} \right) 
}
Combining these three phases, the total regret is
\eq{
O \left(    
H L^{\frac{2}{1 - \kappa}} \log^{\frac{1}{1 - \kappa}}(1/\delta)
+ HL T^{1 - \kappa} + H i_* 
+ L^{\frac{1}{1 - \kappa}} \left( \RR_{i_*} + H\log^{1/2}(1/\delta) \right)^{\frac{2}{1 - \kappa}} \sum_{j < i_*} \frac{1}{\Delta_{j, i_*}^{\frac{1 + \kappa}{1 - \kappa}}}
+ \RR_{i_*} \sqrt{T}
\right) 
}
Choosing either $\kappa = 1/3$ or $\kappa = 1/2$ gives us the statements of Theorem~\ref{thm::id}.
This completes the proof.
\end{proof}

\section{FREEDMAN INEQUALITY}\label{sec::freedman}

In this section, we use a Freedman inequality to lower and upper bound with high probability the number of times a particular algorithm is played both during exploration and while it is chosen by the meta-algorithm (Lemma~\ref{lem::event}).
First, we state a variant of the Freedman inequality from \cite{bartlett2008high}.

\begin{lemma}[Lemma 2,~\cite{bartlett2008high}]\label{lemma::high_probability_freedman}
Suppose $X_1, \cdots, X_T$ is a martingale difference sequence with $| X_s | \leq b$. We define
\begin{equation*}
    \mathrm{Var}_s X_s = \mathbf{Var}( X_s | X_1, \cdots, X_{s-1})
\end{equation*}
Further, let $V_T = \sum_{s=1}^T  \mathrm{Var}_s X_s$ be the sum of conditional variances of $X_s'$s, and $\sigma_T = \sqrt{V_T}$. Then we have, for any choice of $\delta < 1/e$ and $T \geq 4$:
\begin{equation}\label{equation::freedman_inequality_equation}
    \mathbb{P}\left( \sum_{s=1}^T X_s > 2\max(2\sigma_T, b\sqrt{\ln(1/\delta)} )\sqrt{\ln(1/\delta)}    \right) \leq \log_2(T) \delta
\end{equation}
\end{lemma}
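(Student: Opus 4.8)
This is the standard uniform-in-variance (Freedman / empirical-Bernstein) tail bound, and the plan is to derive it from the classical \emph{fixed-budget} Freedman inequality by a peeling argument over the random conditional-variance sum $V_T$. The classical ingredient I would invoke is that for a martingale difference sequence with $|X_s| \le b$ and any \emph{fixed} budget $v > 0$ and level $\lambda > 0$,
$$\mathbb{P}\!\left(\sum_{s=1}^T X_s \ge \lambda,\ V_T \le v\right) \le \exp\!\left(-\frac{\lambda^2}{2\left(v + b\lambda/3\right)}\right),$$
which itself follows from the exponential supermartingale $\exp\big(\eta \sum_{s\le t} X_s - \psi(\eta)\sum_{s\le t}\mathrm{Var}_s X_s\big)$ with $\psi(\eta) = \eta^2/(2(1 - b\eta/3))$, optional stopping, and optimization over $\eta$.

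First I would discretize the range of $V_T$. Since $\mathrm{Var}_s X_s \le b^2$, we have $V_T \in [0, Tb^2]$ deterministically, so I set a dyadic grid $v_k = b^2 \ln(1/\delta)\,2^{k}$ for $k = 0, 1, \dots, K$ with $K = O(\log_2 T)$ chosen so that $v_K \ge Tb^2$, put $v_{-1} = 0$, and consider the bins $(v_{k-1}, v_k]$. The bin $k = 0$ is the low-variance ``floor'' in which the $b\sqrt{\ln(1/\delta)}$ branch of the maximum in the statement is active, so it needs no separate treatment.

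Next, on the event that $V_T$ lies in bin $(v_{k-1}, v_k]$, I would apply the fixed-budget inequality with budget $v_k$ (the upper edge) and level $\lambda_k = 2\max(2\sqrt{v_{k-1}},\, b\sqrt{\ln(1/\delta)})\sqrt{\ln(1/\delta)}$ (built from the lower edge). Two facts then combine. First, since $\sigma_T = \sqrt{V_T} \ge \sqrt{v_{k-1}}$ on this bin, the statement's threshold $2\max(2\sigma_T, b\sqrt{\ln(1/\delta)})\sqrt{\ln(1/\delta)}$ is at least $\lambda_k$, so the target deviation restricted to this bin is contained in the fixed-budget event $\{\sum_s X_s \ge \lambda_k,\ V_T \le v_k\}$. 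Second, the single dyadic gap $v_k = 2 v_{k-1}$ together with the leading factor $2$ in the maximum forces the exponent $\lambda_k^2/(2(v_k + b\lambda_k/3)) \ge \ln(1/\delta)$ in both branches of the max, so each bin contributes failure probability at most $\delta$. A union bound over the $K+1 = O(\log_2 T)$ bins then yields total failure probability at most $\log_2(T)\,\delta$, as claimed.

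The main obstacle is the peeling bookkeeping rather than any deep probabilistic idea. The delicate point is the asymmetric use of the two edges: the level $\lambda_k$ must be built from the bin's \emph{lower} edge so that the random $\sigma_T$ dominates it and the target event is captured, while the variance term fed into the fixed-budget inequality must be the \emph{upper} edge $v_k$; verifying that the resulting exponent still clears $\ln(1/\delta)$ then relies on the factor-$2$ dyadic spacing and on the leading constant $2$ in the ``$2\max$'', both of which are exactly what the statement's threshold form supplies. Pinning the grid down to only $O(\log_2 T)$ bins (using $V_T \le Tb^2$) and checking the exponent bound simultaneously in the high-variance branch and in the floor bin are the two calculations I would carry out most carefully.
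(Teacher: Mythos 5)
This lemma is not proved in the paper at all: it is imported verbatim as Lemma~2 of \cite{bartlett2008high}, so there is no in-paper proof to compare against. Your peeling argument is, in substance, exactly the proof given in that cited source: a fixed-budget Freedman/Bernstein martingale inequality, a dyadic discretization of the (random) variance with a floor at $b^2\ln(1/\delta)$, containment of the target event in each bin's fixed-budget event (lower edge for the level, upper edge for the budget), verification that the exponent clears $\ln(1/\delta)$ in both branches of the max, and a union bound over the bins. Your exponent calculations are correct: in the floor bin one gets $\tfrac{6}{5}\ln(1/\delta)$, and in the higher bins the factor $2$ in ``$2\max$'' absorbs the dyadic spacing, as you say.

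The one quantitative slip is the bin count. With your grid, dyadic \emph{in the variance} ($v_k = 2v_{k-1}$), covering $[0, Tb^2]$ requires $K+1 \approx \log_2 T + 1$ bins, so the union bound delivers roughly $(\log_2 T + 1)\,\delta$, which overshoots the stated $\log_2(T)\,\delta$; your own closing sentence flags this as the thing to check, but as written the constant does not come out. The fix, which is what \cite{bartlett2008high} actually does, is to peel dyadically in $\sigma_T = \sqrt{V_T}$ rather than in $V_T$: since $\sigma_T$ ranges over $[\,b\sqrt{\ln(1/\delta)},\, b\sqrt{T}\,]$ (using $\ln(1/\delta) > 1$), only about $\tfrac{1}{2}\log_2 T$ doublings are needed, and the factor $2$ inside ``$2\max(2\sigma_T,\cdot)$'' still exactly compensates the factor-$2$ spacing in $\sigma$ (now factor $4$ in variance), with exponent again $\tfrac{6}{5}\ln(1/\delta)$. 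This is a constant-factor bookkeeping repair, not a missing idea; with that substitution your argument is complete and matches the source's proof.
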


Recall that $B_s$ denotes the indices of algorithms that have not been selected by time $s$. Note that $\abs{B_s} \leq L$. For all $i \in [L]$ and $t \in [T]$, define the event
\eq{
\EE_{i, t} := \begin{cases}
 \abs{|\Tau^i_t| - \sum_{s \in [t]} \frac{1}{|B_s| s^\kappa} }  \leq 4 \sqrt{ \sum_{s \in [t]} \frac{1}{s^\kappa}  \log(1/  \delta) }  & \tau_i \geq t \\
 \abs{|\Tau^i_t| - \sum_{s \in [\tau_i]} \frac{1}{|B_s| s^\kappa} - \sum_{s \in [\tau_{i} + 1, t] } \left(1 - \frac{1}{s^\kappa} \right) }   \leq 4 \sqrt{ \sum_{s \in [t]} \frac{1}{s^\kappa}  \log(1/  \delta) }  & \tau_i < t
\end{cases}
}

\begin{lemma}
The event $\EE = \cap_{i \in [L], t \in [T]} \EE_{i, t}$ holds with probability at least $1 - 4 LT^2 \delta \log_2 T$
\end{lemma}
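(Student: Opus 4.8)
The plan is to realize $\abs{\Tau^i_t}$ as a sum of Bernoulli indicators, apply the Freedman inequality (Lemma~\ref{lemma::high_probability_freedman}) to each fixed pair $(i,t)$, and then union bound. For a fixed $i \in [L]$, let $Y_s = \ind[\AA_i \text{ is chosen at round } s]$, so that $\abs{\Tau^i_t} = \sum_{s \in [t]} Y_s$. Let $\mathcal{F}_{s-1}$ denote the history before round $s$ and set $p_s = \E[Y_s \mid \mathcal{F}_{s-1}]$. Because the selection rule in Algorithm~\ref{alg::ms} is determined by $U_s$, $J_s$, and the current candidate, $p_s$ is $\mathcal{F}_{s-1}$-measurable and takes one of three values: $p_s = \frac{1}{\abs{B_s} s^\kappa}$ while $i \in B_s$ (i.e.\ $i$ is reachable only through exploration, roughly $s < \tau_i$), $p_s = 1 - \frac{1}{s^\kappa}$ while $\hat\imath_s = i$ (exploitation of $i$ as the candidate), and $p_s = 0$ once $i$ has been rejected. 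Thus $\sum_{s\in[t]} p_s$ coincides, up to $O(1)$ forced-exploration boundary effects, with the predictable compensator appearing in the definition of $\EE_{i,t}$ after splitting the sum at $\tau_i$; in the case $\tau_i < t$ where $i$ may later be rejected, the true compensator is only smaller than the stated expression $\sum_{s \in [\tau_i]}\frac{1}{\abs{B_s}s^\kappa} + \sum_{s \in [\tau_i+1,t]}(1-\tfrac{1}{s^\kappa})$, which still yields the one-sided upper bound that is actually used downstream in $E_1$.

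Next I would define the martingale difference sequence $X_s = Y_s - p_s$, for which $\sum_{s\in[t]} X_s = \abs{\Tau^i_t} - \sum_{s\in[t]}p_s$ is exactly the quantity the event controls. Since $Y_s \in \{0,1\}$ and $p_s \in [0,1]$ we have $\abs{X_s} \le 1 =: b$, and the conditional variance is $\mathrm{Var}_s X_s = p_s(1-p_s)$. The key computation is that in every phase this is at most $1/s^\kappa$: in the exploration phase $p_s \le \frac{1}{s^\kappa}$, and in the exploitation phase $p_s(1-p_s) = (1-\tfrac{1}{s^\kappa})\tfrac{1}{s^\kappa} \le \tfrac{1}{s^\kappa}$. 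Hence $V_t = \sum_{s\in[t]} \mathrm{Var}_s X_s \le \sum_{s\in[t]} \tfrac{1}{s^\kappa}$ and $\sigma_t = \sqrt{V_t} \le (\sum_{s\in[t]} s^{-\kappa})^{1/2}$.

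Applying Lemma~\ref{lemma::high_probability_freedman} to $X_s$ (and, for the lower tail, to $-X_s$, which is an identical MDS) bounds the two-sided deviation by $2\max(2\sigma_t, b\sqrt{\ln(1/\delta)})\sqrt{\ln(1/\delta)}$ with failure probability at most $2\log_2(t)\delta \le 2\log_2(T)\delta$. When the variance term dominates this is $4\sigma_t\sqrt{\ln(1/\delta)} \le 4\sqrt{(\sum_{s\in[t]}s^{-\kappa})\ln(1/\delta)}$, matching the bound in $\EE_{i,t}$; the residual regime where $b\sqrt{\ln(1/\delta)}$ dominates is absorbed into the constant using $\sum_{s\in[t]}s^{-\kappa}\ge 1$. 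Finally, a union bound over the $L$ choices of $i$ and $T$ choices of $t$ gives total failure probability at most $2LT\log_2(T)\delta \le 4LT^2\delta\log_2 T$, which proves the claim.

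I expect the main obstacle to be the first step: verifying that the predictable compensator $\sum_s p_s$ is correctly identified with the stated sums across the change of regime at the stopping time $\tau_i$ and the random normalizer $\abs{B_s}$, and that the discrepancies caused by forced exploration and by possible rejection of $i$ are of one sign or small enough to be swallowed by the $O(\sqrt{\log(1/\delta)})$ slack with the explicit constant $4$. Once $b$ and the uniform variance bound $\mathrm{Var}_s X_s \le 1/s^\kappa$ are in hand, the Freedman application and the union bound are mechanical, the only care being the translation of its $\max$-form tail into the clean $4\sqrt{\cdot}$ expression.
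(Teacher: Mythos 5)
Your proof is correct in substance but takes a genuinely different route from the paper's. The paper handles the random switch time $\tau_i$ by enumeration: for every pair of \emph{deterministic} times $(t,t')$ it defines a separate martingale $Z_i(t,t')$ built from Bernoulli draws that switch from the exploration rate $\frac{1}{|B_s|s^\kappa}$ to the exploitation rate $1-\frac{1}{s^\kappa}$ at the fixed time $t'$, applies Lemma~\ref{lemma::high_probability_freedman} to each of the $LT^2$ triples $(i,t,t')$, and only afterwards substitutes $t'=\tau_i$ --- legitimate because the bound holds simultaneously for all deterministic $t'$; this enumeration is exactly where the $T^2$ in the failure probability comes from. You instead work with the single adapted process $Y_s$ and its predictable compensator $p_s$, which switches at $\tau_i$ (and drops to zero upon rejection) automatically because $B_s$ and $\hat\imath_s$ are $\mathcal{F}_{s-1}$-measurable; one Freedman application per pair $(i,t)$ then suffices, and the union bound costs only $2LT\log_2(T)\delta \le 4LT^2\delta\log_2 T$. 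Your route is cleaner and more economical by a factor of $T$ in the union bound, and it also avoids the paper's slightly informal device of ``pre-drawn'' Bernoullis whose success probabilities $\frac{1}{s^\kappa|B_s|}$ are themselves history-dependent. Both arguments rest on the same variance bound $\mathrm{Var}_s \le 1/s^\kappa$, and both must confront the same two wrinkles, which you flag explicitly while the paper glosses over them: the one-term mismatch at $s=\tau_i$, and the fact that if $i$ is later rejected the stated centering only \emph{upper}-bounds the true compensator, so that only the one-sided inequality (the one actually used in event $E_1$ and Corollary~\ref{cor::set-sizes}) survives.

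One caveat: your claim that the regime where $b\sqrt{\ln(1/\delta)}$ dominates in Freedman's bound ``is absorbed into the constant using $\sum_{s\in[t]}s^{-\kappa}\ge 1$'' does not hold. In that regime the deviation bound is $2\ln(1/\delta)$, which exceeds $4\sqrt{(\sum_{s\in[t]}s^{-\kappa})\ln(1/\delta)}$ whenever $\ln(1/\delta) > 4\sum_{s\in[t]}s^{-\kappa}$, i.e.\ for small $t$ and the very small $\delta = \delta'/(10LT^2\log_2 T)$ used by the algorithm. The correct statement retains the additive term, giving a deviation of at most $4\sqrt{(\sum_{s\in[t]}s^{-\kappa})\ln(1/\delta)} + 4\ln(1/\delta)$ --- which is precisely what the paper's own proof concludes (its event definition omits the additive term, but its proof and the downstream Corollary~\ref{cor::set-sizes} both carry it). So this is a cosmetic defect shared with the lemma's statement rather than a flaw in your argument; simply keep the $+4\log(1/\delta)$ term rather than claiming it is absorbed.
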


\begin{proof}
Define
\eq{
S_i(t, t') & = \sum_{s \in [t']} Y_{s, i} + \sum_{s \in [t' + 1, t]} \overline Y_{s, i}
}
where $Y_{s, i} \sim \ber\left( \frac{1}{s^\kappa |B_s| }\right)$ and $\overline Y_{s, i} \sim \ber\left(1 - \frac{1}{s^\kappa} \right)$. 
Then define \eq{
Z_i(t, t') & := \sum_{s \in [t]}  \1_{s \leq t'} \cdot \left( Y_{s, i} - \frac{1}{|B_s| s^\kappa} \right) + \1_{s > t'} \left( \overline Y_{s, i} - \left(1 - \frac{1}{s^\kappa} \right) \right)  \\
V_i(t, t') & := \sum_{s \in [t]} \mathbf{Var}_s \left( \1_{t \leq t'} \cdot  \left( Y_{s, i} - \frac{1}{|B_s| s^\kappa} \right) + \1_{t > t'} \cdot \left( \overline Y_{s, i} - \left(1 - \frac{1}{s^\kappa} \right) \right)   \right)
}
where $\mathbf{Var}_s$ denotes the conditional variance up to time $s$.
By definition, $\{ Z_{i}(t, t')\}_{t \geq 1}$ is a martingale sequence and  $V_i(t, t') \leq \sum_{ s\in [t]} \frac{1}{s^\kappa}$. By the Freedman inequality from Lemma~\ref{lemma::high_probability_freedman},
\eq{
\Pr\left(
\abs{ Z_i(t, t') } \geq 4  \sqrt{   \sum_{ s \in [t]} \frac{1}{s^\kappa}  \cdot \log (1 / \delta) }  + 4\log(1/\delta)  
\right) & \leq 2 \delta \log_2 T
}
Let this event be denoted by $\overline \EE_i(t, t')$ for each $i \in [L]$ and $t, t' \in [T]$. Then, by the union bound, the event $\bigcup_{i, t, t'} \overline \EE_i(t,t')$ holds with probability at most $4 LT^2 \delta \log_2 T$.
Therefore, $\bigcap_{t, t' \geq 1} \EE_i(t, t')$ holds with probability at least $1 - 4LT^2 \delta \log_2 T$, and this event implies for all $i \in [L]$ and $t \in [T]$, if $t > \tau_i$, then
\eq{
\abs{|\Tau^i_t| - \sum_{s \in [\tau_i]} \frac{1}{|B_s| s^\kappa} - \sum_{s \in [\tau_{i} + 1, t] } \left(1 - \frac{1}{s^\kappa} \right) }   \leq 4 \sqrt{ \sum_{s \in [t]} \frac{1}{s^\kappa}  \log(1/  \delta) } + 4\log(1/\delta)
}
and if $\tau \leq \tau_i$, then
\eq{
\abs{|\Tau^i_t| - \sum_{s \in [t]} \frac{1}{|B_s| s^\kappa} }  \leq 4 \sqrt{ \sum_{s \in [t]} \frac{1}{s^\kappa}  \log(1/  \delta) }+ 4\log(1/\delta)
}
\end{proof}

\begin{corollary}\label{cor::set-sizes}
With probability at least $1 - 4 LT^2 \delta \log_2 T$, for all $i \in[L]$ and $t \in [T]$ such that $t \geq \tau_{\min}(\delta)$, the following is true:
\elist{
    \item If $t \leq \tau_i$, then $\frac{t^{1 - \kappa}}{8L} \leq \abs{\Tau^i_t} \leq 4t^{1-\kappa}$.
    \item If $t > \tau_i$, then $\abs{\Tau^i_t} \leq t - \tau_i + 4 t^{1 - \kappa}$.
}
\end{corollary}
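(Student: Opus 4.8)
The plan is to derive the corollary directly from the preceding concentration lemma (the one establishing that $\EE = \cap_{i,t}\EE_{i,t}$ holds with probability at least $1 - 4LT^2\delta\log_2 T$), which controls $\abs{\Tau^i_t}$ around its conditional mean up to an additive deviation of order $\sqrt{\sum_{s\in[t]} s^{-\kappa}\log(1/\delta)} + \log(1/\delta)$. I would condition on $\EE$ throughout, so that the claimed probability is inherited verbatim and the remaining work is entirely deterministic: translate the sums appearing in $\EE_{i,t}$ into clean powers of $t$, then invoke the burn-in condition $t \geq \tau_{\min}(\delta)$ to absorb the deviation terms.

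First I would record two elementary estimates for the harmonic-type sum. By monotonicity of $s \mapsto s^{-\kappa}$ we get $\sum_{s\in[t]} s^{-\kappa} \geq t \cdot t^{-\kappa} = t^{1-\kappa}$, and by integral comparison together with $\kappa \leq 1/2$ (so $1/(1-\kappa) \leq 2$) we get $\sum_{s\in[t]} s^{-\kappa} \leq 1 + \frac{t^{1-\kappa}-1}{1-\kappa} \leq 3 t^{1-\kappa}$. Combined with $1 \leq \abs{B_s} \leq L$, these give $\frac{t^{1-\kappa}}{L} \leq \sum_{s\in[t]} \frac{1}{\abs{B_s} s^\kappa} \leq 3 t^{1-\kappa}$, while the deviation is bounded by $4\sqrt{3\, t^{1-\kappa}\log(1/\delta)} + 4\log(1/\delta)$.

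For part 1 ($t \leq \tau_i$), the event $\EE_{i,t}$ places $\abs{\Tau^i_t}$ within the deviation of $\sum_{s\in[t]}\frac{1}{\abs{B_s}s^\kappa}$. The upper bound is immediate: $3t^{1-\kappa}$ plus a deviation that is $\leq t^{1-\kappa}$ gives at most $4t^{1-\kappa}$. The lower bound is the delicate direction, since from $\frac{t^{1-\kappa}}{L}$ I must subtract a deviation that carries no $1/L$ factor; I therefore need $\frac{t^{1-\kappa}}{L} - (\text{deviation}) \geq \frac{t^{1-\kappa}}{8L}$, i.e. the deviation $\leq \frac{7}{8}\frac{t^{1-\kappa}}{L}$. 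For part 2 ($t > \tau_i$), I would bound $\sum_{s\in[\tau_i]}\frac{1}{\abs{B_s}s^\kappa} \leq 3\tau_i^{1-\kappa} \leq 3t^{1-\kappa}$ and $\sum_{s\in[\tau_i+1,t]}(1 - s^{-\kappa}) \leq t-\tau_i$, so that $\abs{\Tau^i_t} \leq (t-\tau_i) + 3t^{1-\kappa} + (\text{deviation}) \leq t - \tau_i + 4t^{1-\kappa}$, again once the deviation is $\leq t^{1-\kappa}$.

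The main obstacle, and the reason for the precise form of $\tau_{\min}$, is discharging the deviation in the lower bound of part 1. The requirement deviation $\leq \frac{7}{8}\frac{t^{1-\kappa}}{L}$ forces $\sqrt{t^{1-\kappa}\log(1/\delta)} \lesssim t^{1-\kappa}/L$, equivalently $t^{1-\kappa} \gtrsim L^2\log(1/\delta)$. This is exactly what $\tau_{\min}(\delta) = C_{\min}\, L^{2/(1-\kappa)}\log^{1/(1-\kappa)}(1/\delta)$ delivers: for $t \geq \tau_{\min}(\delta)$ we have $t^{1-\kappa} \geq C_{\min}^{1-\kappa} L^2\log(1/\delta)$, so choosing $C_{\min}$ a large enough absolute constant simultaneously forces the deviation below $t^{1-\kappa}$ (handling both upper bounds and the additive $4\log(1/\delta)$ term) and below $\frac{7}{8}t^{1-\kappa}/L$ (handling the lower bound). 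I would close the argument by verifying these two numerical inequalities explicitly for the stated $C_{\min}$, which settles both cases.
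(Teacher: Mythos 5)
Your proposal is correct and follows essentially the same route as the paper's proof: condition on the Freedman event $\EE$, bound $\sum_{s\in[t]}\frac{1}{|B_s|s^\kappa}$ above and below by multiples of $t^{1-\kappa}/L$ and $t^{1-\kappa}$, and invoke $t \geq \tau_{\min}(\delta)$ (which forces $t^{1-\kappa} \gtrsim L^2\log(1/\delta)$) to absorb the deviation terms. The only differences are cosmetic: the paper splits the square-root deviation via AM--GM into half the mean plus an $O(L\log(1/\delta))$ term before applying the burn-in condition, whereas you compare the deviation directly against $\tfrac{7}{8}t^{1-\kappa}/L$, and you lower-bound the harmonic sum by counting terms rather than by integral comparison; both yield the same requirement on $C_{\min}$.
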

\begin{proof}
Note that when $t \leq \tau_i$, it is also the case that $\abs{B_s} \geq 1$ for all $s \leq t$. We condition on the event $\EE$ from above, which occurs with probability at least $1 - 4 LT^2 \delta \log_2 T$.
Given this event, it follows that if $t \leq \tau_i$, then
\eq{
\abs{\Tau^i_t} & \geq \sum_{s \in [t]}\frac{1}{s^\kappa |B_s| } - 4 \sqrt{ \sum_{s \in [t]} \frac{1}{s^\kappa}  \log(1/  \delta) } - 4 \log(1/\delta) \\
 & \geq  \frac{1}{2L}\sum_{s \in [t]}\frac{1}{s^\kappa  } - 32 L \log(1 /\delta) \\
 & \geq \frac{1}{2L} \left( t^{1 - \kappa} - 2 \right)  - 32 L \log(1 /\delta) \\
 & \geq \frac{t^{1 - \kappa}}{4L}    - 32 L \log(1 /\delta) \\
 & \geq \frac{t^{1 - \kappa}}{8L} 
} 
The second inequality uses the AM-GM inequality along with the fact that $|B_s| \leq L$, which implies
\eq{
\sqrt{ \sum_{s \in [t]} \frac{1}{L s^\kappa}  \cdot  16L  \log(1/  \delta) } & \leq \frac{1}{2L}\sum_{s \in [t]}\frac{1}{s^\kappa  } + 8L  \log(1/  \delta)
}
The third applies the integral approximation of the sum.  The last two follow from the condition that $t \geq \tau_{\min}(\delta) = C_{\min} \cdot L^{\frac{2}{1 - \kappa}} \log^{\frac{1}{1 - \kappa}} (1/\delta)$ for a large enough constant $C_{\min} > 0$.

The other side follows similarly with 
\eq{
\abs{\Tau_t^i} & \leq 3 t^{1-\kappa} + 32 \log(1/\delta)  \\
& \leq 4 t^{1 - \kappa}
}
when $t \geq \left(32 \log(1/\delta)\right)^{\frac{1}{1 - \kappa}}$.

Similarly, for $t > \tau_i$, event $\EE$ guarantees
\eq{
\abs{\Tau^i_t} & \leq \sum_{ s\in [\tau_i]} \frac{1}{s^\kappa |B_s|}  + \sum_{s \in [\tau_i + 1, t]} \left(1 - \frac{1}{s^\kappa}\right) + 4 \sqrt{ \sum_{s \in [t]} \frac{1}{s^\kappa}  \log(1/  \delta) } + 4 \log(1/\delta) \\
& \leq  t - \tau_i + 32 \log(1/\delta) + \frac{3}{2} \sum_{ s\in [\tau_i]} \frac{1}{s^\kappa } \\
& \leq t - \tau_i + 32 \log(1/\delta) + 3 t^{1-  \kappa} \\
& \leq t - \tau_i + 4 t^{1-  \kappa}
}
when $t \geq \tau_{\min}(\delta)$.

\end{proof}

\section{APPLICATIONS}\label{sec::applications}

In this section, we expand on the applications of Theorem~\ref{thm::main} to paradigms of function approximation in RL.

\paragraph{Linear MDPs} Consider the setting of \cite{jin2019provably} which we mentioned as an example in Section~\ref{sec::setting}.  
In this setting, we assume access to a set of nested features $\phi_{i} : \SSS \times \AA \to \R^{d_i}$ for $i \in [L]$ such that $d_{i} \leq d_{i + 1}$ and the first $d_i$ components of $\phi_{i + 1}$ are the same as $\phi_i$. These features generate linear model classes of the form
\eq{
\FF_i = \left\{ (s, a) \mapsto \< \phi_i(s, a), \theta\> \ : \ \theta \in \R^{d_i} \right\}
}
Nested-ness of the features ensures that $\FF_i \subseteq \FF_{i + 1}$ for all $i$. In accordance with the setting of \cite{jin2019provably}, we assume that there exists some minimal $i_*$ such that for any $\FF_i$ with $i \geq i_*$ there exist $\mu(\cdot)$ and $\omega_{i, h} \in \R^{d_i}$ that predict exactly the transition probabilities $P$ and reward $r$:
\begin{align}\label{eq::lsvi-ucb-assumption}
\begin{split}
	P(s' | s, u) & = \< \phi_i(s, u), \mu_i(s') \> \\
	r_h(s, u) & = \< \phi_i(s, u), \omega_{i, h}\>
\end{split}
\end{align}
Here, $\mu_i(\cdot)$ is a $d_i$-dimensional vector of measures on $\SSS$.
Let $\{ \AA_i\}$ be instances of LSVI-UCB equipped with the doubling trick and model classes $\{ \FF_{i}\}$. We further assume that the features and parameters for each of the models with $i \geq i_*$ satisfies the regularity conditions of Assumption A of \cite{jin2019provably}, i.e. bounded $\ell_2$ norms, $r \in [0, 1]$.

\cite{jin2019provably} guarantees that for $i \geq i_*$ and $t \in [T]$ with probability at least $1 - \delta_0$, $\regret_t(\AA_i) = O( \sqrt{d_i^3 H^4 t \cdot \log^2(d_i TH/\delta_0)})$. Adapting this to the framework of \ms{}, we let $\RR_{i} = O \left( \sqrt{d_i^3 H^4 \cdot \log^2(d_i TH/\delta)} \right)$, which ensures $\RR_{i} \leq \RR_{i + 1}$.
A model selection corollary immediately follows from Theorem~\ref{thm::main}.
\begin{corollary}
In the linear MDP setting of (\ref{eq::lsvi-ucb-assumption}) with LSVI-UCB, \ms{} guarantees with probability at least $1 - \delta'$
\eq{
\regret_T & = \widetilde O\left(  \sqrt{d_{i_*}^3 H^4 \log^2(d_{i_*} LTH/\delta') } \cdot L^{5/6} T^{2/3} \right)
}
\end{corollary}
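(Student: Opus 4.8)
The plan is to derive this as a direct instantiation of Theorem~\ref{thm::main}: (i) verify that the LSVI-UCB instances are $\RR_i$-compatible in the sense of Definition~\ref{def::realize} with an explicit coefficient and that their coefficients are ordered as in (\ref{eq::ordering}); and (ii) substitute that coefficient into the meta-algorithm's bound and simplify the $i_*$ and $L$ dependence.

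First I would establish compatibility. Nestedness of the features $\phi_i$ gives $\FF_i \subseteq \FF_{i+1}$, and by (\ref{eq::lsvi-ucb-assumption}) the class $\FF_{i_*}$ realizes $\MM$, as do all $\FF_i$ with $i \geq i_*$. Invoking the high-probability regret bound of \cite{jin2019provably}, namely $\regret_t(\AA_i) = O(\sqrt{d_i^3 H^4 t \log^2(d_i TH/\delta_0)})$ for $i \geq i_*$, I would set $\RR_i(d_i, H, \log(T/\delta)) = O(\sqrt{d_i^3 H^4 \log^2(d_i TH/\delta)})$. Because the bound of \cite{jin2019provably} is stated for a fixed horizon while Definition~\ref{def::realize} requires an anytime, all-$t$ guarantee, I would make it anytime via the doubling trick~\citep{cesa2006prediction}, union bounding the per-round guarantee over the $O(\log T)$ doubling epochs and absorbing the resulting constant into $\RR_i$. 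Since $d_1 \leq \cdots \leq d_L$ by nestedness, $\RR_i \leq \RR_{i+1}$, so the ordering (\ref{eq::ordering}) holds and the hypotheses of Theorem~\ref{thm::main} are met.

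Next I would substitute into Theorem~\ref{thm::main}. Its leading term is $\RR_{i_*}(d_{i_*}, H, \log(LT/\delta')) \cdot i_*^{1/3} L^{1/2} T^{2/3}$; plugging in the coefficient above and bounding $i_*^{1/3} L^{1/2} \leq L^{1/3} L^{1/2} = L^{5/6}$ via $i_* \leq L$ yields
\eq{
\widetilde O\left( \sqrt{d_{i_*}^3 H^4 \log^2(d_{i_*} LTH/\delta')} \cdot L^{5/6} T^{2/3} \right).
}
Replacing $\log(1/\delta)$ by $\log(LT/\delta')$ is valid up to the $\widetilde O$ since $\delta = \delta'/(10 L T^2 \log_2 T)$ gives $\log(1/\delta) = \widetilde O(\log(LT/\delta'))$; the high-probability statement then holds at level $1-\delta'$ exactly as in Theorem~\ref{thm::main}. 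The additive $HLT^{2/3}$ and burn-in contributions do not exceed this leading term up to the $\widetilde O$ (using $\RR_{i_*} \gtrsim H^2$), so they are absorbed.

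The step I expect to require the most care is (i): confirming that LSVI-UCB, after the doubling-trick modification, genuinely satisfies the anytime, all-$t$, high-probability form of Definition~\ref{def::realize} simultaneously for every $i \geq i_*$, since the raw guarantee of \cite{jin2019provably} is per-horizon and is conditioned on its regularity conditions (Assumption A of \cite{jin2019provably}), which must be assumed to transfer uniformly to all $i \geq i_*$. Everything downstream---the substitution of the explicit coefficient and the $i_*^{1/3} L^{1/2} \leq L^{5/6}$ simplification---is routine, so the corollary is essentially a plug-in of the general result.
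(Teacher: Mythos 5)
Your proposal is correct and follows essentially the same route as the paper: the paper likewise instantiates the LSVI-UCB regret bound as $\RR_i = O\bigl(\sqrt{d_i^3 H^4 \log^2(d_i TH/\delta)}\bigr)$, invokes the doubling trick for the anytime requirement, notes the ordering follows from nestedness of the features, and then states that the corollary "immediately follows" from Theorem~\ref{thm::main}. Your write-up simply makes explicit the plug-in steps the paper leaves implicit (the bound $i_*^{1/3} L^{1/2} \leq L^{5/6}$, the $\log(1/\delta)$-to-$\log(LT/\delta')$ conversion, and absorbing the $HLT^{2/3}$ term).
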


\cite{yang2019reinforcement} consider a similar setting of linear MDPs where the transition dynamics $P$ are linear. We again assume access to nested linear models but of the form
\eq{\FF_{i} = \left\{(s, u, s') \mapsto \phi_i(s,u)^\top M \psi_i(s')  \ : \ M \in \R^{d_i \times d_i'}\right\}
}
where $\{ \phi_i\}_{i \in [L]}$ and $\{ \psi_i \}_{i \in [L]}$ are nested features of dimension $d_i$ and $d_i'$ respectively. \cite{yang2019reinforcement} assume that there is some minimal $i_*$ such that for any $i \geq i_*$, there is $M \in \R^{d_i \times d_i'}$ such that
\begin{align}\label{eq::matrixrl-assumption}
P(s ' | s, u) & = \phi_i(s, u)^\top M \psi_i(s') 
\end{align}
for all $s, s' \in \SSS$, $u \in \UU$. We further adhere to the regularity conditions of Assumption 2 of \cite{yang2019reinforcement}, who guarantee the MatrixRL $\AA_i$ with model $\FF_{i}$ has regret $\regret_t(\AA_{i}) = \widetilde O \left( \sqrt{d^3_i H^5 t} \cdot \log(d_i TH / \delta_0) \right)$ with probability at least $1 - \delta_0$.
Letting $\RR_i = \widetilde O \left( \sqrt{d_i^3 H^5 } \cdot \log(d_i T H /\delta) \right)$, we have the following model selection guarantee.
\begin{corollary}
In the linear MDP setting of (\ref{eq::matrixrl-assumption}) with MatrixRL, \ms{} guarantees with probability at least $1 - \delta'$
\eq{
\regret_T & = \widetilde O\left(  \sqrt{d_{i_*}^3 H^5 \log^2(d_{i_*} LTH/\delta') } \cdot L^{5/6} T^{2/3} \right)
}
\end{corollary}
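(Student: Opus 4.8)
The plan is to obtain this corollary as a direct instantiation of Theorem~\ref{thm::main}; essentially all the work lies in verifying that MatrixRL fits the $\RR$-compatible template of Definition~\ref{def::realize}, after which the claimed bound follows by tracking logarithmic factors and constants through the theorem.

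First I would fix the nominal regret coefficient $\RR_i = \widetilde O(\sqrt{d_i^3 H^5}\cdot\log(d_i TH/\delta))$. By the guarantee of \cite{yang2019reinforcement}, whenever (\ref{eq::matrixrl-assumption}) holds for model $\FF_i$ (which by assumption is the case for every $i\ge i_*$), MatrixRL achieves $\regret_t(\AA_i)\le\RR_i\sqrt t$ with probability $1-\delta$. To meet the anytime requirement of Definition~\ref{def::realize} I would invoke the doubling trick noted after that definition, which promotes a fixed-horizon bound to one holding for all $t$ at the cost of constants. Thus each $(\AA_i,\FF_i)$ with $i\ge i_*$ is $\RR_i$-compatible, and in particular $(\AA_{i_*},\FF_{i_*})$ is, as the setting requires.

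Next I would check the ordering (\ref{eq::ordering}) and the well-definedness of $i_*$. Nestedness of the features means $\phi_i,\psi_i$ are the leading coordinates of $\phi_{i+1},\psi_{i+1}$, so any $M\in\R^{d_i\times d_i'}$ embeds as a zero-padded matrix in $\R^{d_{i+1}\times d_{i+1}'}$ representing the identical function; hence $\FF_i\subseteq\FF_{i+1}$, realizability propagates upward, and $i_*$ is well-defined as the minimal realizing index. Because $d_i\le d_{i+1}$ and $d_i'\le d_{i+1}'$, we get $\RR_i\le\RR_{i+1}$, so (\ref{eq::ordering}) holds. With both conditions verified, Theorem~\ref{thm::main} with $\kappa=1/3$ yields $\regret_T=\widetilde O(HLT^{2/3}+\RR_{i_*}\cdot i_*^{1/3}L^{1/2}T^{2/3})$ with probability $1-\delta'$.

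The final step is bookkeeping. I would substitute $\RR_{i_*}=\widetilde O(\sqrt{d_{i_*}^3 H^5}\,\log(d_{i_*}TH/\delta))$ with $\delta=\delta'/(10LT^2\log_2 T)$, so that $\log(d_{i_*}TH/\delta)=\widetilde O(\log(d_{i_*}LTH/\delta'))$ and the squared log combines with $\sqrt{d_{i_*}^3 H^5}$ into $\sqrt{d_{i_*}^3 H^5\log^2(d_{i_*}LTH/\delta')}$; bound $i_*^{1/3}L^{1/2}\le L^{1/3}L^{1/2}=L^{5/6}$ using $i_*\le L$; and drop the lower-order $HLT^{2/3}$ term, which is subsumed by the dominant term in the stated bound. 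This reproduces exactly the claimed guarantee. For a corollary the only genuine obstacle is this verification rather than any computation: I must confirm that MatrixRL's guarantee really has the precise anytime, high-probability, $\RR_i\sqrt t$-under-realizability form demanded by Definition~\ref{def::realize}, and that the failure-probability accounting between the base algorithm's $\delta_0$ and the internal $\delta$ used by \ms{} is consistent.
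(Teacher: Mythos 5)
Your proposal is correct and follows essentially the same route as the paper: the paper likewise instantiates Theorem~\ref{thm::main} with $\kappa=1/3$, taking $\RR_i = \widetilde O\left(\sqrt{d_i^3 H^5}\cdot\log(d_i TH/\delta)\right)$ from the MatrixRL guarantee of \cite{yang2019reinforcement} under the nested-feature realizability assumption (\ref{eq::matrixrl-assumption}), and bounding $i_*^{1/3}L^{1/2}\le L^{5/6}$. Your additional verifications (the doubling trick for the anytime property, nestedness implying the ordering (\ref{eq::ordering}) and well-definedness of $i_*$, and the $\delta$-to-$\delta'$ logarithmic bookkeeping) are exactly the steps the paper leaves implicit.
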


The final linear setting we consider is that of low inherent Bellman error studied by \cite{zanette2020learning}.
We let $\FF_i$ be defined as it is in (\ref{eq::linear-model}) and let $\BB = \{ \theta \in\R^{d_i} \ : \ \| \theta\| \leq D\}$ for some $D > 0$.  Then assume there is a minimal $i_*$ such that for any $i \geq i_*$ and $\theta_{h + 1} \in \BB$, there is $\theta_{h}$ such that
\eq{
\<\phi_i(s, u), \theta_h\> - \B_{h} Q_{h + 1}(\theta_{h + 1})(s, u) = 0
}
for all $s \in \SSS$ and $u \in \UU$, where $Q_{h}(\theta)$ is the linear action-value function parameterized by $\theta$ (with features $\phi_i$) and $\B_h$ is the Bellman operator with reward $r_h$. In other words, this condition asserts that $\FF_{i_*}$ has zero inherent Bellman error. Under the same regularity conditions, for $i \geq i_*$, \cite{zanette2020learning} guarantees ELEANOR achieves $\regret_t(\AA_i) = \widetilde O \left( d_i \sqrt{H^4 t} \right)$ with probability at least $1 - \delta_0$. 
Letting $\RR_i = \widetilde O \left( d_i\sqrt{ H^4 } \right)$, we have the following model selection guarantee.
\begin{corollary}
In the inherent Bellman error setting with ELEANOR, \ms{} guarantees with probability at least $1 - \delta'$
\eq{
\regret_T & = \widetilde O\left(  d_{i_*}\sqrt{ H^4  } \cdot L^{5/6} T^{2/3} \right)
}
\end{corollary}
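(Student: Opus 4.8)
The plan is to obtain this corollary as an essentially mechanical instantiation of Theorem~\ref{thm::main}, by checking that the ELEANOR instances meet the hypotheses of that theorem and then substituting the appropriate nominal regret coefficient into its conclusion. Concretely I would proceed in three steps: (i) establish $\RR_i$-compatibility of each pair $(\AA_i, \FF_i)$ for $i \geq i_*$ in the sense of Definition~\ref{def::realize}; (ii) verify the ordering condition~(\ref{eq::ordering}); and (iii) plug the coefficient $\RR_{i_*} = \widetilde O(d_{i_*}\sqrt{H^4})$ into the bound of Theorem~\ref{thm::main} with $\kappa = 1/3$ and simplify.

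For step (i), the realizability half of Definition~\ref{def::realize} is exactly the stated assumption that for $i \geq i_*$ the class $\FF_i$ has zero inherent Bellman error, i.e.\ that for every $\theta_{h+1} \in \BB$ there is $\theta_h$ with $\langle \phi_i(s,u), \theta_h\rangle = \B_h Q_{h+1}(\theta_{h+1})(s,u)$ for all $(s,u)$. The regret half follows from the guarantee of \cite{zanette2020learning}, which gives $\regret_t(\AA_i) = \widetilde O(d_i\sqrt{H^4 t})$ with probability at least $1-\delta_0$ for all $i \geq i_*$ under the stated regularity conditions. Factoring out $\sqrt t$ identifies the nominal coefficient $\RR_i = \widetilde O(d_i\sqrt{H^4})$, whose hidden factors are $\poly(\log(d_i T H/\delta))$ and hence of the form required by Definition~\ref{def::realize}. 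The one point needing care here is that Definition~\ref{def::realize} demands an \emph{anytime} bound holding simultaneously for all $t$; if ELEANOR's native guarantee is stated for a fixed horizon, I would upgrade it to an anytime bound via the doubling trick (exactly as done for LSVI-UCB), at the cost of constant factors.

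For step (ii), nestedness of the features ($\FF_i \subseteq \FF_{i+1}$ via $d_i \leq d_{i+1}$, with the first $d_i$ coordinates of $\phi_{i+1}$ equal to $\phi_i$) gives $\RR_i = \widetilde O(d_i\sqrt{H^4}) \leq \widetilde O(d_{i+1}\sqrt{H^4}) = \RR_{i+1}$, so (\ref{eq::ordering}) holds. For step (iii), Theorem~\ref{thm::main} with $\kappa = 1/3$ yields regret $\widetilde O(HLT^{2/3} + \RR_{i_*}(d_{i_*},H,\log(LT/\delta'))\cdot i_*^{1/3}L^{1/2}T^{2/3})$ with probability at least $1-\delta'$. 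Substituting $\RR_{i_*} = \widetilde O(d_{i_*}\sqrt{H^4})$ and using $i_* \leq L$ to bound $i_*^{1/3}L^{1/2} \leq L^{5/6}$ produces the claimed $\widetilde O(d_{i_*}\sqrt{H^4}\cdot L^{5/6}T^{2/3})$, the additive $HLT^{2/3}$ term (of the same $T^{2/3}$ order) being absorbed into the leading model-complexity-dependent term under $\widetilde O$.

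Since almost all the work is a direct invocation of Theorem~\ref{thm::main}, there is no genuine analytic difficulty; the only substantive checks are the two compatibility conditions of step (i). I expect the main (though minor) obstacle to be verifying the anytime requirement of Definition~\ref{def::realize}, which is resolved by the doubling trick, together with confirming that the zero-inherent-Bellman-error notion of realizability is precisely the condition under which \cite{zanette2020learning} prove their $\widetilde O(d_i\sqrt{H^4 t})$ bound.
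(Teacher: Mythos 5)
Your proposal is correct and follows essentially the same route as the paper: the paper likewise treats this corollary as a direct instantiation of Theorem~\ref{thm::main}, taking zero inherent Bellman error as the realizability condition, reading off $\RR_i = \widetilde O(d_i\sqrt{H^4})$ from the ELEANOR guarantee of \cite{zanette2020learning} (with nestedness giving the ordering~(\ref{eq::ordering})), and bounding $i_*^{1/3}L^{1/2} \leq L^{5/6}$. Your extra care about the anytime requirement via the doubling trick matches the paper's general remark in Section~\ref{sec::setting} and its explicit treatment of the other base algorithms.
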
 
where $\widetilde O$ hides polylog dependencies.

\paragraph{Low Bellman Rank} Another class of algorithms using more general function approximation considers the setting of MDPs with low Bellman rank \citep{jiang2017contextual}. In this setting, a finite model class $\FF : \SSS \times \UU \to \R$ realizes $\MM$ if there exists $f^* \in \FF$ such that $Q^{*}_h(s, a) = f^*(s, a)$, where $Q^*$ is the optimal action-value function for all $h\in[H]$. For any $f \in \FF$, define $\pi_f$ as the greedy policy with respect to $f$, and the Bellman error at $h \in [H]$ as
\eq{
\EE(f, \pi, h) := \E \left[ f(s, \pi_f(s)) - r(s, \pi_f(s)) - f(s', \pi_f(s'))   \right] ,
}
where the expectation is over $s$ from the state distribution of $\pi$ at $h$ and $s' \sim P(\cdot | s, \pi_f(s))$.
In this setting, it is assumed that there is a Bellman rank $M \ll |\FF|$ such that for any $f, g \in \FF$, we have $\EE(f, \pi_g, h) = \< \nu_h(g), \xi_h(f)\>$ for $\nu_h(g) , \xi_h(f) \in \R^M$ and $\|\nu\| \| \xi\| \leq \zeta$. We assume access to a set of finite model classes $\{ \FF_{i} \}_{i \in [L]}$ such that there is at least one that realizes $\MM$, and the complexity of $\FF_i$ is a function of its cardinality $| \FF_i |$ and induced Bellman rank $M_i$.
We consider instances of the AVE algorithm $\{ \AA_i \}$ of \cite{dong2020n} with the doubling trick, which has nominal regret $\widetilde{O}  \left( \sqrt{ M_{i}^2|\UU| H^4 t  \log^3 |\FF_{i} | } \right)$.
Choose $\RR_{\FF_i} = \widetilde{O} \left( \sqrt{M_i^2 |\UU| H^4\log^3(| \FF_{i} | )} \right)$ and let $i_*$ be the smallest index that realizes $\MM$.
This yields the following corollary. 

\begin{corollary}
In the low Bellman rank setting with AVE, the model selection algorithm guarantees with probability at least $1 - \delta'$
\eq{
\regret_T(\AA) & = \widetilde O\left(  \sqrt{M_{i_*}^2 |\UU| H^4\log^3(| \FF_{i_*} | )} \cdot L^{5/6} T^{2/3} \right) .
}
\end{corollary}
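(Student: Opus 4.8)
The plan is to derive this as a direct instantiation of Theorem~\ref{thm::main}, following the same recipe used for the LSVI-UCB, MatrixRL, and ELEANOR corollaries. The three ingredients I need are a nominal anytime regret bound for the base algorithm AVE under realizability, a regret coefficient $\RR_i$ extracted from that bound, and the ordering condition (\ref{eq::ordering}) among the candidates. First I would record the single-task guarantee of \cite{dong2020n}: each AVE instance $\AA_i$ equipped with the finite model class $\FF_i$ (of cardinality $|\FF_i|$ and induced Bellman rank $M_i$) that realizes $\MM$ enjoys, with probability at least $1 - \delta_0$, regret $\widetilde O(\sqrt{M_i^2 |\UU| H^4 t \log^3 |\FF_i|})$. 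Since this scales as a coefficient times $\sqrt{t}$, I set $\RR_{\FF_i} = \widetilde O(\sqrt{M_i^2 |\UU| H^4 \log^3 |\FF_i|})$ so that the bound matches the form $\RR_i \cdot \sqrt{t}$ demanded by Definition~\ref{def::realize}.

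The crux, and the step that requires checking rather than algebra, is confirming that $(\AA_i, \FF_i)$ is $\RR_i$-compatible in the precise sense of Definition~\ref{def::realize}. Two conditions must hold. (i) Realizability: by assumption there is a smallest index $i_*$ with $f^* \in \FF_{i_*}$ satisfying $f^* = Q^*_h$ for all $h \in [H]$ and low induced Bellman rank $M_{i_*} < |\FF_{i_*}|$, so $\FF_i$ realizes $\MM$ for every $i \geq i_*$. (ii) Anytime: Definition~\ref{def::realize} demands the bound hold simultaneously at every $t \in [T]$, which I obtain by running AVE with the doubling trick, inflating the bound only by constants. The ordering $\RR_i \leq \RR_{i+1}$ is then inherited from the complexity ordering of the candidates, as in (\ref{eq::ordering}).

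With compatibility verified, the result is immediate: apply Theorem~\ref{thm::main} with the prescribed $\kappa = 1/3$ to obtain regret $\widetilde O(HLT^{2/3} + \RR_{i_*} \cdot i_*^{1/3} L^{1/2} T^{2/3})$. Using $i_* \leq L$ gives $i_*^{1/3} L^{1/2} \leq L^{5/6}$, and the additive $HLT^{2/3}$ term is absorbed into the leading term inside $\widetilde O$ since $\RR_{i_*}$ already carries the horizon factor $H^2 = \sqrt{H^4}$. Substituting the explicit form of $\RR_{i_*}$ then yields the stated $\widetilde O(\sqrt{M_{i_*}^2 |\UU| H^4 \log^3 |\FF_{i_*}|} \cdot L^{5/6} T^{2/3})$. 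I expect essentially no technical obstacle beyond the bookkeeping of steps (i) and (ii); the genuine content lives in Theorem~\ref{thm::main}, and the corollary is a matter of matching the base algorithm to that interface.
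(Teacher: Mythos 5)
Your proposal is correct and follows essentially the same route as the paper: the paper likewise takes AVE with the doubling trick to obtain an anytime guarantee, sets $\RR_{\FF_i} = \widetilde{O}\left(\sqrt{M_i^2 |\UU| H^4 \log^3 |\FF_i|}\right)$, takes $i_*$ to be the smallest realizing index, and reads off the bound from Theorem~\ref{thm::main} with $i_*^{1/3} L^{1/2} \leq L^{5/6}$. Your additional bookkeeping (absorbing the $HLT^{2/3}$ term and verifying Definition~\ref{def::realize} explicitly) is consistent with what the paper leaves implicit.
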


\section{Implications of fast rates of estimating $V^*$ and/or gap between policy classes}\label{sec::v-star}

We previously discussed the recent results that prove PAC~\citep{modi2020sample} and regret~\citep{pacchiano2020model} results for model selection in RL given knowledge of $V^*$. We now show an analogous result for our setting. 
We use the framework of Algorithm~\ref{alg::ms} but set the probability of forced exploration to zero, i.e. set $\kappa = \infty$.
Then, the test is modified to check the following condition for eliminating model $\hat \imath_t$:
\eq{
\sum_{t' \in \Tau^{\hat \imath_t}_t} V^*  -  g_{t'} > \WW_{V^*}(\abs{\Tau^{\hat \imath_t}_t}, \RR_{\hat \imath}, d_{\hat\imath_t}, \delta)
}
where
\eq{
\WW_{V^*} (\Delta, \RR, d, \delta) & = C_{\WW} \cdot   \RR(d, H, \log(1/\delta)) \cdot \sqrt{\Delta}  \\
& \quad + C_\WW \cdot H \sqrt{  \Delta \cdot \log (1/\delta)  }
}
for a sufficiently large constant $C_{\WW_{V^*}} > 0$. The test effectively measures the regret of $\AA_{\hat \imath_t}$ up to noise in $g_t$ and rejects when we are confident that the regret does not match the nominal.

\begin{proposition}
Given side information of the optimal value $V^*$ for MDP $\MM$, the above model selection algorithm $\AA$ guarantees regret
\eq{
\regret_T(\AA) = \widetilde O \left( \RR_{i_*}(d_{i_*}, H, \log(LT/\delta')) \cdot \sqrt{LT}  \right)
}
with probability at least $1 - \delta'$.
\end{proposition}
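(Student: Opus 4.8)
The plan is to mirror the structure of the proof of Theorem~\ref{thm::main}, exploiting the fact that knowledge of $V^*$ lets the test measure the regret of the current candidate \emph{directly} — without having to estimate $V^*$ through a less-explored higher-order model — which is precisely what forces the $T^{2/3}$ rate in the general case. Since $\kappa = \infty$ makes $U_t \equiv 0$, there is no forced exploration: at every round the meta-algorithm plays its current candidate $\AA_{\hat\imath_t}$, so $\Tau^{\hat\imath_t}_t$ grows deterministically by one each round and the set-size control event $E_1$ is no longer needed. I would retain only events $E_2$ and $E_3$ from Lemma~\ref{lem::event}: the anytime compatibility bound $\sum_{t'\in\Tau^{i_*}_t} V^* - V^{\pi_{t'}} \leq \RR_{i_*}\sqrt{|\Tau^{i_*}_t|}$ and the martingale concentration $|\sum_{t'\in\Tau^j_t}\epsilon_{t'}| \leq H\sqrt{2|\Tau^j_t|\log(2/\delta)}$, each holding for all $t$ (and all $j$) by the same union bound, so their intersection holds with probability at least $1-\delta'$. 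With $\kappa=\infty$ the burn-in $\tau_{\min}(\delta)$ collapses to a constant and the slack in $\WW_{V^*}$ supplies the concentration margin, so the $t \geq \tau_{\min}$ gate is essentially vacuous and contributes only $O(H)$ regret.

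First I would prove the analogue of Lemma~\ref{thm::test}: the compatible pair $(\AA_{i_*},\FF_{i_*})$ is never rejected. Writing $\sum_{t'\in\Tau^{i_*}_t}(V^* - g_{t'}) = \sum_{t'}(V^* - V^{\pi_{t'}}) - \sum_{t'}\epsilon_{t'}$ and bounding the first sum by $E_2$ and the second by $E_3$, the left-hand side is at most $\RR_{i_*}\sqrt{|\Tau^{i_*}_t|} + H\sqrt{2|\Tau^{i_*}_t|\log(2/\delta)}$, which is dominated by $\WW_{V^*}(|\Tau^{i_*}_t|,\RR_{i_*},d_{i_*},\delta)$ for $C_{\WW}$ large. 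Hence the modified test never fires while $i_*$ is the candidate, so once \ms{} reaches $i_*$ it remains there.

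Next I would decompose the regret over the consecutive phases in which each $\hat\imath_t = j$ is active, for $j = 1,\dots,i_*$. For a misspecified or suboptimal $j < i_*$, let $t_j$ be the round at which $j$ is rejected. At the last round \emph{before} rejection the test did not fire, giving $\sum_{t'\in\Tau^j_{t_j-1}}(V^* - g_{t'}) \leq \WW_{V^*}(|\Tau^j_{t_j-1}|,\RR_j,d_j,\delta)$; adding back the noise via $E_3$ and using $\RR_j \leq \RR_{i_*}$ bounds the regret accrued while $j$ is active by $O(\RR_{i_*}\sqrt{|\Tau^j|} + H\sqrt{|\Tau^j|\log(1/\delta)})$, with one extra round of at most $H$ incurred at the rejection step itself. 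The crucial gain over Theorem~\ref{thm::main} is that this scales as $\sqrt{|\Tau^j|}$ rather than $|\Tau^j|^{(1+\kappa)/2}$, because the threshold $\WW_{V^*}$ uses the exact $V^*$ in place of a poorly-estimated empirical average from the under-sampled model $i_*$. The final phase, after $i_*$ is selected, contributes at most $\RR_{i_*}\sqrt{T}$ by $E_2$, and the per-switch rounds contribute at most $Hi_* \leq HL$.

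To conclude, with $n_j := |\Tau^j|$, $\sum_{j\le i_*} n_j \le T$, and at most $i_* \le L$ phases, Cauchy--Schwarz gives $\sum_{j\le i_*}\sqrt{n_j} \le \sqrt{i_* T} \le \sqrt{LT}$, so the dominant term is $\widetilde O(\RR_{i_*}\sqrt{LT})$ and the $H\sqrt{LT\log(1/\delta)}$ and $HL$ terms are absorbed since $\RR_{i_*}$ is polynomial in $H$ and $\log(LT/\delta')$. The main obstacle I anticipate is the bookkeeping around the rejection round: the test firing yields a \emph{lower} bound on the regret estimate (the wrong direction), so the regret of each $j<i_*$ must be controlled using the test's \emph{success} at the penultimate active round and then adding a single round of $O(H)$ slack — and one must separately handle the degenerate case where $j$ is rejected on its first active round, where the phase regret is trivially $O(H)$.
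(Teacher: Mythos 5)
Your proposal is correct and follows essentially the same route as the paper's proof: never-reject $i_*$ via events $E_2$, $E_3$ against the $\WW_{V^*}$ threshold, bound each misspecified phase $j<i_*$ by the test's success at the last round before rejection (giving $O\bigl(\bigl(\RR_{i_*}+H\log^{1/2}(1/\delta)\bigr)\sqrt{|\Tau^j|}\bigr)$ instead of $|\Tau^j|^{(1+\kappa)/2}$), then sum over phases via Jensen/Cauchy--Schwarz to get the $\sqrt{LT}$ factor. Your explicit handling of the rejection-round slack and the degenerate first-round rejection is just a more careful writing of the paper's $Hi_*$ switching-cost term.
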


\begin{proof}
The proof is identical to that of Theorem~\ref{thm::main} except for the handling of the misspecified case. For any model $j < i_*$ for which there is a time when the test succeeds, 
\eq{
\sum_{t \in \T^j_{\tau_{j + 1} - 1}} V^* - V^{\pi_t} & = \sum_{ t \in \T^j_{\tau_{j + 1} - 1}} (V^* - g_t ) + \sum_{t \in \T^j_{\tau_{j + 1} - 1}} \epsilon_t \\
& \leq \WW_{V^*}(\abs{\Tau^{j}_t}, \RR_{j}, d_{j}, \delta) + \sum_{t \in \T^j_{\tau_{j + 1} - 1}} \epsilon_t \\
& = O \left( \left( \RR_{i_*} + H \log^{1/2}(1/\delta) \right) \cdot \sqrt{  \abs{\Tau^{j}_t} } \right)
}
Summing over all $j < i_*$ and using Jensen's inequality again shows that the dominant term remains $O (\RR_{i_*} \sqrt{T})$ instead of $O (  \RR_{i_*} T^{2/3} )$.
\end{proof}

This regret optimally matches the regret of the base algorithms in both $\RR_{i_*}$ and $T$, but a dependence on $L$ is still included.

Unfortunately, it is unclear whether such an assumption of knowing $V^*$ is realistic in practice. An immediate alternative solution is to try to estimate $V^*$ without first finding the optimal policy. The original test in Section~\ref{sec::test} attempts this: the average returns of the algorithms in $B_t$ act as a noisy lower bound of $V^*$. The test, however, is sensitive to the amount of exploration allocated to the base algorithms, and, since we are comparing to the nominal regret, the flat dependence on $\RR$ is unlikely to improve. We hypothesize that better estimates of $V^*$ can significantly improve the model selection guarantee.

In the following subsections, we consider the implications of having access to fast estimators, either of the optimal value $V^* := V^*_{i_*}$ or \textit{gaps} between optimal values of different model orders, i.e. $\Delta_{i,j} := V^*_i - V^*_j$.
We employ our instance-dependent analysis to show that improved regret rates can be obtained in both cases when the gap between the value of the optimal policy class and others is relatively large (i.e. constant).
These consequences are demonstrated for the special case of linear contextual bandits, where such fast estimators are known to be available~\citep{dicker2014variance,verzelen2018adaptive,kong2018estimating,kong2020sublinear}.

\subsection{Implications for access to a fast rate of estimating \textit{gaps} in policy class optimal values}

\begin{figure}
	\begin{algorithm}[H]
		\caption{ Explore-Commit-Eliminate With Fast Gap Estimator And Forced Exploration Routines(\msgap{}) } \label{alg::ms2}
		\begin{algorithmic}[1]
			\STATE \textbf{Input}: $\{ \AA_i, \widetilde{\AA}_i, \FF_i,\VV_i, d_i \}_{i \in [L]}, T, \delta',\tau_{\min}(\cdot)$
			\STATE $\delta \leftarrow \frac{\delta'}{10LT^2 \log_2 T}$,  $\hat \imath_t \leftarrow 1$,  $\Tau^i_1 = \emptyset$ for $i \in [L]$, $B_1 = [2, L]$
			\STATE $U_t = \begin{cases}
										0 & \text{w.p. } 1 - \frac{1}{t^\kappa} \\
					1 & \text{w.p. } \frac{1}{t^\kappa} 
			\end{cases}$ for all $t \in [T]$.
			\FOR{ $t = 1, \ldots, T$ }
		
			\IF{ $U_t = 0$ }
				\STATE Set $j \leftarrow \hat\imath$.
			\ELSE
				\STATE Sample $J_t \sim \unif\{ B_t \}$
				\STATE Set $j \leftarrow J_t$
			\ENDIF
				\STATE $\Tau^{j}_t \leftarrow  \Tau^{j}_t \cup \{ t\}$ and $\Tau^{k}_t \leftarrow \Tau^{k}_t$ for all $k \neq j$.		
				\STATE IF $U_t = 0$: Rollout policy $\pi_t$ from $\AA_{j}$.
				\STATE ELSE: Rollout policy $\pi_t$ from $\widetilde{\AA}_j$.
				\STATE Observe $z_t := (s_{t, 1}, u_{t, 1}, \ldots, u_{t, H}, s_{t, H + 1})$ and $g_t := \sum_{h \in [H]} r_{t, h}$
				\STATE Update $\AA_{j}$ if $U_t = 0$, else update $\widetilde{\AA}_j$ with $t, z_t, g_t$
				\IF{ $t \geq \tau_{\min}(\delta)$ and there exists $j \in B_t$ such that $\widehat{\Delta}_{\hat\imath_t, j}(\Tau_t^j) > \ZZ(  | \Tau_{t}^{j}|, \VV_{j})$}
					\STATE $\hat \imath_{t + 1} \leftarrow \hat \imath_t + 1$
					\STATE $B_{t + 1} \leftarrow B_{t} \setminus \{ \hat \imath_{t+1} \}$
					\STATE If $\hat \imath_{t + 1} = L$, break and run $\AA_L$ to end of time
				\ELSE 
					\STATE $B_{t+1} = B_t$
				\ENDIF
\ENDFOR
		\end{algorithmic}
	\end{algorithm}
\vspace{-.75cm}
\end{figure}

We first consider the possibility of fast rates in estimating the \textit{gap} in optimal policy values, i.e. $\Delta_{i, j} := V^*_j - V^*_i$ for all $i < j$.
In this section, we show that a modification of our  $\mathsf{ECE}$ algorithm with a direct estimator of the gap in maximal values would yield improved model selection rates if there is a constant gap between all lower-order models and the true model, i.e. $\Delta_{i,i_*} > 0$ for all $i$.
Along with the replaced estimator, the radius of the statistical test is also modified according to the faster estimation error rate in the policy gap.
For the special case of linear contextual bandits, these modifications will correspond \textit{exactly} to the \textsf{ModCB} algorithm proposed by~\citet{foster2019model}.

Since our focus is on instance-dependent analysis, we carry over the assumptions from Section~\ref{sec::id}, and further assume model nested-ness in the sense that $V_j^* = V^*$ for $j \geq i_*$.
Thus, we get $\Delta_{i_*,i} = 0$ for all $i \geq i_*$, and $\Delta_{i, i_*} > 0$ for all $i < i_*$. To estimate the gap during exploration episodes, rather than running $\AA_i$ directly, we allow an exploration algorithm $\widetilde \AA_i$ to be run. In the case of \cite{foster2019model} for contextual bandits, this would be an exploration policy that picks an arm uniformly at random from the set of $K$ arms.
Finally, we make the following assumption on the estimation error rate of the gaps.

\begin{assumption}\label{as:fastrate}
For any $i < j$, we define $\widehat{\Delta}^{(n)}_{i,j}$ as an estimate of $\Delta_{i,j}$ that is a functional of the (context and reward) feedback obtained after running $n$ exploration episodes for $\widetilde{\AA}_j$.
Then, we say that our estimate is $\mathcal{V}_j := \mathcal{V}(d_j,H,\log(1/\delta))$-consistent if, for some positive constant $C > 1$, we have 
\begin{align}\label{eq:fastrate}
    |\widehat{\Delta}_{i,j}^{(n)} - \Delta_{i,j}| \leq \frac{\Delta_{i,j}}{C} + \frac{\VV_j}{\sqrt{n}} \text{ for all } n \in [T] \text{ and } i < j 
\end{align}
with probability at least $1 - \delta$.
As with the earlier definition\footnote{Similar to $\RR$, the definition of $\VV_j$ can be general and include other problem dependent parameters as well.}, $\mathcal{V_j}$ is poly and non-decreasing in $d_j$, $H$, $\abs{\UU}$, and $\log(LT/\delta))$.
\end{assumption}

The original estimator used in the $\ms$ algorithm satisfies the above assumption with $\mathcal{V} := \RR$.
In what follows, we want to exploit situations in which we have available an estimator $\widehat{\Delta}_{i,j}$ with guarantee $\mathcal{V} \ll \mathcal{R}$; in particular, the dependence of the function $\mathcal{V}$ on dimension $d$ could be significantly improved over any regret bound.
While constructing such estimators is in general a open problem in RL, we do have one example for the linear contextual bandit problem where this is known to be possible.

\begin{restatable}{example}{linearCBexample}[Linear contextual bandits.]
Consider the stochastic $d^{th}$-order linear contextual bandits model as in~\cite{chu2011contextual}, parameterized by $K$ context distributions $\{\Sigma_i\}_{i=1}^K$, reward parameter $\theta^* \in \mathbb{R}^d$, and $\sigma$-sub-Gaussian noise in the rewards.
Further, we carry over the assumptions from~\cite{foster2019model} of $\tau$-sub-Gaussianity of the contexts and $\lambda_{min}(\overline{\Sigma}) \geq \nu > 0$ where $\overline{\Sigma} := \frac{1}{K} \sum_{i=1}^K \Sigma_i$ is the action-averaged covariance matrix.
We assume that $\tau,\nu$ are universal positive constants.
Then, Assumption~\ref{as:fastrate} holds with the choice of forced exploration $\widetilde{\AA}_i$ that chooses arms uniformly at random from the set $[K]$ (regardless of round index $t$ and model index $i$), with the choices $C = 2$ and $\mathcal{V}_i(d_i, \log(1/\delta))$ scaling as $\widetilde{O}(d_i^{1/4})$ for the estimator based on the square loss gap, used in~\cite{foster2019model}. Meanwhile, the regret bound for the base algorithms (e.g. instances of Exp4-IX) would give $\RR_i$ scaling as $\widetilde O (d_i^{1/2})$.
Further, note that Algorithm 2 exactly becomes the \textsf{ModCB} algorithm for this case.
\end{restatable}

We now described the modified $\mathsf{ECE}$ algorithm, \msgap{}, to work with a plugged-in estimate of $\Delta_{i,j}$ with the above guarantees.
Note that the input now has extra ``exploration algorithms" $\widetilde{\AA}_i$, and what was earlier defined as regret bound leading factors, i.e. $\RR_i$, is replaced by $\VV_i$, the leading factors in the gap estimation error.
Importantly, we are now using the fast estimator $\widehat{\Delta}_{i,j}(t)$ in place of the earlier estimator $\GG_t(j,i)/|\Tau_t^j|$.

Moreover, the threshold is now defined as:
\eq{
\ZZ(n, \VV)  & := \frac{\VV}{\sqrt{n}}
}

Note that the threshold
\textit{is always applied to the more complex model} $d := d_i$ for $i > j$.
The algorithm is stated formally in Algorithm~\ref{alg::ms2}.
We derive the following instance-dependent result for this algorithm. 

\begin{proposition}\label{prop:fastrate}
For a given $\MM$, let Assumption~\ref{as:fastrate} hold and let $\{ \Delta_{i, i_*}\}_{i < i_*}$ be the gaps.
Then, with probability at least $1 - \delta'$, \msgap{} in Algorithm~\ref{alg::ms2} satisfies the regret bound 
\begin{align*}
    \widetilde{{O}}\left(HLT^{1 - \kappa} + \RR_{i_*}^{\Pi_{i_*}} \sqrt{LT} + \sum_{i = 1}^{i_* - 1} \min\{ L^{\frac{1}{1 - \kappa}} \VV_{i_*}^{\frac{2}{1 - \kappa}} \Delta_{i, i_*}^{- \frac{1 + \kappa}{1 - \kappa}}, \Delta_{i, i_*} T\}\right) ,
\end{align*}

where regret is measured with respect to the optimal value $V^*$.
\end{proposition}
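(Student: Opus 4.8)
The plan is to mirror the three-phase decomposition in the proof of Theorem~\ref{thm::id}, replacing the regret-based test with the fast-gap test and propagating the improved rate from Assumption~\ref{as:fastrate}. First I would fix the high-probability events on which everything is argued: the set-size event $E_1$ from Corollary~\ref{cor::set-sizes}; a best-in-class event $E_2'$ stating $\regret_t(\AA_i,\Pi_i;\MM)\leq \RR_i^{\Pi_i}\sqrt{|\Tau_t^i|}$ for all $i\in[L]$ and $t$ (via Definition~\ref{def::realize2} and a union bound over $[L]\times[T]$); and a gap-consistency event $E_{\mathrm{gap}}$ asserting that \eqref{eq:fastrate} holds for every $j\in[L]$, every $i<j$, and every $n\in[T]$ (union bounding Assumption~\ref{as:fastrate} over $j$). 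No return-concentration event is needed here, since regret is measured in terms of $V^{\pi_t}$ and the test uses the gap estimator directly rather than raw returns $g_t$. Each event holds with probability $1-\poly(L,T)\delta$, so with the choice $\delta=\delta'/(10LT^2\log_2 T)$ their intersection holds with probability at least $1-\delta'$.

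Next I would show the test never rejects $i_*$, the analogue of Lemma~\ref{lem::test2}. For $j\in[i_*+1,L]$, model nested-ness gives $V_j^*=V^*=V_{i_*}^*$ and hence $\Delta_{i_*,j}=0$; substituting into \eqref{eq:fastrate} with $n=|\Tau_t^j|$ yields $\widehat{\Delta}_{i_*,j}(\Tau_t^j)\leq \VV_j/\sqrt{|\Tau_t^j|}=\ZZ(|\Tau_t^j|,\VV_j)$, so the rejection condition fails. Consequently, once \ms{} reaches $i_*$ it stays there, giving a Phase-3 (post-selection) regret of $O(HLT^{1-\kappa}+\RR_{i_*}^{\Pi_{i_*}}\sqrt{T})$, where the first term is the crude $H$-per-round cost of the remaining forced exploration of $B_t$ (bounded model-by-model via $E_1$) and the second is the best-in-class regret of $\AA_{i_*}$ over its exploit rounds via $E_2'$. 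The Phase-1 burn-in contributes $O(H\tau_{\min}(\delta))$, a $T$-independent term absorbed by $\widetilde O$.

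The crux is Phase 2, bounding the regret while some $j<i_*$ is the candidate. Since $j$ is the candidate up to $\tau_{j+1}-1$, the test against $i_*\in B_t$ failed there, so $\widehat{\Delta}_{j,i_*}(\Tau_t^{i_*})\leq \VV_{i_*}/\sqrt{|\Tau_t^{i_*}|}$ with $n=|\Tau_t^{i_*}|$; combining with the lower tail $\widehat{\Delta}_{j,i_*}\geq \Delta_{j,i_*}(1-1/C)-\VV_{i_*}/\sqrt{n}$ of \eqref{eq:fastrate} gives $\Delta_{j,i_*}(1-1/C)\leq 2\VV_{i_*}/\sqrt{n}$, i.e. $|\Tau_{\tau_{j+1}-1}^{i_*}|=O(\VV_{i_*}^2/\Delta_{j,i_*}^2)$. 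As $i_*$ is not yet selected, $E_1$ gives $|\Tau_{\tau_{j+1}-1}^{i_*}|\geq (\tau_{j+1}-1)^{1-\kappa}/(8L)$, and rearranging bounds the interval during which $j$ is the candidate by $O\big(L^{1/(1-\kappa)}\VV_{i_*}^{2/(1-\kappa)}\Delta_{j,i_*}^{-2/(1-\kappa)}\big)$. I would then split the Phase-2 regret of $j$ into forced-exploration rounds (cost $H$ each, folded into $HLT^{1-\kappa}$) and exploit rounds, on which $V^*-V^{\pi_t}=\Delta_{j,i_*}+(V_j^*-V^{\pi_t})$; summing yields $\Delta_{j,i_*}\,n_{\mathrm{exploit},j}+\RR_j\sqrt{n_{\mathrm{exploit},j}}$. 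Using $n_{\mathrm{exploit},j}\leq \tau_{j+1}-1$ turns the first summand into the instance-dependent $L^{1/(1-\kappa)}\VV_{i_*}^{2/(1-\kappa)}\Delta_{j,i_*}^{-(1+\kappa)/(1-\kappa)}$ bound, while the trivial $n_{\mathrm{exploit},j}\leq T$ gives $\Delta_{j,i_*}T$; as both hold, the per-$j$ contribution is at most their minimum. Summing $\RR_j\sqrt{n_{\mathrm{exploit},j}}\leq \RR_{i_*}^{\Pi_{i_*}}\sqrt{n_{\mathrm{exploit},j}}$ over $j<i_*$ by Cauchy--Schwarz (with $\sum_j n_{\mathrm{exploit},j}\leq T$ and $i_*\leq L$) produces the $\RR_{i_*}^{\Pi_{i_*}}\sqrt{LT}$ term, and adding the three phases with the stated $\kappa$ finishes the proof. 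The main obstacle I anticipate is the careful separation of each model's exploration rounds (run under $\widetilde{\AA}_j$) from its exploit rounds (run under $\AA_j$): unlike Algorithm~\ref{alg::ms}, the base and exploration algorithms differ, so $E_2'$ may only be invoked on the exploit subsequence, and correctly attributing the gap cost $\Delta_{j,i_*}$ only to that subsequence while paying $H$ per exploration round is the delicate bookkeeping step.
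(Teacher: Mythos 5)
Your proposal is correct and follows essentially the same route as the paper's own proof: the same three-phase decomposition, the same use of nested-ness ($\Delta_{i_*,j}=0$ for $j>i_*$) to show the test never rejects $i_*$, the same combination of test success with the two-sided bound in Assumption~\ref{as:fastrate} to get $|\Tau_t^{i_*}| = O(\VV_{i_*}^2/\Delta_{j,i_*}^2)$, the same conversion via the set-size event to bound the candidate duration, and the same per-model regret split into forced exploration, gap cost, and best-in-class regret, finished by taking the minimum with $\Delta_{i,i_*}T$. Your extra care about invoking the best-in-class event only on exploit rounds (since exploration runs $\widetilde{\AA}_j$ rather than $\AA_j$) and dropping the return-concentration event are minor refinements of, not departures from, the paper's argument.
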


Before proving Proposition~\ref{prop:fastrate}, let us consider its implication for the linear contextual bandits setting, ignoring dependence on $K = \abs{\UU}$ for now.
Here, the modified \textsf{ECE} algorithm will essentially correspond to \textsf{ModCB}.

By choosing $\kappa = 1/3$ and using the gap estimator from \cite{foster2019model}, we can achieve an instance-dependent result with lower $d_{i_*}$ dependence than that of Theorem~\ref{thm::id} for the same setting of $\kappa$ under the assumption of constant gaps. Furthermore, in the case the case of variable gaps, this result can immediately imply a minimax guarantee that matches that of \cite{foster2019model}.

\begin{corollary}
For the linear contextual bandit problem, under the same setting as Corollary~\ref{cor::optimal-id}, with probability at least $1 - \delta'$, Algorithm~\ref{alg::ms2} with $\kappa = 1/3$ and constant gaps satisfies the instance-dependent regret bound
\begin{align}\label{eq::gap-instance1}
\widetilde{{O}} \left(LT^{2/3} + \sqrt{d_{i_*}L T} + L^{3/2} d_{i_*}^{3/4} \sum_{i < i_*} \Delta_{i,i_*}^{-2}\right) =
    \widetilde{{O}}\left(LT^{2/3} + \sqrt{d_{i_*}L T}\right).
\end{align}
Furthermore, for variable gaps, let $\regret_T(\AA; \MM, \{ \Delta_{i, i_*}\}_i)$ denote the regret as a function of the gaps. Since $\min\{L^{3/2}\VV_{i_*}^3 \Delta_{i_*,i}^{-2}, \Delta_{i_*,i} T\} \leq L^{1/2}\VV_{i_*} T^{2/3}$, \msgap{} also satisfies the minimax regret bound
\eq{
\sup_{\Delta_{i, i_*} > 0 \ : \ i < i_*} \regret_T\left( \msgap; \MM, \{\Delta_{i, i_*}\}_i\right) = \widetilde {O}\left( Ld_{i_*}^{1/4} T^{2/3} + \sqrt{d_{i_*}LT} \right).
}
\end{corollary}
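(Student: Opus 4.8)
The plan is to obtain both bounds as direct specializations of Proposition~\ref{prop:fastrate}, since the corollary is purely a matter of instantiating the abstract rate with the linear contextual bandit parameters and then performing two elementary reductions. First I would fix $\kappa = 1/3$ and record the exponents appearing in Proposition~\ref{prop:fastrate}: $1 - \kappa = 2/3$, $\frac{1}{1-\kappa} = 3/2$, $\frac{2}{1-\kappa} = 3$, and $\frac{1+\kappa}{1-\kappa} = 2$. I would then substitute the problem-specific quantities guaranteed by the linear contextual bandit Example: the single-step structure gives $H = 1$; the base algorithms (e.g.\ instances of Exp4-IX) satisfy $\RR_{i_*}^{\Pi_{i_*}} = \widetilde O(d_{i_*}^{1/2})$; and the square-loss gap estimator of \cite{foster2019model} verifies Assumption~\ref{as:fastrate} with $\VV_{i_*} = \widetilde O(d_{i_*}^{1/4})$. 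Plugging these in, the term $HLT^{1-\kappa}$ becomes $LT^{2/3}$, the term $\RR_{i_*}^{\Pi_{i_*}}\sqrt{LT}$ becomes $\sqrt{d_{i_*}LT}$, and since $\VV_{i_*}^{3} = d_{i_*}^{3/4}$ the summand $L^{3/2}\VV_{i_*}^{3}\Delta_{i,i_*}^{-2}$ becomes $L^{3/2} d_{i_*}^{3/4}\Delta_{i,i_*}^{-2}$. This reproduces exactly the instance-dependent expression on the left of \eqref{eq::gap-instance1}.

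The constant-gap simplification is then immediate: when each $\Delta_{i,i_*}$ is bounded below by a positive constant, the sum $L^{3/2} d_{i_*}^{3/4}\sum_{i<i_*}\Delta_{i,i_*}^{-2}$ is a fixed quantity depending only on $L$, $d_{i_*}$, $i_*$ and the gaps, with no dependence on $T$. It is therefore dominated, in the $\widetilde O$ sense, by the two $T$-growing terms $LT^{2/3} + \sqrt{d_{i_*}LT}$, which yields the right-hand side of \eqref{eq::gap-instance1}.

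For the variable-gap minimax bound I would invoke the elementary balancing inequality $\min\{a\Delta^{-2}, \Delta T\} \le a^{1/3}T^{2/3}$, valid for every $\Delta > 0$ and $a > 0$: the two arguments of the minimum coincide at $\Delta = (a/T)^{1/3}$, and the unconstrained supremum over $\Delta$ of the minimum is attained there with value $a^{1/3}T^{2/3}$. Applying this with $a = L^{3/2}\VV_{i_*}^{3} = L^{3/2}d_{i_*}^{3/4}$ gives $a^{1/3} = L^{1/2}\VV_{i_*} = L^{1/2}d_{i_*}^{1/4}$, so each summand is at most $L^{1/2}d_{i_*}^{1/4}T^{2/3}$ uniformly in the gaps. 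Summing over the at most $i_*$ misspecified indices and taking the supremum over $\{\Delta_{i,i_*} > 0\}$ then collapses the gap-dependent term, which together with $\sqrt{d_{i_*}LT}$ gives the stated minimax rate $\widetilde O(L d_{i_*}^{1/4}T^{2/3} + \sqrt{d_{i_*}LT})$.

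I do not expect any genuine obstacle in the corollary itself, as every step is either arithmetic substitution or the one-line balancing computation. The substantive content lives upstream and is imported rather than re-derived: the verification, via the linear contextual bandit Example and \cite{foster2019model}, that the square-loss estimator attains the improved rate $\VV_{i_*} = \widetilde O(d_{i_*}^{1/4})$ required by Assumption~\ref{as:fastrate} (strictly better than the $d^{1/2}$ regret rate $\RR_{i_*}$), and the correctness of Proposition~\ref{prop:fastrate}, whose misspecified-phase analysis is what produces the per-index $\min\{\cdot,\cdot\}$ structure being balanced here. The one bookkeeping subtlety I would watch is the exact power of $L$ carried by the summation over the $i_* - 1$ misspecified indices; I would reconcile this against the setting of Corollary~\ref{cor::optimal-id} so that the final $L$-dependence matches the statement.
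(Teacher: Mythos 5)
Your proposal is correct and takes essentially the same route as the paper, which treats this corollary as an immediate specialization of Proposition~\ref{prop:fastrate}: substitute $\kappa=1/3$, $H=1$, $\RR_{i_*}^{\Pi_{i_*}}=\widetilde{O}(d_{i_*}^{1/2})$, $\VV_{i_*}=\widetilde{O}(d_{i_*}^{1/4})$, use constant gaps together with $d_i \ll T$ to absorb the $T$-independent term, and apply the balancing inequality $\min\{a\Delta^{-2},\Delta T\}\le a^{1/3}T^{2/3}$ (stated inline in the corollary) for the minimax claim. The $L$-bookkeeping subtlety you flag is genuine but is shared by the paper's own statement---summing the per-index bound $L^{1/2}d_{i_*}^{1/4}T^{2/3}$ over the $i_*-1\le L$ misspecified indices literally gives $L^{3/2}d_{i_*}^{1/4}T^{2/3}$ rather than $Ld_{i_*}^{1/4}T^{2/3}$---so your argument is no weaker than the paper's.
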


The equality in the (\ref{eq::gap-instance1}) uses $d_i \ll T$ for all $i \in [L]$ and the constant gap assumption.  If we knew \textit{a priori} that the gaps are constant, the instance-dependent bound in (\ref{eq::gap-instance1}) can be improved by a more aggressive choice of $\kappa = 1/2$, as in Theorem~\ref{thm::id}. We can then achieve the desired regret rate of $\widetilde O(\sqrt{d_{i_*}T})$ regret \textit{if and only if} the gaps are constant. Again there is only sub-optimal $d_{i_*}$-dependence on the term independent of $T$.

\begin{corollary}\label{cor::optimal-id}
For the linear contextual bandit problem under Assumption~\ref{as:fastrate} with constant gaps $\{\Delta_{j, i_*} \}_{j < i_*}$, let $\VV_{i_*} := \widetilde O (d_{i_*}^{1/4})$ and $\RR_{i_*}^{\Pi_{i_*}} := \widetilde O ( d_{i_*}^{1/2})$.  Then, with probability at least $1 - \delta'$, Algorithm~\ref{alg::ms2} with $\kappa = 1/2$ satisfies the regret bound
\eq{
\widetilde{{O}} \left(L\sqrt{T} + \sqrt{d_{i_*}L T} + L^{2} d_{i_*} \sum_{i < i_*} \Delta_{i,i_*}^{-3}\right) =
    \widetilde{{O}}\left(L\sqrt{T} + \sqrt{d_{i_*}L T}\right).
}
\end{corollary}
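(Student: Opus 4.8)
The plan is to obtain Corollary~\ref{cor::optimal-id} as a direct specialization of the general instance-dependent bound of Proposition~\ref{prop:fastrate}, so that essentially all the work is bookkeeping with exponents and importing the contextual-bandit rates. First I would fix the exploration parameter $\kappa = 1/2$, which determines the four exponents appearing in the proposition: $1 - \kappa = 1/2$, $\frac{1}{1-\kappa} = 2$, $\frac{2}{1-\kappa} = 4$, and $\frac{1+\kappa}{1-\kappa} = 3$. For the linear contextual bandit model the horizon is $H = O(1)$, and from the Linear Contextual Bandits example the square-loss gap estimator of~\cite{foster2019model} satisfies Assumption~\ref{as:fastrate} with leading factor $\VV_{i_*} = \widetilde O(d_{i_*}^{1/4})$, while the base algorithm (an instance of Exp4-IX) has best-in-class regret coefficient $\RR_{i_*}^{\Pi_{i_*}} = \widetilde O(d_{i_*}^{1/2})$.

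Next I would substitute these values term by term into the bound of Proposition~\ref{prop:fastrate}. The forced-exploration term $HLT^{1-\kappa}$ becomes $\widetilde O(L\sqrt T)$ and the committed-phase term $\RR_{i_*}^{\Pi_{i_*}}\sqrt{LT}$ becomes $\widetilde O(\sqrt{d_{i_*}LT})$. For the gap-dependent sum, using $\VV_{i_*}^{4} = \widetilde O(d_{i_*})$ and retaining the first branch of the minimum gives
\eq{
\sum_{i < i_*} \min\left\{ L^{2}\VV_{i_*}^{4}\,\Delta_{i,i_*}^{-3},\ \Delta_{i,i_*}T \right\} \le L^{2} d_{i_*}\sum_{i<i_*}\Delta_{i,i_*}^{-3},
}
which reproduces the first displayed inequality of the corollary.

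Finally, to collapse this to $\widetilde O(L\sqrt T + \sqrt{d_{i_*}LT})$, I would invoke the constant-gap hypothesis: each $\Delta_{i,i_*}$ is bounded below by a positive constant independent of $T$, so the third term is $O(L^{2}d_{i_*}\,i_*)$, which does not grow with $T$. Combined with the standing regime $d_{i_*}\ll T$ and the fact that $L$ and $i_*$ are independent of $T$, this $T$-free term is dominated by $\sqrt{d_{i_*}LT}$ once $T$ is large, giving the claimed simplification.

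The only genuinely non-routine ingredient lies not in the corollary but in the imported rate $\VV_{i_*} = \widetilde O(d_{i_*}^{1/4})$: it relies on the fact that for linear models the value gap $\Delta_{i,j}$ can be estimated at a rate faster than the regret rate through variance/gap estimators, so that Assumption~\ref{as:fastrate} holds with $\VV \ll \RR$. Granting that estimator guarantee and Proposition~\ref{prop:fastrate}, the corollary is the above substitution-and-domination argument; the two things I would be careful to verify are that the minimum is evaluated on the correct ($T$-free) branch under constant gaps and that the exponent $\frac{1+\kappa}{1-\kappa}$ indeed equals $3$ at $\kappa = 1/2$.
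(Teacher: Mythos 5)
Your proposal is correct and follows essentially the same route the paper intends: Corollary~\ref{cor::optimal-id} is a direct specialization of Proposition~\ref{prop:fastrate} at $\kappa = 1/2$, substituting $H = O(1)$, $\VV_{i_*} = \widetilde O(d_{i_*}^{1/4})$, $\RR_{i_*}^{\Pi_{i_*}} = \widetilde O(d_{i_*}^{1/2})$, taking the $T$-free branch of the minimum (so $\VV_{i_*}^{2/(1-\kappa)} = \VV_{i_*}^4 = \widetilde O(d_{i_*})$ and the gap exponent is $\frac{1+\kappa}{1-\kappa} = 3$), and then using constant gaps together with $d_{i_*} \ll T$ to absorb the $T$-independent term. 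Your exponent arithmetic and the domination argument match what the paper does implicitly (the paper gives no separate proof, and justifies the analogous collapse for the $\kappa = 1/3$ corollary by exactly the same constant-gap and $d_i \ll T$ reasoning).
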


In summary, Proposition~\ref{prop:fastrate} not only recovers the minimax rate, but shows an improved instance-dependent guarantee for more favorable cases when the gap between optimal policy values is larger.

Let us now prove the proposition.
\begin{proof}

Let $\widehat\Delta_{i, j}^t := \widehat\Delta_{i, j}^{(\abs{\Tau_t^i})}$. First, we show that under the intersection of the event of Equation~\eqref{eq:fastrate} and event $E'$ of Theorem~\ref{thm::id}, we will never reach $\hat \imath_t > i_*$.
For every $i > i_*$, and all $t \geq 1$, Equation~\eqref{eq:fastrate} gives us
\begin{align*}
    \widehat{\Delta}_{i_*,i}^t \leq \frac{\VV_i}{\sqrt{|\Tau_t^i|}}
\end{align*}
Thus, model order $i_*$ is never rejected under this event, and higher order models have no contribution to the overall regret.

Next, we bound the regret arriving from the misspecified models $i < i_*$.
We do this by bounding the number of rounds during which model order $i < i_*$ is used, given by $|\Tau_T^i|$.
From Equation~\eqref{eq:fastrate}, we get
\begin{align*}
    \Delta_{i, i_*} &\leq \widehat{\Delta}_{i, i_*}^{t} + \frac{\Delta_{i, i_*}}{C} + \frac{\VV_{i_*}}{\sqrt{|\Tau_t^{i_*}|}}\\
    \implies \Delta_{i, i_*} &\leq \frac{C}{C - 1} \left( \widehat{\Delta}_{i_*,i}^t + \frac{\VV_{i_*}}{\sqrt{|\Tau_t^{i_*}|}}\right) \\
    &\leq \frac{C\VV_{i_*}}{(C-1)\sqrt{|\Tau_t^{i_*}|}}
\end{align*}

where the last inequality follows because the condition in the test has not yet been violated.
More-over, since model $i_*$ has not been selected yet, we have $|\Tau_t^{i_*}| \geq \frac{t^{1 - \kappa}}{8L} \geq \frac{|\Tau_t^i|^{1 - \kappa}}{8L}$.
This gives us
\begin{align*}
    \Delta_{i, i_*} &\leq \frac{8(CL)^{1/2} \VV_{i_*}}{\sqrt{C - 1}|\Tau_t^i|^{\frac{1 - \kappa}{2}}} \\
    \implies |\Tau_t^i| &= \mathcal{O}\left(\frac{L^{\frac{1}{1 - \kappa}} (\VV_{i_*})^{\frac{2}{1 - \kappa}}}{\Delta_{i, i_*}^{\frac{2}{1 - \kappa}}}\right)
\end{align*}

Thus, the total contribution to the regret from the misspecified model $i$ is given by
\begin{align*}
    &T^{1 - \kappa} + |\Tau_t^i| \Delta_{i, i_*} + \RR_i^{\Pi_{i}} \sqrt{|\Tau_t^i|} \\
    &\leq T^{1 - \kappa} + |\Tau_t^i| \Delta_{i, i_*} + \RR_{i_*}^{\Pi_{i_*}} \sqrt{|\Tau_t^i|} .
\end{align*}

The first term comes from the forced exploration, and the last term is equivalent to the regret we would pay anyway if we knew $i_* = 2$ beforehand.
Focusing on the second term, the contribution to regret is upper bounded by
\begin{align*}
    \min\left\{\Delta_{i,i_*} T, \left(\frac{C_\ZZ L^{1/2} \VV_{i_*}}{\Delta_{i,i_*}}\right)^{\frac{2}{1 - \kappa}} \cdot \Delta_{i,i_*}\right\}
\end{align*}
\end{proof}

\subsection{Implications for a fast rate of estimating $V^*$}

An alternative setting is one where we have access to an estimator of $V^*$ instead of an estimator of the gap. Corollary 1 of \cite{kong2020sublinear} shows that an $\epsilon$-close approximation of $V^*$ is possible in $\widetilde O \left(\sqrt{d}/\epsilon^2\right)$ interactions in the disjoint linear bandit setting (where there is a different parameter vector for each arm) under Gaussian assumptions.  
Whether or not such fast estimators exist or are practical for other general settings is still open, but future work on this problem could be applied to the instance dependent results here.

We will retain the same problem assumptions as the previous subsection.
We also assume there is $\widehat V_i$ for each $i \in [L]$. Each estimator offers a high-probability guarantee on the estimation error as a function of the number of exploration episodes using corresponding exploration algorithms $\{\widetilde \AA_i\}$.
\begin{assumption}\label{asmpt::vstar-estimator}
For all $i \in [L]$, we define the  $\widehat V_i^{(n)}$ where $n \in [T]$ as the estimator of $V_i^*$ given $n$ exploration rounds with $\widetilde \AA_i$. We assume with probability at least $1 - \delta$, for all $i \geq i_*$, the estimator $\widehat V_i^{(n)}$ satisfies
\begin{align}\label{eq::vstar-estimator-bound}
\abs{V^* - \widehat V_i^{(n)}} & \leq \frac{\VV_i}{n^{\alpha}} + \frac{\VV_i'}{n^{\beta}}
\end{align}
where $\VV_i$ and $\VV_i'$ are poly and increasing in $d$, $H$, $\abs{\UU}$, and $\log(LT/\delta))$ and $\alpha, \beta \in (0, 1)$.
\end{assumption}

Let $\hat V_i^t := \widehat V_i^{(\abs{\Tau_{t}^i})}$. The algorithm will be of the same form as Algorithm~\ref{alg::ms2}, but instead we leverage the following alternative test:
\begin{equation}
\label{eqn:alt_test_V}
\sum_{t \in \Tau^{\hat\imath_t}_t} \widehat V_{j}^t - g_{t'} \leq \ZZ_{\hat \imath} (\abs{\Tau^{\hat\imath_t}_t}, \VV_j, \VV'_j)
\end{equation}
where \eq{
\ZZ_i (t, \VV, \VV') := C_\ZZ \left( \VV_j L^\alpha t ^{1 - (1 - \kappa)\alpha} + \VV_j' L^\beta t ^{1 - (1 - \kappa)\beta} + H \sqrt{t \log(1/\delta) } + \RR_i^{\Pi_i} \sqrt{ t} \right)
}
for a sufficiently large constant $C_\ZZ > 0$. That is, if the above inequality holds, then \ms{} continues to use $\hat\imath_t$; otherwise, \ms{} switches to $\hat\imath_t + 1$ for round $t + 1$. First, we prove an analogous result to Lemma~\ref{thm::test}, showing that the test will not fail under the good event $E''$. Here, we let $E'' = E' \cap E_4$ where $E'$ is the event from Theorem~\ref{thm::id} and event $E_4$ is the following. 

Event $E_4$: Let $\{\widehat V_i\}$ be the estimators from Assumption~\ref{asmpt::vstar-estimator}. For all $i \geq i_*$ and $n \in [T]$, equation (\ref{eq::vstar-estimator-bound}) is satisfied. 

Note that $E_4$ holds with probability at least $1 - \delta$ by assumption. Therefore $E''$ still holds with probability at least $1 - 10LT^2\delta\log_2(T)$.

\begin{lemma}
Given that event $E'$ holds, then for all $t \geq \tau_{\min}$ and $j \in [i_* + 1, L]$, it holds that $\sum_{t' \in \Tau^{i_*}_t} \hat V^{j}_t - g_{t'} \leq \ZZ_{i_*}(\abs{\Tau^{i_*}_t}, \VV_j, \VV'_j)$
\end{lemma}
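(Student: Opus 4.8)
The plan is to mirror the proof of Lemma~\ref{thm::test}, replacing the empirical average return of $\AA_j$ with the plugged-in estimate $\hat V^j_t = \widehat V_j^{(\abs{\Tau_t^j})}$ of the optimal value. We are in the nested regime where $V_j^* = V^*$ for every $j \geq i_*$, so for $j \in [i_*+1, L]$ the estimator $\hat V^j_t$ targets $V^*$ and its error is governed by event $E_4$; accordingly I would condition throughout on $E'' = E' \cap E_4$. First I would add and subtract $V^*$ and the realized values $V^{\pi_{t'}}$ inside the sum, writing for each $t' \in \Tau_t^{i_*}$
\eq{
\hat V^j_t - g_{t'} = \left( \hat V^j_t - V^* \right) + \left( V^* - V^{\pi_{t'}} \right) - \epsilon_{t'},
}
with $\epsilon_{t'} = g_{t'} - V^{\pi_{t'}}$. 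Summing over $t' \in \Tau_t^{i_*}$ splits the left-hand side into an estimation-error term (the first summand is constant in $t'$ and contributes $\abs{\Tau_t^{i_*}}(\hat V^j_t - V^*)$), the realized regret of $\AA_{i_*}$, and the return-noise term $-\sum_{t' \in \Tau_t^{i_*}} \epsilon_{t'}$.

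Next I would bound each piece separately. For the estimation error, $E_4$ and $j \geq i_*$ give $\abs{\hat V^j_t - V^*} \leq \VV_j/\abs{\Tau_t^j}^\alpha + \VV_j'/\abs{\Tau_t^j}^\beta$. Because $j > i_*$, model $j$ is still in $B_t$ and has only been explored, so $t \leq \tau_j$ and event $E_1$ yields $\abs{\Tau_t^j} \geq t^{1-\kappa}/8L \geq \abs{\Tau_t^{i_*}}^{1-\kappa}/8L$ (using $\abs{\Tau_t^{i_*}} \leq t$ and $t \geq \tau_{\min}$). Substituting converts $\abs{\Tau_t^{i_*}}/\abs{\Tau_t^j}^\alpha$ into $(8L)^\alpha \abs{\Tau_t^{i_*}}^{1-(1-\kappa)\alpha}$, and likewise for the $\beta$ exponent, matching the first two terms of $\ZZ_{i_*}$ once the factors $8^\alpha, 8^\beta$ are absorbed into $C_\ZZ$. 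For the regret term, event $E_2'$ together with $V_{i_*}^* = V^*$ gives $\sum_{t' \in \Tau_t^{i_*}}(V^* - V^{\pi_{t'}}) \leq \RR_{i_*}^{\Pi_{i_*}}\sqrt{\abs{\Tau_t^{i_*}}}$, matching the $\RR_{i_*}^{\Pi_{i_*}}\sqrt{t}$ term. For the noise term, event $E_3$ gives $\abs{\sum_{t' \in \Tau_t^{i_*}} \epsilon_{t'}} \leq H\sqrt{2\abs{\Tau_t^{i_*}}\log(2/\delta)}$, matching the $H\sqrt{t\log(1/\delta)}$ term. Summing the four bounds and taking $C_\ZZ$ large enough proves the claim.

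The only delicate step --- which I would write most carefully --- is the exponent bookkeeping that converts the estimator's dependence on $\abs{\Tau_t^j}$ into the stated dependence on $\abs{\Tau_t^{i_*}}$ via $E_1$; it is exactly this conversion that produces the powers $1-(1-\kappa)\alpha$ and $1-(1-\kappa)\beta$ appearing in $\ZZ_i$, and it is why the burn-in hypothesis $t \geq \tau_{\min}$ (needed for the lower bound in $E_1$) is invoked. Everything else is a direct transcription of the argument in Lemma~\ref{thm::test}, so no concentration inequality beyond those already used for $E'$ and the estimator guarantee $E_4$ is required.
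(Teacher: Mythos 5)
Your proposal is correct and follows essentially the same argument as the paper: the same decomposition of $\hat V^j_t - g_{t'}$ into estimator error, realized regret of $\AA_{i_*}$, and return noise, bounded respectively via the estimator guarantee (event $E_4$), event $E_2'$, and event $E_3$, with event $E_1$ converting $\abs{\Tau_t^j}^{-\alpha}$ and $\abs{\Tau_t^j}^{-\beta}$ into the stated powers of $\abs{\Tau_t^{i_*}}$. Your explicit conditioning on $E'' = E' \cap E_4$ is in fact a small improvement in precision over the lemma's statement, which writes $E'$ but implicitly also requires the estimator event, exactly as you noted.
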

\begin{proof}
Since $j > i_*$, we use the assumption on the estimator $\widehat V_j$ to write the difference in terms of regret, estimation error and noise:
\eq{
\sum_{t' \in \Tau^{i_*}_t} \widehat V_{j}^t - g_{t'} & \leq \sum_{t' \in \Tau^{i_*}_t} \widehat V_{j}^t - V^{\pi_{t'}} - \epsilon_{t'} \\
& \leq \frac{\VV_j \abs{\Tau^{i_*}_t} }{\abs{\Tau^{j}_t}^\alpha}  + \frac{\VV_j' \abs{\Tau^{i_*}_t} }{\abs{\Tau^{j}_t}^\beta} +  \sum_{t' \in \Tau^{i_*}_t}  V^* - V^{\pi_{t'}} - \epsilon_{t'}
}
Then note that $\sum_{t' \in \abs{\Tau^{i_*}_t}} \epsilon_{t'} \leq  H \sqrt{2 \abs{\Tau^{i_*}_t } \log(2/\delta) }$ and $\sum_{t' \in \abs{\Tau^{i_*}_t}} V^* - V^{\pi_{t'}} \leq \RR_{i_*}^{\Pi_{i_*}} \sqrt{ \abs{\Tau^{i_*}_t} }$ under event $E'$. Furthermore, under $E'$, we have $\abs{\Tau^{j}_t} \geq \frac{t^{1 - \kappa}}{8L} \geq \frac{\abs{\Tau^{i_*}_t}^{1 - \kappa}}{8L}$, which implies
\eq{
\sum_{t' \in \Tau^{i_*}_t} \widehat V_{j}^t - g_{t'} & \leq C_\ZZ \left( \VV_j L^\alpha\abs{\Tau^{i_*}_t}^{1 - (1 - \kappa)\alpha} + \VV_j' L^\beta\abs{\Tau^{i_*}_t}^{1 - (1 - \kappa)\beta}  + H \sqrt{ \abs{\Tau^{i_*}_t } \log(2/\delta) } + \RR_{i_*}^{\Pi_{i_*}} \sqrt{ \abs{\Tau^{i_*}_t}} \right)
}
for $C_\ZZ$ large enough. Therefore, it holds that $\sum_{t' \in \Tau^{i_*}_t} \widehat V_{j}^t - g_{t'} \leq \ZZ_{i_*}(\abs{\Tau^{i_*}_t}, \VV_j, \VV'_j)$.
\end{proof}
The main proposition states that a better instance-dependent rate is available under less restrictive assumptions on ``realizability" by utilizing the test based on the $V^*$ estimators.

\begin{proposition}
For a given $\MM$, let Assumption~\ref{asmpt::vstar-estimator} hold some for $\alpha, \beta$ and $i \geq i_*$ and let $\kappa \in (0, 1/2]$. Then, with probability at least  $1 - \delta'$, \ms{} in Algorithm~\ref{alg::ms} with the modified test (Equation~\ref{eqn:alt_test_V}) satisfies the regret bound
\eq{
& \widetilde O \left( HLT^{1 - \kappa} + \RR_{i_*}^{\Pi_{i_*}} \sqrt{LT} + \sum_{j < i_*} \Delta_{j, i_*}\max \left\{ \frac{L^{\frac{1}{1 - \kappa}} \VV_{i_*}^{\frac{1}{(1 - \kappa)\alpha}}}{\Delta_{j, i_*}^{\frac{1}{(1 - \kappa)\alpha}}},\ 
\frac{L^{\frac{1}{1 - \kappa}}{\VV'_{i_*}}^{\frac{1}{(1 - \kappa)\beta}}}{\Delta_{j, i_*}^{\frac{1}{(1 - \kappa)\beta}}}, \ 
\frac{(\RR_{i_*}^{\Pi_{i_*}} + H\log^{1/2}(LT/\delta'))^2}{\Delta_{j, i_*}^2} \right\}  \right)
}
\end{proposition}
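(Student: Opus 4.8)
The plan is to follow the three-phase template of the proofs of Theorem~\ref{thm::main} and Theorem~\ref{thm::id}, working throughout on the good event $E'' = E' \cap E_4$, which holds with probability at least $1-\delta'$. The preceding lemma already guarantees that once $\hat\imath_t = i_*$ the modified test never fires, so $i_*$ is never rejected and no model $j>i_*$ is ever adopted as the candidate; such models therefore contribute only through forced exploration. As in Case~1 and Case~3 of Theorem~\ref{thm::main}, the burn-in phase $t<\tau_{\min}(\delta)$ costs $O(H\tau_{\min}(\delta))$, which is independent of $T$, and the post-selection phase $t>\tau_*$ costs $O(HLT^{1-\kappa} + \RR_{i_*}^{\Pi_{i_*}}\sqrt{T})$, where the first term bounds the forced exploration of the remaining indices in $B_t$ via event $E_1$ and the second is the best-in-class regret of $\AA_{i_*}$ against $V^* = V_{i_*}^*$ under $E'$.

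The heart of the argument is the misspecified phase $\tau_{\min}(\delta)<t\le\tau_*$, where I would bound the number of rounds $|\Tau^j_{\tau_{j+1}-1}|$ that each $\AA_j$ with $j<i_*$ is used as the candidate. Fix such a $j$ and a time $t\le\tau_{j+1}-1$ at which the test succeeds against the comparison index $i_*\in B_t$, i.e. $\sum_{t'\in\Tau^j_t}(\widehat V_{i_*}^t - g_{t'}) \le \ZZ_j(|\Tau^j_t|,\VV_{i_*},\VV'_{i_*})$. I would substitute $g_{t'}=V^{\pi_{t'}}+\epsilon_{t'}$, use $V^{\pi_{t'}}\le V_j^* = V^*-\Delta_{j,i_*}$ since $\pi_{t'}\in\Pi_j$, and lower-bound $\widehat V_{i_*}^t \ge V^* - \VV_{i_*}/|\Tau^{i_*}_t|^{\alpha} - \VV'_{i_*}/|\Tau^{i_*}_t|^{\beta}$ from event $E_4$. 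Together with the concentration bound $|\sum_{t'\in\Tau^j_t}\epsilon_{t'}|\le H\sqrt{2|\Tau^j_t|\log(2/\delta)}$ from $E_3$, this rearranges to
\eq{
\Delta_{j,i_*}|\Tau^j_t| \le \ZZ_j(|\Tau^j_t|,\VV_{i_*},\VV'_{i_*}) + |\Tau^j_t|\left(\frac{\VV_{i_*}}{|\Tau^{i_*}_t|^{\alpha}} + \frac{\VV'_{i_*}}{|\Tau^{i_*}_t|^{\beta}}\right) + H\sqrt{2|\Tau^j_t|\log(2/\delta)}.
}
Finally I would invoke the exploration lower bound $|\Tau^{i_*}_t|\ge t^{1-\kappa}/(8L)\ge |\Tau^j_t|^{1-\kappa}/(8L)$ from $E_1$ (valid because $i_*$ has not yet been selected), which turns the middle term into $O(\VV_{i_*}L^{\alpha}|\Tau^j_t|^{1-(1-\kappa)\alpha} + \VV'_{i_*}L^{\beta}|\Tau^j_t|^{1-(1-\kappa)\beta})$ and, using $\RR_j^{\Pi_j}\le\RR_{i_*}^{\Pi_{i_*}}$, collapses the entire right-hand side into a sum of three sub-linear powers of $|\Tau^j_t|$ with coefficients $\VV_{i_*}L^{\alpha}$, $\VV'_{i_*}L^{\beta}$, and $\RR_{i_*}^{\Pi_{i_*}}+H\log^{1/2}(1/\delta)$.

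From the resulting inequality $\Delta_{j,i_*}\,n \le O(A n^{1-(1-\kappa)\alpha} + B n^{1-(1-\kappa)\beta} + C n^{1/2})$ with $n=|\Tau^j_{\tau_{j+1}-1}|$, I would conclude $n = O(\max\{\cdots\})$ by the standard observation that at least one of the three terms must be $\Omega(\Delta_{j,i_*}n)$; solving each of the three corresponding inequalities for $n$ produces exactly the three arguments of the maximum in the statement (using $\log(1/\delta)=O(\log(LT/\delta'))$). The per-model regret against $V^*$ is then $\sum_{t'\in\Tau^j_t}(V^*-V^{\pi_{t'}}) = \Delta_{j,i_*}|\Tau^j_t| + \sum_{t'}(V_j^*-V^{\pi_{t'}}) \le \Delta_{j,i_*}|\Tau^j_t| + \RR_j^{\Pi_j}\sqrt{|\Tau^j_t|}$ by the best-in-class guarantee in $E'$. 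Summing the first piece over $j<i_*$ yields the instance-dependent term $\sum_{j<i_*}\Delta_{j,i_*}\max\{\cdots\}$, while summing the second piece and applying Cauchy--Schwarz with $\sum_{j<i_*}|\Tau^j_t|\le T$ and $i_*\le L$ gives $\sum_{j<i_*}\RR_j^{\Pi_j}\sqrt{|\Tau^j_t|} \le \RR_{i_*}^{\Pi_{i_*}}\sqrt{i_*T}\le\RR_{i_*}^{\Pi_{i_*}}\sqrt{LT}$, which is the $\sqrt{LT}$ term (and dominates the $\RR_{i_*}^{\Pi_{i_*}}\sqrt{T}$ from the post-selection phase). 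Adding the three phases and absorbing lower-order terms into $\widetilde O$ delivers the claimed bound.

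The step I expect to be the main obstacle is the rearrangement-and-inversion in the misspecified phase: one must route the estimator error carefully through $|\Tau^{i_*}_t|\ge|\Tau^j_t|^{1-\kappa}/(8L)$ so that the exponents $1-(1-\kappa)\alpha$ and $1-(1-\kappa)\beta$ and the $L$-powers come out matching the threshold $\ZZ_j$ term-by-term, and then solve a three-term polynomial inequality for $|\Tau^j_t|$ so that the \emph{maximum} (rather than a sum) of the three bounds appears. Everything else is a direct adaptation of the already-established phase decomposition and the concentration events $E_1$, $E_3$, and $E'$.
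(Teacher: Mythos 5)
Your proposal is correct and follows essentially the same route as the paper's proof: the same three-phase decomposition under $E' \cap E_4$, the same use of the test-success condition plus the estimator bound and $|\Tau_t^{i_*}| \geq |\Tau_t^j|^{1-\kappa}/(8L)$ to derive $\Delta_{j,i_*}|\Tau_t^j| \lesssim \VV_{i_*}L^\alpha|\Tau_t^j|^{1-(1-\kappa)\alpha} + \VV'_{i_*}L^\beta|\Tau_t^j|^{1-(1-\kappa)\beta} + (\RR_{i_*}^{\Pi_{i_*}} + H\log^{1/2}(1/\delta))|\Tau_t^j|^{1/2}$, and the same case analysis to invert this into the three-term maximum bounding $|\Tau_t^j|$. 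Your additional explicit steps (the per-model decomposition $\Delta_{j,i_*}|\Tau^j| + \RR_j^{\Pi_j}\sqrt{|\Tau^j|}$ and the Cauchy--Schwarz step yielding $\RR_{i_*}^{\Pi_{i_*}}\sqrt{LT}$) are exactly what the paper leaves implicit by reference to Theorem~\ref{thm::id}.
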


\begin{proof}
As discussed previously, the sufficient events occur with probability at least $1 - \delta'$. Similar to Theorem~\ref{thm::id}, we now show that the gaps $\Delta_{j, i_*}$ can be bounded  by using the estimation error of $\hat V^{i_*}$ and the concentration bounds from $E'$. Let $t$ be such that $\hat\imath_t = j$ and the test succeeds. Then,
\eq{
\Delta_{j, i_*} & = V^* - V_j^* \\
            & \leq \widehat V_{i_*}^t + \frac{\VV_{i_*}}{\abs{\Tau^{i_*}_t}^\alpha}  + \frac{\VV_{i_*}'}{\abs{\Tau^{i_*}_t}^\beta}  - \frac{1}{\abs{\Tau^j_t}} \sum_{t'  \in \Tau^j_t} V^{\pi_{t'}} \\
             &\leq \widehat V_{i_*}^t + \frac{\VV_{i_*}}{\abs{\Tau^{i_*}_t}^\alpha}  + \frac{\VV_{i_*}'}{\abs{\Tau^{i_*}_t}^\beta} - \frac{1}{\abs{\Tau^j_t}} \sum_{t'  \in \Tau^j_t} g_{t'} + \frac{1}{\abs{\Tau^j_t}}\sum_{t'  \in \Tau^j_t} \epsilon_{t'} \\
             & \leq  C_\ZZ \left( \VV_{i_*} L^\alpha\abs{\Tau^j_t}^{- (1 - \kappa)\alpha} + \VV_{i_*}' L^\beta\abs{\Tau^j_t}^{- (1 - \kappa)\beta}  + H \sqrt{\frac{  \log(1/\delta) } {\abs{\Tau^j_t}} } + \frac{\RR_{i_*}^{\Pi_{i_*}}}{\sqrt{ \abs{\Tau^j_t}} }\right)  + \frac{\VV_{i_*}}{\abs{\Tau^{i_*}_t}^\alpha} + \frac{\VV_{i_*}'}{\abs{\Tau^{i_*}_t}^\beta}  +  H \sqrt{\frac{  \log(1/\delta) } {\abs{\Tau^j_t}} }
}  
Again noting that $\abs{\Tau^{i_*}_t} \geq \frac{t^{1 - \kappa}}{8 L } \geq \frac{\abs{\Tau^{j}_t}^{1 - \kappa}}{8 L }$, the above can be simplified to 
\eq{
\Delta_{j, i_*} & \leq C_\ZZ' \cdot \left( 2\VV_{i_*} L^\alpha\abs{\Tau^j_t}^{- (1 - \kappa)\alpha}  + 
2\VV_{i_*}' L^\beta\abs{\Tau^j_t}^{- (1 - \kappa)\beta}  +
\frac{ 2H \log^{1/2}(1/\delta) + \RR_{i_*}^{\Pi_{i_*}} }{ \abs{\Tau^j_t}^{1/2} } \right) \\
& \leq 6 C_\ZZ' \cdot \max \left\{ \frac{\VV_{i_*} L^\alpha}{ \abs{\Tau^j_t}^{(1 - \kappa)\alpha}}, \  \frac{\VV_{i_*}' L^\beta}{ \abs{\Tau^j_t}^{(1 - \kappa)\beta}}, \ 
\frac{ H \log^{1/2}(1/\delta) + \RR_{i_*}^{\Pi_{i_*}} }{ \abs{\Tau^j_t}^{1/2} } \right\}
}
where $C'_{\ZZ} = \max\{1, C_\ZZ\}$. Then, we can consider the three potential cases to upper bound $\abs{\Tau_t^j}$. Depending on the maximal term, one of the three possible cases occurs:
\eq{
\abs{\Tau_t^j} \leq \left( \frac{6C'_\ZZ \VV_{i_*} L^\alpha}{\Delta_{j, i_*}} \right)^{\frac{1}{(1 - \kappa) \alpha}}, & & 
\abs{\Tau_t^j} \leq \left( \frac{6C'_\ZZ \VV_{i_*}' L^\beta}{\Delta_{j, i_*}} \right)^{\frac{1}{(1 - \kappa) \beta}}, & & 
\abs{\Tau^j_t} \leq  \left( \frac{6C'_\ZZ (H \log^{1/2}(1/\delta) + \RR_{i_*}^{\Pi_{i_*}})}{\Delta_{j, i_*}} \right)^{2}
}

The regret during the misspecified phase becomes
\eq{
& \regret_{\tau_{\min}(\delta):\tau_*} \\
&= O \left(  HL T^{1 - \kappa} + Hi_* 
+ 
\RR_{i_*}^{\Pi_{i_*}}\sqrt{LT}
+
\sum_{j < i_*} \Delta_{j, i_*}\max \left\{ \frac{L^{\frac{1}{1 - \kappa}} \VV_{i_*}^{\frac{1}{(1 - \kappa)\alpha}}}{\Delta_{j, i_*}^{\frac{1}{(1 - \kappa)\alpha}}},\ 
\frac{L^{\frac{1}{1 - \kappa}}{\VV'_{i_*}}^{\frac{1}{(1 - \kappa)\beta}}}{\Delta_{j, i_*}^{\frac{1}{(1 - \kappa)\beta}}}, \ 
\frac{(\RR_{i_*}^{\Pi_{i_*}} + H\log^{1/2}(LT/\delta'))^2}{\Delta_{j, i_*}^2} \right\}
\right)
}
The total regret is
\eq{
& O \left(    
H L^{\frac{2}{1 - \kappa}} \log^{\frac{1}{1 - \kappa}}(1/\delta)
+ HL T^{1 - \kappa} + H i_*  \right) \\
& \quad +  O \left(\RR_{i_*}^{\Pi_{i_*}} \sqrt{LT} + 
\sum_{j < i_*} \Delta_{j, i_*}\max \left\{ \frac{L^{\frac{1}{1 - \kappa}} \VV_{i_*}^{\frac{1}{(1 - \kappa)\alpha}}}{\Delta_{j, i_*}^{\frac{1}{(1 - \kappa)\alpha}}},\ 
\frac{L^{\frac{1}{1 - \kappa}}{\VV'_{i_*}}^{\frac{1}{(1 - \kappa)\beta}}}{\Delta_{j, i_*}^{\frac{1}{(1 - \kappa)\beta}}}, \ 
\frac{(\RR_{i_*}^{\Pi_{i_*}} + H\log^{1/2}(LT/\delta'))^2}{\Delta_{j, i_*}^2} \right\}
\right) 
}
\end{proof}

Consider again the implications of this bound in the contextual bandit setting. It is possible that to estimate an upper bound of $V^*$ with rate $\widetilde O \left( \frac{ d^{1/4}_{j}}{n^{1/2}} + \frac{1}{n^{1/4}} \right)$,  where $n$ is the number of samples and $j \geq i_*$ \citep{foster2019model,kong2018estimating}. However, this would only give a one-sided estimation error bound. If a two-sided guarantee of the same form were possible, we would have $\alpha = 1/2$, $\beta = 1/4$, and $\VV_{i_*} = \widetilde O \left( d^{1/4} \right), \VV'_{i_*} = \widetilde O \left( 1 \right)$. We now state the following immediate corollary in this setting with constant gaps under the hypothesis that such an estimator for this problem exists and is given.

\begin{corollary}
For the linear contextual bandit problem under Assumption~\ref{asmpt::vstar-estimator} with constant gaps $\{\Delta_{j, i_*} \}_{j < i_*}$, let $\alpha = 1/2$, $\beta = 1/4$,  $\VV_{i_*} = \widetilde O(d^{1/4}_{i_*})$ and $\VV_{i_*}' = \widetilde O(1)$. Let the exploration parameter $\kappa = 1/2$. Then with probability at least $1 - \delta'$, \ms{} in Algorithm~\ref{alg::ms} with the modified test (Equation~\ref{eqn:alt_test_V}) satisfies the regret bound
\eq{
\widetilde O \left( \sqrt{T} + \sqrt{d_{i_*} T} + \sum_{j < i_*} \max \left\{ d_{i_*} \Delta^{-3}_{j, i_*}, \  \Delta^{-7}_{j, i_*}, \ d_{i_*} \Delta^{-1}_{j, i_*} \right\} \right) = \widetilde O \left( \sqrt{T} +  \sqrt{d_{i_*} T} + d_{i_*} \right)
}
where $\widetilde O$ hides dependence on the number of models $L$, the number of actions $K = \abs{\UU}$, and log factors.
\end{corollary}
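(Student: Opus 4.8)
The plan is to obtain this corollary as a direct specialization of the preceding Proposition, which already supplies the general instance-dependent bound for the modified $V^*$-based test. Concretely, I would substitute the stated regime $\kappa = 1/2$, $\alpha = 1/2$, $\beta = 1/4$, $\VV_{i_*} = \widetilde O(d_{i_*}^{1/4})$, $\VV'_{i_*} = \widetilde O(1)$, and $\RR_{i_*}^{\Pi_{i_*}} = \widetilde O(d_{i_*}^{1/2})$ into the Proposition's bound and simplify. First I would record the arithmetic consequences of $\kappa = 1/2$: namely $1 - \kappa = 1/2$ and $\frac{1}{1-\kappa} = 2$, and the fact that contextual bandits have horizon $H = 1$, so every $H$-factor contributes only constants or logarithms that the $\widetilde O$ is allowed to hide.

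Next I would dispatch the two prefactor terms. The forced-exploration term $HLT^{1-\kappa}$ becomes $LT^{1/2} = \widetilde O(\sqrt T)$ after sweeping $L$ into $\widetilde O$, and $\RR_{i_*}^{\Pi_{i_*}}\sqrt{LT} = \widetilde O(d_{i_*}^{1/2})\cdot\sqrt{LT} = \widetilde O(\sqrt{d_{i_*}T})$. The burn-in and switching contributions $HL^{2/(1-\kappa)}\log^{1/(1-\kappa)}(1/\delta) + Hi_*$ are independent of $T$ and are likewise absorbed.

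The substantive computation is the third, instance-dependent term, where I would evaluate each of the three arguments of the max after multiplying through by the outer factor $\Delta_{j,i_*}$. For the first argument $(1-\kappa)\alpha = 1/4$, so the exponent $\frac{1}{(1-\kappa)\alpha} = 4$ gives $\VV_{i_*}^4 = d_{i_*}$ and hence $\Delta_{j,i_*}\cdot L^2 d_{i_*}\Delta_{j,i_*}^{-4} = \widetilde O(d_{i_*}\Delta_{j,i_*}^{-3})$. For the second, $(1-\kappa)\beta = 1/8$ gives exponent $8$, so $(\VV'_{i_*})^8 = 1$ and $\Delta_{j,i_*}\cdot L^2\Delta_{j,i_*}^{-8} = \widetilde O(\Delta_{j,i_*}^{-7})$. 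For the third, $(\RR_{i_*}^{\Pi_{i_*}})^2 = \widetilde O(d_{i_*})$ gives $\Delta_{j,i_*}\cdot d_{i_*}\Delta_{j,i_*}^{-2} = \widetilde O(d_{i_*}\Delta_{j,i_*}^{-1})$. These three expressions match exactly the terms inside the max in the statement, which establishes the first (exact) equality.

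For the final simplification I would invoke the constant-gap hypothesis: if every $\Delta_{j,i_*} = \Theta(1)$, then each of $d_{i_*}\Delta_{j,i_*}^{-3}$, $\Delta_{j,i_*}^{-7}$, and $d_{i_*}\Delta_{j,i_*}^{-1}$ is $\widetilde O(d_{i_*})$, so the sum over the at most $i_* - 1 \le L$ misspecified indices is $\widetilde O(d_{i_*})$ once $L$ is absorbed. Combining with the two prefactor terms yields $\widetilde O(\sqrt T + \sqrt{d_{i_*}T} + d_{i_*})$. There is no genuine conceptual obstacle here beyond careful exponent bookkeeping; the only points requiring attention are that the exponents $4$ and $8$ arise correctly from the small value $1-\kappa = 1/2$, and that every factor of $L$, $K = \abs{\UU}$, $H$, and $\log$ is consistently swept into $\widetilde O$ as the statement permits.
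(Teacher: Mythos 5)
Your proposal is correct and is exactly the route the paper intends: the paper states this result as an ``immediate corollary'' of the preceding proposition with no written proof, and your substitution ($1-\kappa = 1/2$ giving exponents $\frac{1}{(1-\kappa)\alpha} = 4$ and $\frac{1}{(1-\kappa)\beta} = 8$, hence $\VV_{i_*}^4 = d_{i_*}$, $(\VV'_{i_*})^8 = \widetilde O(1)$, and $(\RR_{i_*}^{\Pi_{i_*}})^2 = \widetilde O(d_{i_*})$) reproduces the three terms of the max precisely, after which the constant-gap hypothesis and absorption of $L$, $K$, $H$, and logs yield the final simplification. The exponent bookkeeping and the handling of the $T$-independent burn-in terms are both accurate, so there is nothing to correct.
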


For constant gaps, the scalings in $d$ and $T$ are nearly same for this estimator and the gap estimator of the previous section. The main difference arises in the dependence on the gap, $O(\Delta_{\min}^{-5})$ in this case compared to $O \left(\Delta_{\min}^{-2}  \right)$ in the previous case. In this case, it is clearly suboptimal.

\end{document}